\documentclass{article}
\usepackage{graphicx}
\usepackage{adjustbox}
\usepackage{subcaption}
\usepackage{wrapfig}
\usepackage{afterpage}
\usepackage{xspace}
\usepackage{multirow}
\usepackage{amsmath}
\usepackage{mathtools}
\usepackage{amsthm}
\usepackage{thmtools}
\newtheorem{theorem}{Theorem}[section]
\newtheorem{lemma}[theorem]{Lemma}
\newcommand{\ourmethod}{BFF\xspace}

\usepackage{algorithm}
\usepackage{algorithmic}
\PassOptionsToPackage{numbers, compress}{natbib}
\usepackage[preprint]{neurips_2026}

\usepackage[utf8]{inputenc} % allow utf-8 input
\usepackage[T1]{fontenc}    % use 8-bit T1 fonts
\usepackage{hyperref}       % hyperlinks
\usepackage{url}            % simple URL typesetting
\usepackage{booktabs}       % professional-quality tables
\usepackage{amsfonts}       % blackboard math symbols
\usepackage{nicefrac}       % compact symbols for 1/2, etc.
\usepackage{microtype}      % microtypography
\usepackage{xcolor}         % colors
\usepackage{tikz}
\usetikzlibrary{positioning, arrows.meta, calc, fit}
\newcommand{\E}{\mathbb{E}}
\newcommand{\R}{\mathbb{R}}
\newcommand{\cK}{\mathcal{K}}
\newcommand{\cKphi}{\mathcal{K}_{\phi}}
\newcommand{\cH}{\mathcal{H}}
\newcommand{\cD}{\mathcal{D}}
\newcommand{\cO}{\mathcal{O}}
\newcommand{\sg}{\operatorname{sg}}
\newcommand{\given}{\,|\,}
\newcommand{\norm}[1]{\left\lVert #1 \right\rVert}
\usepackage{amssymb}

\newtheorem{proposition}{Proposition}
\theoremstyle{remark}
\newtheorem{remark}{Remark}[section]

\title{Bayesian Filtering in Physical Systems via Test-time Trained Flow Matching}

\author{%
  Ruiqi Feng\thanks{Equal contribution.} \\
  Westlake University \& Zhejiang University \\
  \texttt{fengruiqi@westlake.edu.cn} \\
  \And
  Chongyi Wang\footnotemark[1] \\
  Westlake University \\
  \texttt{wangchongyi@westlake.edu.cn} \\
  \And
  Tao Zhang \\
  Westlake University \& Zhejiang University \\
  \And
  Tailin Wu\thanks{Corresponding author.} \\
  Westlake University \\
  \texttt{wutailin@westlake.edu.cn}
}

\begin{document}

\maketitle

\begin{abstract}
Bayesian filtering provides a principled framework for online state estimation under uncertainty, yet its application to systems with high-dimensional states and complicated posterior distributions remains challenging. 
Recent generative models, such as flow matching, have shown potential in Bayesian filtering.
However, they still rely on particle-based representations of the posterior, which lose the rich information of the full distribution, or tackle a trajectory-level inverse problem that conflicts with the recursive structure of Bayesian filtering.
To address this, we propose a new perspective of \textit{directly encoding the evolving distribution into flow matching model weights}, namely, the Belief Flow Filter (BFF). 
It is a generative filtering framework that updates model weights via gradient descent at test time to track the posterior evolution. 
Thereby, BFF bypasses the scalability issue of particle representations or the flexibility limitation of Gaussian assumptions in conventional filters.
We theoretically justify that the BFF design is structurally aligned with Bayesian filtering, and its training objective targets the recursive filtering operator.
BFF is empirically verified across 5 different physical systems, including ones with chaotic dynamics and highly sparse, non-linear observations.
The results show that BFF attains the best score in 8 of 9 metric-benchmark cells across the three standard 1D and 2D PDE benchmarks, and similarly leads on the extreme single-moving-sensor setting and on a real-world-grounded tokamak plasma estimation task, demonstrating its potential to accurately approximate the Bayesian filtering operator in high-dimensional probability space.

% We empirically verify BFF across 1D and 2D benchmarking PDEs with non-linear and even chaotic dynamics under sparse observation (as few as a single moving sensor), and a real world-grounded tokamak plasma state estimation task with highly non-linear observations. 
% BFF consistently outperforms strong baselines under extreme conditions, including ultra-sparse single-point observations and a Tokamak plasma benchmark characterized by highly non-linear, asynchronous measurements. 
% These results validate BFF as an accurate and robust solver for the distributional Bayesian filtering operator.
\end{abstract}

\section{Introduction}
\label{sec:introduction}

In complex physical systems ranging from weather forecasting to magnetically confined nuclear fusion, a central problem is to sequentially infer the current posterior distribution of the system state (\textit{i.e.}, the belief state) from noisy observations, which is named sequential Bayesian filtering.
% It is particularly of important in safety-critical applications where the uncertainty or probability of certain states is of interest. For example, safe operation of tokamak plasma needs to avoid disruption, which requires accurately estimating the probability of certain high-dimensional plasma states.
Traditional Bayesian filters~\citep{kalman_new_1960, evensen_ensemble_2009, gordon_novel_1993} rely on Gaussian assumptions or discrete particle ensembles, which struggle in the presence of complicated dynamics and posteriors. 
Crucially, the Gaussian assumption fails to capture the multi-modality or long tail in the posterior, while particle-based methods tend to suffer from the curse of dimensionality and can require a prohibitively large number of particles to represent the full posterior in high-dimensional state spaces.

Recent flow-based generative models~\citep{lipman_flow_2023,ho_denoising_2020} provide a scalable free-form probabilistic modeling framework.
Yet, their application to Bayesian filtering problems is still inherently limited:
\citet{li2024learning} and \citet{rozet_score-based_2023} view the filtering process as a posterior sampling problem along the entire trajectory simultaneously. This breaks the recursive structure of sequential Bayesian filtering, reducing computational efficiency and distorting the posterior as dimensionality increases.
Alternatively, \citet{chen_flowdas_2025} and \citet{bao2024ensemble} comply with the recursive structure by explicitly tracking the posterior distribution. However, they still use particles to represent the posterior after each update. 
This approximates the distribution with a limited-capacity compression, leading to information loss as in classical particle-based filters.

Can we maintain the recursive nature of Bayesian filtering while alleviating the bottleneck of representing the posterior? Here, we introduce the Belief Flow Filter (BFF), whose key insight is \emph{using flow matching model weights to encode the posterior distribution}.
To lift the filtering procedure from the probability space into the weight space, we derive a learnable surrogate loss, whose gradient updates the flow matching model weight at test time (\textit{i.e.}, test-time training~\citep{sun_learning_2025,sun_test-time_2020}) to track the posterior distribution evolution at each physical time step. 
% By minimizing our proposed surrogate loss via gradient descent, this approach ensures the temporal continuity of the distribution. 
% This mechanism avoids the computational cost of retraining the network from scratch, providing a fast, recursive update suitable for real-time state estimation in complex physical systems.
Furthermore, we propose a training objective which ensures that the learnable surrogate loss is optimized to produce the true Bayesian filtering operator, enabling scalable learning of \ourmethod in high-dimensional probability space.

We tested BFF across 1D and 2D benchmarking PDEs with non-linear and even chaotic dynamics under sparse observation (as few as one single moving sensor), and a real-world-grounded tokamak plasma state estimation task with highly non-linear observations. 
Compared against Gaussian filters (EKF, EnKF) and state-of-the-art generative model-based filters (SDA, FlowDAS), BFF is the only method that holds up on reconstruction error, physical consistency of the predicted state, and calibration of the inferred posterior at once, which validates that the Bayesian filtering operator is faithfully learned. 
\begin{wrapfigure}{r}{0.5\linewidth} % on top of second page
    \centering
    % \vspace{-2em}
    \includegraphics[width=\linewidth]{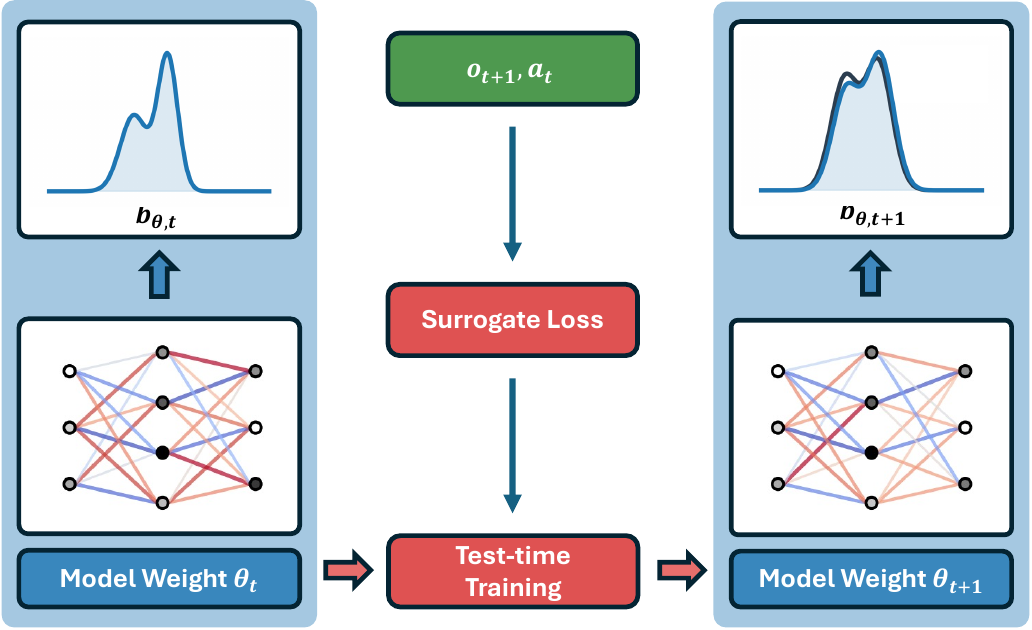}
    \caption{\textbf{An overview of our proposed \ourmethod.} 
    Blue frames denote the flow matching model parameterized by $\theta$ at different physical times.
    %  $t$ or $t+1$, where
    The model weights $\theta_t$ parameterize the posterior distribution $b_{\theta_t}$, and evolve ($\theta_t \rightarrow \theta_{t+1}$) as new information $o_{t+1},a_t$ comes in, creating the next posterior $b_{\theta_{t+1}}$. 
    The update is executed by a gradient step on a learnable surrogate loss $\ell_\phi$ that takes $\theta_{t}$, $o_{t+1}$, and $a_t$ as inputs.
    }
    \label{fig:fig1}
    \vspace{-1.5em}
\end{wrapfigure}
The gap is most pronounced under ultra-sparse observations and the highly non-linear tokamak diagnostics, suggesting that BFF's weight-space update is closer to the true Bayesian filtering operator than these baselines.
% \ruiqi{TODO: update accordingly after rewriting the experiment section}

Our main contributions are three-fold:

(1) A new perspective of using \emph{flow-based generative model weights as belief states} in sequential Bayesian filtering to overcome the posterior representation limitations.
% This framework overcomes the long-standing representational limits of posterior distributions using modern flow matching models.

(2) Theoretical foundations for BFF of a specifically designed surrogate loss that provably represents the exact recursive filtering operator, paired with an outer-loop training objective that bounds the distribution-level error.

% (3) We evaluate BFF on 1D/2D chaotic PDEs and a highly non-linear, partially observed Tokamak plasma benchmark.

% We propose a specific surrogate loss to guide the gradient update of the posterior distribution, proving that it can represent the exact recursive filtering operator.
% Furthermore, the outer-loop training bounds the distribution-level error. We also open-source a computationally feasible implementation.

(3) Empirical evidence that BFF accurately tracks the posterior on an array of challenging physical systems, including chaotic dynamics, sparse observations, and non-linear observation models. The code is open-sourced \href{https://anonymous.4open.science/r/BeliefFlow-81D4}{here}.
% We demonstrate that BFF accurately tracks high-dimensional states and generalizes to long-horizon inference, consistently preserving physical consistency and uncertainty calibration.

\section{Background}
\subsection{Problem Setup}
\label{sec:problem-setting}
Consider a discrete-time dynamical system. At time step $t$, the system is in a true physical state $s_t \in \mathcal{X} \subseteq \mathbb{R}^n$, where $\mathcal{X}$ denotes the state space. The state $s_t$ could be the fluid velocity and pressure field or the temperature and density profile of the tokamak plasma at time $t$, both of which are high-dimensional. Subject to a control action $a_t \in \mathcal{A} \subseteq \mathbb{R}^m$, the state evolves to $s_{t+1}$ governed by the environmental transition dynamics $p(s_{t+1} \mid s_t, a_t)$. However, in a partially observable setting, the exact physical state remains hidden. Instead, we only have access to noisy and potentially sparse observations $o_t \in \mathcal{O} \subseteq \mathbb{R}^k$ (typically $k\ll n$), generated by an observation model $p(o_t \mid s_t)$. Here, $\mathcal{A}$ and $\mathcal{O}$ denote the action and observation spaces, respectively. We adopt the standard Markov assumption: the transition depends only on the current state and action, and the observation depends only on the current state. This structure, satisfied by most PDE-governed physical systems under full-state representation, underlies the recursive form of the filtering updates below.

Due to this partial observability, we must infer the true state from the history of past observations and actions $h_t \coloneqq (o_{1:t}, a_{1:t-1})$. We encapsulate this uncertainty in the posterior probability distribution (also referred to as the belief state in control theory), defined as
$
    b_t(s) \coloneqq p(s_t = s \mid h_{t})
    % \label{eq:belief-def}
$.
The filtering problem can thus be viewed as recursively updating this belief state $b_t$ into $b_{t+1}$ upon executing action $a_t$\footnotemark and receiving a new observation $o_{t+1}$. This is achieved via a two-step iteration process:
\footnotetext{Filters must account for action when control is exerted on the physical system. For example, plasma in tokamaks is controlled, and our experiments also include this case. When action is not present, the argument $a$ is simply omitted.}
\begin{align}
    &\bar b_{t+1}(s) = \int p(s \mid s', a_t) b_t(s') \, \mathrm{d}s' && \text{(Prediction Step)} \label{prediction} \\
    &b_{t+1}(s) = \frac{\bar b_{t+1}(s) p(o_{t+1} \mid s)}{\int \bar b_{t+1}(s) p(o_{t+1} \mid s) \, \mathrm{d}s} && \text{(Update Step)} \label{eq:bayesian_update}
\end{align}
where $\bar{b}_{t+1}$ is the propagated \emph{prior}, the distribution of where the system would land at $t+1$ before the new observation arrives, whereas $b_{t+1}$ is the Bayesian \emph{posterior} obtained by updating the propagated prior with the latest observation.
Equations~\eqref{prediction} and~\eqref{eq:bayesian_update} together define a recursive filtering operator $K: \mathcal{P}(\mathcal{X}) \times \mathcal{O} \times \mathcal{A} \mapsto \mathcal{P}(\mathcal{X})$ that maps the current belief state, together with the latest observation and action, to the next belief state. In the following, we abbreviate $K(b_t, o_{t+1}, a_t)$ as $K b_t$ whenever the new information $(o_{t+1}, a_t)$ is clear from context:
\begin{equation}
    b_{t+1}(s) = K b_t(s) \coloneqq \frac{1}{Z_{t+1}} p(o_{t+1} \mid s) \int p(s \mid s', a_t) b_t(s') \, \mathrm{d}s',
    \label{eq:filter-operator}
\end{equation}
where $Z_{t+1} = \int p(o_{t+1} \mid s) \int p(s \mid s', a_t) b_t(s') \, \mathrm{d}s' \, \mathrm{d}s$ is the normalization constant ensuring $b_{t+1}$ integrates to one. 
Our aim is to learn this Bayesian update by faithfully represent the belief state and then learn the recursive operator $K$ via test-time training, as described and theoretically justified next.

% Equation~\eqref{eq:filter-operator} follows from applying Bayes' rule to $p(s_{t+1} \mid h_{t+1})$ and marginalizing over the previous state, under the Markov assumptions stated above. % it is NOT this case! bt is conditioned on history!
% Proposition~\ref{prop:sec23-latent-amortization} in Section~\ref{sec:ttt} establishes that our surrogate loss is expressive enough to represent $K$ exactly. This recursive operator $K$ is the target that our learned $\mathcal{K}_\phi$ aims to approximate.

\subsection{Neural Representation of Belief States}
\label{sec:neural-belief}
The recursive Bayesian filtering involves representation of the complex belief state (Eq.~\eqref{eq:bayesian_update}).
Previously, the belief is usually approximated by particle ensembles or simple parametric distributions (\emph{e.g.}, Gaussian), both of which limit the expressivity.
Alternatively, flow-based generative models~\citep{ho_denoising_2020, lipman_flow_2023} parameterize rich, high-dimensional distributions through a learned velocity field $u_\theta(s^{(\tau)}, \tau)$ that defines a probability flow ODE $\mathrm{d} s^{(\tau)}/\mathrm{d}\tau = u_\theta(s^{(\tau)}, \tau)$ for $\tau \in [0, 1]$. Integrating this ODE from $\tau = 0$ to $\tau = 1$ induces a flow map $\phi^{u_\theta}: \mathcal{X} \to \mathcal{X}$ that transports a simple Gaussian distribution\footnotemark{} $p^{(0)}$ into an arbitrarily complex target distribution $p^{(1)}$
%  via push-forward operation $p^{(1)}=(\phi^{u_\theta})_\# p^{(0)}$, 
up to mild regularity conditions~\citep{lipman_flow_2023}.
Throughout, superscripts in parentheses (\emph{e.g.}, $s^{(\tau)}, p^{(\tau)}$) index \emph{flow time} internal to the generation process, while subscripts (\emph{e.g.}, $\theta_t, b_t$) index \emph{physical time} of the filtering process.
\footnotetext{Technically, flow matching allows construction of flows between any two distributions, but most existing work focuses on the Gaussian base distribution for simplicity and efficiency.}

Fitting $\theta$ to a target distribution $q$ uses the conditional flow matching (CFM) loss~\citep{lipman_flow_2023,lipman_flow_2024}
\begin{equation}
    \mathcal{L}_{\text{CFM}}(\theta;q)
    \;=\;
    \E_{\tau\sim U(0,1),\,s^{(0)}\sim p^{(0)},\,s^{(1)}\sim q}\!\left[\norm{u_\theta(s^{(\tau)},\tau) - (s^{(1)}-s^{(0)})}^{2}\right],
    \label{eq:cfm-loss}
\end{equation}
with $s^{(\tau)} = \tau s^{(1)}+(1-\tau)s^{(0)}$. We write $\mu_q$ for the resulting joint distribution of $(s^{(\tau)},\tau)$, the flow matching probability path of $q$. The minimizer of $\mathcal{L}_{\text{CFM}}$ drives $p^{(1)}$ to $q$ in the population limit.

% {\color{red} TODO: the notation is a mess}
We use $\theta$ to represent the belief state. At each physical time step $t$, we maintain a weight vector $\theta_t\in \Theta$ that parameterizes the velocity field $u_{\theta_t}$, which in turn defines a parametric mapping
\begin{equation}
    \Phi:\Theta\to\mathcal{P}(\mathcal{X}),\quad \theta_t\mapsto b_{\theta_t},
    \label{eq:belief-param}
\end{equation}
where $b_{\theta_t}$ is our model's approximation of the true belief $b_t$.
In this parameterization, $\theta_t$ plays the role of a high-capacity ``belief representation''. Unlike a finite particle ensemble or a Gaussian mean-covariance pair, $\theta_t$ encodes a full continuous distribution through the flow map it induces.

% Under this parameterization, the recursive Bayesian filtering operator $K$ defined in Eq.~\eqref{eq:filter-operator} acts on the distribution space $\mathcal{P}(\mathcal{X})$, and its action on our parameterized belief $b_{\theta, t}$ corresponds to a map at the weight level: given $\theta_t$, the new observation $o_{t+1}$, and action $a_t$, we seek a weight update $\theta_t \mapsto \theta_{t+1}$ such that the induced belief $b_{\theta, t+1}$ approximates $K b_{\theta, t}$. Designing and learning this weight-level update rule is the subject of Section~\ref{sec:ttt}.
\subsection{Bayesian Filtering Update in Neural Network Weight Space}
\label{sec:ttt}

% The previous subsection represents a belief state by the weights of a
% flow-matching model.

At filtering step $t$, the flow matching weights $\theta_t$ induce a belief $b_{\theta_t}$.
As recursive Bayesian filtering updates beliefs in distribution space, our filtering algorithm needs to update neural network weights
\begin{equation}
  b_{t+1}
  =
  K(b_t,o_{t+1},a_t)
  \Leftrightarrow
  \theta_{t+1}
  =
  \cK(\theta_t,o_{t+1},a_t)
  ,
  \label{eq:sec23-filtering-operator}
\end{equation}
where $\cK$ denotes the weight-space update corresponding to $K$.
Yet, designing $\cK$ is non-trivial. We can compute $b_\theta$ from $\theta$ (one forward pass plus ODE integration), but inverting this map, \textit{i.e.}, finding the $\theta$ whose induced belief equals a given target $K b_{\theta_t}$, has no closed form.

In this subsection, we design a closed-form evaluable loss whose minimizer at every timestep correctly induces the next belief. This establishes the theoretical foundation that our algorithm builds on.

\paragraph{Bayesian filtering step as an ideal loss minimizer.}
We reformulate the belief-space target $K b_{\theta_t}$ as a velocity-matching objective in weight space, define $\cK$ as its minimizer, and approximate it by test-time gradient steps.
Concretely, instantiating the CFM loss Eq.~\eqref{eq:cfm-loss} with target $K b_{\theta_t}$ and writing it in the marginal form gives the \emph{ideal filtering loss}
\begin{equation}
    \ell^{\star}(\theta)
    \;=\;
    \E_{(s,\tau)\sim\mu_{K b_{\theta_t}}}\!\left[\norm{u_\theta(s,\tau)-u_{K b_{\theta_t}}(s,\tau)}^{2}\right],
    \label{eq:sec23-ideal-loss}
\end{equation}
where $u_{K b_{\theta_t}}(s,\tau)$ is the target marginal velocity field of $K b_{\theta_t}$, and $\mu_{K b_{\theta_t}}$ is the joint distribution along flow probability path as in Eq.~\eqref{eq:cfm-loss}. We anchor the target at $K b_{\theta_t}$ since the model's current belief is $b_{\theta_t}$ and the best one-step weight-space update fits $\theta_{t+1}$ to the Bayes update of this current belief. The residual gap between $b_{\theta_t}$ and the true $b_t$ is closed by the outer training of $\phi$ (Theorem~\ref{theorem:ttt_outer_loss}).
We then define the weight-space update $\cK$ as its minimizer,
\begin{equation}
  \cK(\theta_t,o_{t+1},a_t) \;:=\; \arg\min_\theta\,\ell^{\star}(\theta),
  \label{eq:sec23-cK-def}
\end{equation}
which defines the lift of $K$ in probability space to $\cK$ in weight space.

% Since $\ell^{\star}$ is not directly evaluable, the next two paragraphs aim to construct a tractable surrogate of it. First, the \emph{empirical filtering loss} $\hat{L}_N$ realizes a Monte Carlo estimator of $\ell^{\star}$, and then a learned latent surrogate loss $\ell_\phi$ approximates $\hat{L}_N$, which in turn implements $\cK$ in practice.

\paragraph{Empirical filtering loss as an evaluable estimator of the ideal loss.}

However, we cannot evaluate $\ell^{\star}$ directly because the target velocity $u_{K b_{\theta_t}}$ is unknown.
Fortunately, the Bayesian filtering and flow matching structures (elaborated in Appendix \ref{app:two-channel-anchor}) allow us to approximate $\ell^\star$ with the \emph{empirical filtering loss}
$\hat\ell$. It consists of a model velocity branch $F_N$ depending only on $\theta$ and a target branch $G_N$ that depends only on $(\theta_t,o_{t+1},a_t)$, both evaluable since the unknown $u_{K b_{\theta_t}}$ does not appear:
\begin{align}
    &\hat{\ell}(\theta)
    \;:=\;
    \norm{F_N(\theta)-G_N(\theta_t,o_{t+1},a_t)}^{2},
    \label{eq:sec23-anchor-loss} \\
    \nonumber
    &F_N(\theta)
    \;=\;
    \frac{1}{\sqrt N}\big(u_\theta(s_i^{(\tau_i)},\tau_i)\big)_{i=1}^{N},\\
    \;\;
    &G_N(\theta_t,o_{t+1},a_t)
    \;=\;
    \frac{1}{\sqrt N}\Big(\underbrace{s_i^p - s_i^{(0)}}_{\text{propagation}}
    +
    \frac{1-\tau_i}{\tau_i}\,
    \underbrace{
        \nabla_s\log\bar p^{(\tau_i)}(o_{t+1}\given s)
    }_{\text{likelihood}}\big|_{s=s_i^{(\tau_i)}}
    \Big)_{i=1}^{N},
    \label{eq:sec23-FG}
\end{align}
where $()_{i=1}^N$ denotes the concatenation of $N$ vectors.
% Here, the vectors $F_N,G_N$ are stacked from $N$ per-sample model velocity and target.
The constituents of $F_N,G_N\in\R^{N\times n}$ are defined as follows. For each anchor $i\in\{1,\dots,N\}$, $(s_i^{(0)},\tau_i)$ are the standard flow-matching anchor noise and time, and $s_i^p\sim\bar b_{\theta_{t+1}}$ is sampled from the propagated model prior of Eq.~\eqref{prediction}. 
Together, they form the path point $s_i^{(\tau_i)}=\tau_i s_i^p+(1-\tau_i)s_i^{(0)}$,
%  $F_N$ stacks the model velocity $u_\theta(s_i^{(\tau_i)},\tau_i)$, while $G_N$ stacks a two-channel target at the same point: 
thus defining the \emph{propagation} term and the \emph{likelihood} term in Eq.~\eqref{eq:sec23-FG}. 
In the likelihood term, path-smoothed likelihood $\bar p^{(\tau)}(o_{t+1}\given s)\propto\E_{s'\sim\bar b_{\theta_{t+1}}}[\,p(o_{t+1}\given s')\,\mathcal{N}(s;\tau s',(1-\tau)^2 I)\,]$ (normalized by integrating out $o_{t+1}$) is evaluated by self-normalized Monte-Carlo with $N_j$ propagation samples. The details are in Appendix~\ref{app:two-channel-anchor}.

Intuitively, the \textit{empirical filtering loss} makes \textit{ideal filtering loss} evaluable via conditional flow matching and Monte-Carlo estimation. $F_N$ stacks model velocity $u_\theta(s_i^{(\tau_i)})$, whereas $G_N$ stacks the \emph{propagation} term and the \emph{likelihood} term that together form the target velocity. 
The {propagation term} is the conditional flow matching velocity acquired by first sampling a cloud of particles from $b_\theta$ and then simulate them to where the prior would land, \textit{i.e.} $\bar b_{\theta_{t+1}}$. 
The {likelihood term} is the Bayesian correction that tilts $\bar b_{\theta_{t+1}}$ into the target posterior $K b_{\theta_t}$.
% Using $\bar p^{(\tau)}$ rather than the data-level $p$ keeps this score split exact at every $\tau\in(0,1]$, removing any Gaussianity assumption on either the prior or the observation model.
% Using $\bar p^{(\tau)}$ rather than the data-level $p$ keeps the score split exact at every $\tau\in(0,1]$, removing any Gaussianity assumption on either the prior or the observation model (Appendix~\ref{app:two-channel-anchor}).
% the distribution of where the system would land at $t+1$ before the new observation arrives. 
% The stacked model velocity $u_\theta(s_i^{(\tau_i)})$, $F_N$, involves the flow matching interpolant $s_i^{(\tau_i)}=\tau_i s_i^p+(1-\tau_i)s_i^{(0)}$, which is formed by pairing each $s_i^p$ with $s_i^{(0)}\sim\mathcal{N}(0,I)$ and $\tau_i\sim U(0,1)$. 
Furthermore, we formally state that the \textit{empirical filtering loss} estimates the \textit{ideal filtering loss} with controllable bias and shares its weight-space minimizer:
\begin{restatable}[Empirical filtering loss, informal]{proposition}{propEmpFiltLoss}
\label{prop:sec23-emp-filt}
Under standard regularity assumptions, $\hat\ell$ is an estimator of $\ell^\star$ up to a $\theta$-independent constant $C$, with bias and variance
\begin{equation*}
    \big|\E[\hat\ell(\theta)]-\ell^\star(\theta)-C\big|
    \;\le\;
    \frac{L_1}{N_j},
    \qquad
    \mathrm{Var}\!\big(\hat{\ell}(\theta)\big)
    \;\le\;
    \frac{1}{N}\Big(\sigma^{2}+\frac{c_5}{N_j}\Big),
\end{equation*}
for constants $L_1,c_5,\sigma^{2}$ depending only on the prior, observation, and dynamics models.
% \begin{equation}
%     \Pr\!\left(\big|\hat{\ell}(\theta)-\E[\hat{\ell}(\theta)]\big|\ge\varepsilon\right)
%     \;\le\;
%     \frac{\sigma^{2}}{N\varepsilon^{2}}.
%     \label{eq:sec23-cheby}
% \end{equation}
\end{restatable}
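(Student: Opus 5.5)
The proof splits into three steps: identify the ideal target velocity, control the bias introduced by self-normalized Monte Carlo, and control the variance across the $N$ anchors.

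\textbf{Step 1: the exact target velocity.} We first show that, with the exact path-smoothed likelihood, the per-anchor target
\[
g(s^p,s^{(0)},\tau)=s^p-s^{(0)}+\tfrac{1-\tau}{\tau}\nabla_s\log\bar p^{(\tau)}(o_{t+1}\given s)
\]
has conditional mean $u_{K b_{\theta_t}}(s,\tau)$ given $s^{(\tau)}=s$. The argument uses the linear-Gaussian path $s^{(\tau)}=\tau s^p+(1-\tau)s^{(0)}$, for which the marginal velocity can be written through the score:
\[
u(s,\tau)=\frac{s+(1-\tau)\nabla_s\log p^{(\tau)}(s)}{\tau}.
\]
The path marginal of $Kb_{\theta_t}$ factorizes as the prior path marginal times $\bar p^{(\tau)}(o_{t+1}\given s)$, up to normalization. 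Hence its score splits into the prior score plus $\nabla_s\log\bar p^{(\tau)}$. The prior part is exactly the conditional-flow-matching target $s^p-s^{(0)}$, and the likelihood part gives the correction $\frac{1-\tau}{\tau}\nabla_s\log\bar p^{(\tau)}$.

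There is one subtlety. The anchors $s^{(\tau_i)}$ are drawn from the prior path $\mu_{\bar b}$, not from $\mu_{Kb}$. We therefore either reweight the anchors or interpret $\ell^\star$ under the anchor measure; the Appendix construction presumably fixes this. Granting it, the standard CFM decomposition gives
\[
\E\|u_\theta-g\|^2=\E\|u_\theta-u_{Kb}\|^2+\E\,\mathrm{Var}(g\given s,\tau).
\]
This identifies $\E\hat\ell=\ell^\star+C$ exactly when the likelihood is exact, with $C$ independent of $\theta$. Since $\hat\ell=\frac1N\sum_i\|u_\theta(s_i^{(\tau_i)},\tau_i)-g_i\|^2$, the expectation of $\hat\ell$ is the single-anchor expectation, so no $N$-dependence enters the bias.

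\textbf{Step 2: bias from self-normalized Monte Carlo (the main obstacle).} The self-normalized estimate of $\nabla\log\bar p^{(\tau)}$ is a ratio estimator
\[
\hat r=\frac{\sum_j w_j v_j}{\sum_j w_j},
\]
where $w_j=p(o\given s'_j)\,\mathcal N(s;\tau s'_j,(1-\tau)^2I)$ and $v_j=\frac{\tau s'_j-s}{(1-\tau)^2}$. The classical delta-method expansion gives
\[
\E\hat r-r=O(1/N_j),\qquad \E\|\hat r-r\|^2=O(1/N_j).
\]
We need bounded second moments of $w$ and $wv$ and a lower bound on the normalizer $\E w$; these are the ``regularity assumptions''.

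Writing $\hat g=g+\frac{1-\tau}{\tau}(\hat r-r)$, we get
\[
\E\|u_\theta-\hat g\|^2=\E\|u_\theta-g\|^2-2\,\E\langle u_\theta-g,\delta\rangle+\E\|\delta\|^2 .
\]
The term $\E\|\delta\|^2$ is $O(1/N_j)$ and independent of $\theta$. The cross term is bounded using that $\hat r$ depends only on the inner samples, which are independent of $s^p,s^{(0)}$ given $s$; hence $\E\langle u_\theta-g,\delta\rangle=\E\langle u_\theta-u_{Kb},\E[\delta\given s]\rangle$. This is $O(1/N_j)$ provided $u_\theta$ is bounded on the relevant set.

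The factor $\frac{1-\tau}{\tau}$ blows up as $\tau\to0$. I would first try to assume it is integrable against the bias, or truncate $\tau\ge\tau_{\min}$, and fold the result into $L_1$. This uniformity in $\tau$, together with the lower bound on the normalizer, is where I expect most of the work.

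\textbf{Step 3: variance.} The $N$ anchor terms are i.i.d. given $\theta_t$, and each is the square of the per-anchor discrepancy. Hence
\[
\mathrm{Var}\,\hat\ell=\frac1N\,\mathrm{Var}\big(\|u_\theta-\hat g\|^2\big).
\]
Splitting $\hat g=g+\delta$ and using $(a+b)^2\le 2a^2+2b^2$ together with fourth-moment assumptions gives
\[
\mathrm{Var}\big(\|u_\theta-\hat g\|^2\big)\le\sigma^2+\frac{c_5}{N_j},
\]
where $\sigma^2$ bounds the variance with the exact likelihood and $c_5$ comes from the fourth moment of the Monte Carlo error, which is $O(1/N_j)$ to leading order. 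Combining Steps 1–3 gives both displayed inequalities.
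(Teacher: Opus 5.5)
Your route is the paper's route: the score--velocity affine map, the path-level Bayes split through $\bar p^{(\tau)}$, the conditional-variance decomposition that produces $C$, a delta-method bound for the self-normalized estimator, and i.i.d.\ anchors. However, Step~2 fails as written. With $s'_j\sim\bar b_{\theta_{t+1}}$, $\kappa_j=\mathcal{N}(s;\tau s'_j,(1-\tau)^2I)$ and $w_j=p(o_{t+1}\given s'_j)\kappa_j$, your ratio $\sum_j w_jv_j/\sum_j w_j$ converges to $\nabla_s\log(Kb_{\theta_t})^{(\tau)}(s)$, the posterior path score. It does not converge to $r=\nabla_s\log\bar p^{(\tau)}(o_{t+1}\given s)$. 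The two differ by the prior path score $\nabla_s\log\bar b^{(\tau)}_{\theta_{t+1}}(s)$, so your $\E\hat r-r$ is $O(1)$, not $O(1/N_j)$. The right estimator is the log-gradient of the paper's self-normalized $\hat{\bar p}^{(\tau)}$ in Eq.~\eqref{eq:appA-pbar-mc}:
\[
\hat r=\frac{\sum_j w_jv_j}{\sum_j w_j}-\frac{\sum_j \kappa_jv_j}{\sum_j \kappa_j}
=\frac{\tau}{(1-\tau)^2}\big(\hat m_w-\hat m_\kappa\big),
\]
where $\hat m_w=\sum_j w_js'_j/\sum_j w_j$ and $\hat m_\kappa=\sum_j\kappa_js'_j/\sum_j\kappa_j$. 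Your delta-method bounds then apply to each ratio separately.

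This correction also removes your $\tau\to0$ worry. We get $\frac{1-\tau}{\tau}\hat r=(\hat m_w-\hat m_\kappa)/(1-\tau)$, which has no $1/\tau$. The singularity came from the $-s/(1-\tau)^2$ part of $v_j$, which your one-sided ratio fails to cancel; it is the same $s/\tau$ that cancels between $u_{Kb_{\theta_t}}$ and $u_{\bar b_{\theta_{t+1}}}$ in the paper's decomposition. The real non-uniformity is at $\tau\to1$. There the relative second moment of the kernel weights, $\E[\kappa^2]/(\E\kappa)^2$ at fixed $s$, grows like $(1-\tau)^{-n}$ and the weights degenerate. So if you truncate, cap $\tau\le\tau_{\max}$; the paper simply assumes constants uniform in $\tau$.

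Your measure subtlety is genuine, and the paper's proof does not resolve it. It takes conditional expectations given $(s_i^{(\tau_i)},\tau_i)$, averages over the anchor law $\mu_{\bar b_{\theta_{t+1}}}$, and asserts equality with $\ell^\star$, which is defined under $\mu_{Kb_{\theta_t}}$. By Eq.~\eqref{eq:appA-post-marginal} and the definition of $\bar p^{(\tau)}$, $d\mu_{Kb_{\theta_t}}/d\mu_{\bar b_{\theta_{t+1}}}(s,\tau)=\bar p^{(\tau)}(o_{t+1}\given s)/Z_{t+1}$. So the argument actually yields $\E_{\mu_{\bar b_{\theta_{t+1}}}}\norm{u_\theta-u_{Kb_{\theta_t}}}^2+C$, up to the $O(1/N_j)$ terms. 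This differs from $\ell^\star(\theta)+C$ by a $\theta$-dependent amount unless that ratio is constant.

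Rather than granting this, you have two options. You can state the bound for $\ell^\star$ taken under the anchor measure. That loss still has pointwise minimizer $u_{Kb_{\theta_t}}$, because the Gaussian-smoothed path marginals have full support for $\tau<1$, so minimizer-sharing survives whenever $u_\theta$ can represent $u_{Kb_{\theta_t}}$. Alternatively, you can weight each anchor term by $\bar p^{(\tau_i)}(o_{t+1}\given s_i^{(\tau_i)})/Z_{t+1}$, at the price of a further self-normalization bias.

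In Step~3, $(a+b)^2\le 2a^2+2b^2$ produces $2\sigma^2$, not the exact-likelihood $\sigma^2$ you intend. To keep coefficient one, expand $\norm{u_\theta-g-\delta}^2$ and use the conditional independence of the inner samples from $(s^p,s^{(0)})$ given $(s,\tau)$ to show the covariance term is $O(1/N_j)$. You already use that independence for the cross term, more carefully than the paper does. Cauchy--Schwarz alone gives only $O(N_j^{-1/2})$. The i.i.d.\ claim also requires fresh inner samples for each anchor, as in the paper.
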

% Hence, taking $N_j\propto N$ gives a single $1/N$ rate, and 
Hence, $\hat\ell$ shares its weight-space minimizer with $\ell^\star$ up to an $\cO(1/N_j)$-controlled gradient bias. The full statement  and proof are deferred to Appendix~\ref{app:two-channel-anchor}.
% The proof in Appendix~\ref{app:two-channel-anchor} exploits a decomposition of the Bayesian filtering step Eq.~\eqref{eq:filter-operator} and the velocity-score identity in standard flow matching.

% The proof, in Appendix~\ref{app:two-channel}, lifts the Bayesian update to the velocity residual Eq.~\eqref{eq:sec23-anchor-loss} via a score decomposition. 

\section{Test-time Training of Flow Matching Filters}

In this section, we present our method \ourmethod. It represents the belief with flow matching model weights $\theta$, and executes Bayesian filtering through test-time training (TTT) using a surrogate loss $\ell_\phi$ with learnable parameters $\phi$. 
Two ``training'' loops are involved: the inner loop that updates $\theta$ via $\phi$-dependent surrogate loss (Section~\ref{sec:testing} and Algorithm~\ref{alg:ttt_test}) and the outer loop that updates $\phi$ via the outer loss (Section~\ref{sec:training} and Algorithm~\ref{alg:ttt_train}).
The implementation details are in Appendix~\ref{app:implementation}.

\subsection{Surrogate Loss for Test-time Training}
\label{sec:testing}
% \paragraph{ (TTT) update.}
The empirical filtering loss $\norm{F_N(\theta)-G_N(\theta_t,o_{t+1},a_t)}^2$ provides a representable estimation of $\ell^{\star}$, but evaluating $G_N$ at every filtering step requires forward dynamics and observation models, which can be expensive and even infeasible if these models are not known in advance.
This motivates us to amortize this by introducing a \textit{learnable} surrogate loss
\begin{equation}
  \ell_\phi(\theta;\theta_t,o_{t+1},a_t)
  \;=\;
  \norm{
    f_\phi(\theta)
    -
    g_\phi(\sg[\theta_t],o_{t+1},a_t)
  }_{\cH}^{2},
  \label{eq:sec23-bff-surrogate}
\end{equation}
% The heads do not need to reproduce the full $\R^{N\times n}$ anchor vectors Eq.~\eqref{eq:sec23-FG}, but instead perform velocity matching in a learned latent representation defined by $f_\phi$ and $g_\phi$. 
where $f_\phi$ and $g_\phi$ are two \textit{learnable} projection heads\footnote{To highlight that they constitute the learnable part in the surrogate loss $\ell_\phi$ and are optimized together in $\mathcal{L}(\phi)$ (Section~\ref{sec:training}), we denote both $f_\phi$ and $g_\phi$ with the same subscript $\phi$ though they are separate neural networks.}
which approximate $F_N$ and $G_N$ respectively.
Thereby, $\ell_\phi$ approximate $\hat\ell$ with error $|\ell_\phi-\hat\ell|$ controlled by each head's approximation error (Proposition~\ref{prop:sec23-latent-amortization} in the appendix).
In practice, we use a single test-time gradient step on $\ell_\phi$ to update $\theta_{t+1}$:
\begin{equation}
  \theta_{t+1}
  \;=\;\cKphi(\theta_t,o_{t+1},a_t)
  \;=\;\theta_t-\eta\,\nabla_{\theta}\,
  \ell_\phi(\theta;\theta_t,o_{t+1},a_t)\big|_{\theta=\theta_t},
  \label{eq:sec23-ttt-update}
\end{equation}
where $\cKphi$ denotes the $\phi$-parameterized realization of the weight-space update $\cK$. $\cKphi$ effectively performs test-time training (TTT) on the flow matching parameters $\theta$, using the learned surrogate loss $\ell_\phi$. As Figure~\ref{fig:arch} show, our implementation uses the DiT backbone~\citep{peebles_scalable_2023}, augmented with a per-layer TTT module inserted after each self-attention block; the TTT module's weights are updated as $\theta$, while the rest of the DiT backbone is shared across all physical steps. The surrogate loss $\ell_\phi$ constitutes two learned projection heads parameterized by $\phi$.

\begin{remark}[Surrogate loss as a learnable amortization]
The surrogate loss $\ell_\phi$ amortizes $\hat\ell$ of Section~\ref{sec:ttt} by replacing $F_N, G_N$ with learned heads $f_\phi, g_\phi$. The structural link to the ideal filtering loss $\ell^\star$ (residual-norm form, asymmetric split between $\theta$ and $(\theta_t,o_{t+1},a_t)$) is preserved, so the TTT update Eq.~\eqref{eq:sec23-ttt-update} stays a faithful realization of $\cK$. $\hat\ell$'s dependence on known dynamics, observation model, and finite-sample Monte Carlo anchors is absorbed into $\phi$.
Moreover, these approximations do not bottleneck $\cKphi$. \textit{(i) Expressivity.} Eq.~\eqref{eq:sec23-ttt-update} can realize the Bayesian filtering operator $K$ (Proposition~\ref{prop:appendix-expressivity}), so the function class is not capped by the inherited structure. \textit{(ii) Training target.} $\phi$ is trained against $K$, not $\hat\ell$ (Theorem~\ref{theorem:ttt_outer_loss}), pulling $\ell_\phi$ toward the true filter. Hence BFF can learn general Bayesian filters.
\end{remark}

\subsection{Learning the Surrogate Loss}
\label{sec:training}

\begin{figure}[t]
    \centering
    \includegraphics[width=\linewidth]{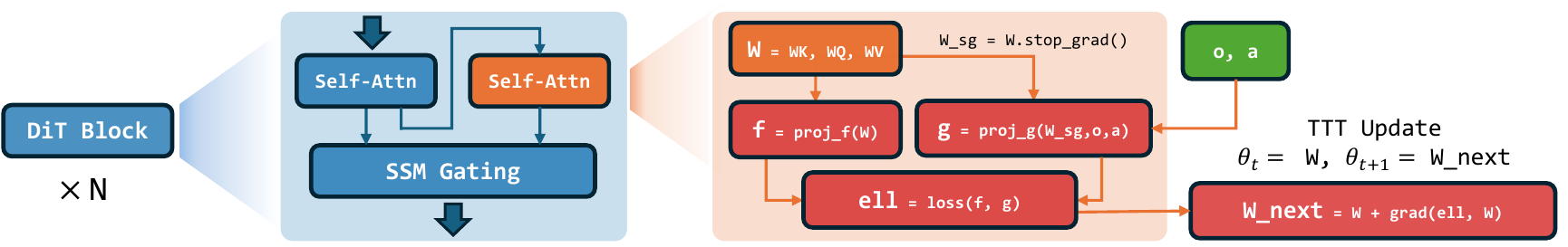}
    \caption{\textbf{Architecture overview of \ourmethod.} The weights $\theta$ (\texttt{W} in the figure) in the per-layer TTT modules inserted into a DiT backbone are updated at test time by the surrogate loss $\ell_\phi$ whose \texttt{proj\_f} and \texttt{proj\_g} are learnable heads parameterized by $\phi$. Detailed description is in Appendix~\ref{app:implementation}.}
    \label{fig:arch}
\end{figure}

So far, we have established that our $\cKphi$, \textit{i.e.} TTT with the surrogate loss $\ell_\phi$, is a structurally faithful realization of the Bayesian filtering operator $K$. Since $\ell_\phi$ also consists of learnable parameters $\phi$, we now show how the meta-parameters $\phi$ are trained from datasets with trajectories $(h_t,\, o_{t+1},\, a_t)$ so that $\cKphi$ actually approximates $K$ in practice.

\paragraph{Filtering operator error in probability space.}
The two operators live in different spaces. $K$ acts on beliefs in $\mathcal{P}(\mathcal{X})$, while $\cKphi$ acts on weights in $\Theta$. To compare them on equal footing, we use the weight-to-belief lift
$\Phi:\Theta\to\mathcal{P}(\mathcal{X}),\ \theta\mapsto b_\theta$ from Section~\ref{sec:neural-belief} (Eq.~\eqref{eq:belief-param}) and feed both operators the same current belief $\Phi(\theta_t)$, then measure how their resulting next
beliefs differ:
\begin{align}
    \nonumber
    &\mathcal{E}(\phi)
    \;\coloneqq\;
    \E_{t\sim U\{1{:}T-1\}}\,\E_{(h_t,\, o_{t+1},\, a_t)\sim P_{\text{data}}}\!
    \left[
        W_2^2\!\Big(
            K\big(\Phi(\theta_t),\, o_{t+1},\, a_t\big),\;
            \Phi\!\big(\cKphi(\theta_t,\, o_{t+1},\, a_t)\big)
        \Big)
    \right],\\
    &\theta_t\;=\;\cKphi^t(h_t)\;=\;
    \underbrace{\cKphi(...\cKphi(\cKphi}_{\cKphi\text{ applied }t\text{ times}}(\theta_0,o_1,a_0),o_2,a_1), ...,o_{t},a_{t-1})
    \label{eq:operator-error}
\end{align}
where $h_t$ is the history of observations and actions from the dataset, $W_2$ measures the Wasserstein-$2$ distance between distributions, and $\theta_t$ is produced from $\theta_0$ by recursively applying $\cKphi$ to the history $h_t\sim P_{\text{data}}$, identical to the filtering loop at test time.
This error quantifies the distributional mismatch between the true Bayesian filtering update and the learned TTT update, averaged across the dataset distribution $P_{\text{data}}$. The pseudocode is in Algorithm~\ref{alg:ttt_train}.

\paragraph{Sample loss bounds to filtering operator error.}
Eq.~\eqref{eq:operator-error} is at the level of distributions, which in general cannot be directly evaluated or optimized. The insight is that by the Markov factorization that defines the trajectory distribution, the ground-truth state $s_{t+1}$ in any training trajectory data is implicitly conditional on $h_{t+1}$ and is thus an exact sample from $b_{t+1}$. Therefore, we can define the outer-loop training loss as a flow matching-style loss that upper bounds the distributional error.
\begin{equation}
    \mathcal{L}(\phi)
    \;=\;
    \E_{t\sim U\{1{:}T\}}\,
    \E_{\tau\sim U(0,1)}\,
    \E_{h_t,\, s_t,\, s_t^{(0)}}\!
    \left[
        \big\|u_{\theta_t}(s_t^{(\tau)},\tau) - (s_t - s_t^{(0)})\big\|^{2}
    \right],
    \label{eq:training-loss}
\end{equation}
$\theta_t=\cKphi^t(h_t)$ is obtained by iteratively applying $\cKphi$ as in Eq.~\eqref{eq:operator-error}, $s_t$ is the corresponding ground-truth state at step $t$, $s_t^{(0)}\sim\mathcal{N}(0,I)$ is a noise sample, and $s_t^{(\tau)} = \tau s_t + (1-\tau)s_t^{(0)}$ is the flow matching interpolant. 
We show next that $\mathcal{L}$ bounds the operator error in Eq.~\eqref{eq:operator-error} (proof in Appendix~\ref{app:loss_proof}).

\begin{theorem}[Training loss bounds the operator error]
\label{theorem:ttt_outer_loss}
Assume the velocity field $u_\theta$ and the filtering operator $K$ are Lipschitz with constants $L$ and $L_K$ respectively. Then for any horizon $T \ge 2$ there exist a constant $A>0$ and a $\phi$-independent constant $C \ge 0$ such that the operator error in Eq.~\eqref{eq:operator-error} satisfies
\begin{equation}
    \mathcal{E}(\phi) \;\leq\; A \cdot \bigl(\mathcal{L}(\phi) - C\bigr),
    \qquad
    A \;=\; 4 C_L\,(L_K^2 + 1),
    \label{eq:operator-error-bound}
\end{equation}
where $C_L$ depends only on $L$ and is the Grönwall pre-factor from Step~(1) of the appendix proof.
\end{theorem}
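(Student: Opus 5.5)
The bound follows from a chain of three inequalities. The first controls the Wasserstein distance between two flow-matching-induced distributions by the $L^2$ discrepancy of their velocity fields along a probability path. The second uses a triangle inequality through the true belief $b_{t+1}$ together with the Lipschitz property of $K$. The third identifies the resulting velocity discrepancies with the outer loss $\mathcal{L}(\phi)$ minus its irreducible part, via the standard conditional-to-marginal flow matching decomposition.

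\textbf{Step (1), Grönwall stability.} For a Lipschitz velocity $u_\theta$ (constant $L$) and any target $q$ with marginal velocity $u_q$ along the path $\mu_q$, I would couple the two ODEs started from the same noise $s^{(0)}$. The flow of $u_q$ pushes $p^{(0)}$ to $q$, and the flow of $u_\theta$ pushes $p^{(0)}$ to $b_\theta$. Differentiating $\|x^{(\tau)}-y^{(\tau)}\|^2$ and applying Grönwall gives
\[
W_2^2(q,b_\theta)\le C_L\,\E_{(s,\tau)\sim\mu_q}\big[\|u_\theta(s,\tau)-u_q(s,\tau)\|^2\big],
\]
with $C_L=e^{1+2L}$ or similar. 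This supplies the pre-factor $C_L$ in the statement.

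\textbf{Step (2), triangle inequality.} Write $\theta_{t+1}=\cKphi(\theta_t,o_{t+1},a_t)$, noting that by the recursion this is exactly $\theta_{t+1}=\cKphi^{t+1}(h_{t+1})$. Then
\[
W_2^2\big(K\Phi(\theta_t),\Phi(\theta_{t+1})\big)\le 2W_2^2\big(K\Phi(\theta_t),Kb_t\big)+2W_2^2\big(b_{t+1},\Phi(\theta_{t+1})\big).
\]
This uses $Kb_t=b_{t+1}$. Applying Lipschitzness of $K$ to the first term bounds it by $2L_K^2W_2^2(b_t,\Phi(\theta_t))$. Both terms are now of the form $W_2^2(b_k,b_{\theta_k})$, so by Step (1) each is bounded by $C_L$ times the marginal flow-matching error $\E_{\mu_{b_k}}\|u_{\theta_k}-u_{b_k}\|^2$.

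\textbf{Step (3), from marginal error to the outer loss.} Conditioned on $h_k$, the ground-truth state satisfies $s_k\sim b_k$ exactly by the Markov factorization. The standard CFM identity then gives
\[
\E\|u_{\theta_k}(s_k^{(\tau)},\tau)-(s_k-s_k^{(0)})\|^2=\E_{\mu_{b_k}}\|u_{\theta_k}-u_{b_k}\|^2+V_k(h_k),
\]
where $V_k$ is the conditional variance of $s_k-s_k^{(0)}$ given $(s_k^{(\tau)},\tau,h_k)$, which is independent of $\phi$. Averaging over $t\sim U\{1{:}T-1\}$ and the data gives two sums, one over $k=t$ and one over $k=t+1$. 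Each is dominated by a constant multiple of the average over $k\sim U\{1{:}T\}$, with a ratio of at most $T/(T-1)\le 2$ for $T\ge 2$. Setting $C=\E_k\E[V_k]$, the result is
\[
\mathcal{E}\le 2\cdot 2C_L(L_K^2+1)\big(\mathcal{L}-C\big)=4C_L(L_K^2+1)\big(\mathcal{L}-C\big).
\]

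\textbf{Main obstacle.} The hardest step is Step (1). Grönwall needs Lipschitzness of the velocity along which the error is propagated, and the clean argument differentiates along the $u_\theta$ flow. The error term therefore naturally appears evaluated on the $b_\theta$ path, not on $\mu_q$. The fix is to run the comparison backward: integrate the difference along the $u_q$ trajectories, so that the $L^2$ error is taken under $\mu_q$, and use only the Lipschitz constant of $u_\theta$ in the Grönwall step. This ordering must be handled carefully so that only $L$, and not a Lipschitz constant of $u_q$, enters $C_L$. A secondary point is the index-range bookkeeping in Step (3), and the claim that the Markov property makes $s_k\mid h_k\sim b_k$ for both $k=t$ and $k=t+1$.
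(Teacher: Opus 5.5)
Your proposal is correct and follows the paper's proof step for step. It uses a same-noise coupling with Gr\"onwall ($C_L=e^{2L+1}$) to bound $W_2^2(\Phi(\theta_k),b_k)$ by the marginal flow-matching error, then the triangle inequality through the pivot $b_{t+1}=Kb_t$ with Lipschitz $K$ and $(a+b)^2\le 2(a^2+b^2)$, and finally the index-range comparison giving the factor $T/(T-1)\le 2$. Your per-step variances $V_k$ averaged into $C$ are, if anything, slightly more careful than the paper's single constant. The obstacle you flag is handled in the paper exactly as you propose and needs no backward integration: writing $u_{\theta_t}(\hat s^{(\tau)})-u(s^{(\tau)})=[u_{\theta_t}(\hat s^{(\tau)})-u_{\theta_t}(s^{(\tau)})]+[u_{\theta_t}(s^{(\tau)})-u(s^{(\tau)})]$ forward in flow time puts only $L$ into the Gr\"onwall rate, while the residual is evaluated along the true marginal-velocity trajectories, whose laws are the path marginals of $b_t$.
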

The bound is sharp at zero. The constant $C$ is exactly the unoptimizable variance part in the conditional flow-matching loss~\citep{lipman_flow_2023}, and does not depend on $\phi$. The remaining $\mathcal{L}(\phi) - C$ is the marginal velocity-matching term that can be driven to zero, in which case the model belief equals the true posterior at every step and $\mathcal{E}(\phi) = 0$ as well. Minimizing the conditional flow matching-style loss $\mathcal{L}(\phi)$ therefore drives a sharp upper bound on the filtering operator error $\mathcal{E}(\phi)$.

%  combines a Grönwall-type argument for the flow-matching ODE (controlling the per-step trajectory error) with a triangle inequality, the elementary inequality $(a+b)^2 \le 2(a^2+b^2)$, and Lipschitz-$K$ on the filtering recursion (lifting the trajectory error to the operator error). 

\section{Results}
\label{sec:results}

Our experiments aim to answer the following questions: 
(1) As a prerequisite of Bayesian filtering, can BFF produce nontrivial, \textit{i.e.}, multimodal posterior distributions? (2) Can BFF predict the \textit{posterior distribution} more accurately than state-of-the-art methods in high-dimensional systems with nonlinear dynamics? (3) Beyond standard super-resolution benchmarks, does \ourmethod remain reliable in observation regimes encountered in real applications, where measurements may be extremely sparse or routed through highly nonlinear observation operators?

Therefore, we empirically validate \ourmethod on a range of physical systems including the Lorenz-63 system, two 1D PDEs (Burgers' equation and the 1D Kuramoto-Sivashinsky, KS), the 2D Navier-Stokes (NS) flow past triple cylinders, and a tokamak plasma profile estimation problem. 
These benchmarks combine chaotic turbulence or stiff dynamics with sparse, moving, and line-integral observation models, yielding complex multimodal posteriors.
We compare \ourmethod with state-of-the-art learning-based filtering methods and classical filters~\citep{kalman_new_1960,evensen_ensemble_2009}, deterministic supervised learners~\citep{li_fourier_2021}, and diffusion/flow-based Bayesian filters~\citep{rozet_score-based_2023,li2024learning,chen_flowdas_2025}. 
Detailed dataset, baseline, task descriptions, and supplementary results are given in Appendices~\ref{app:dataset}, \ref{app:baselines}, and \ref{app:full_results}.

\label{sec:main-results}

\subsection{Multimodal Posterior Representation}
\label{sec:experiments}

\begin{figure}[tb]
    \centering
    \includegraphics[width=0.85\linewidth]{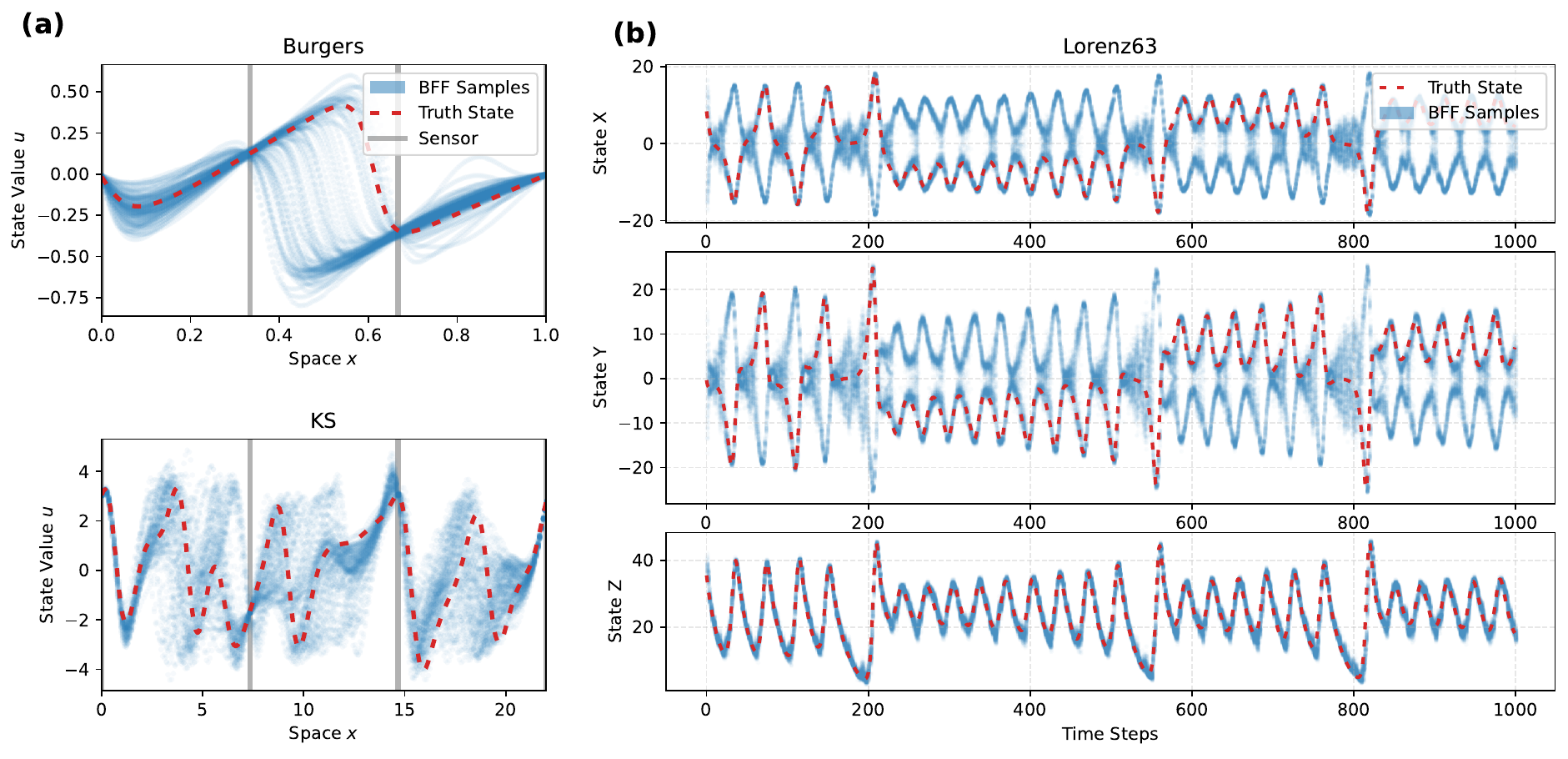}
    \caption{\textbf{Bimodal posterior visualizations.}
    \textbf{(a)} Spatial probability density on the 4-sensor Burgers' and KS equations showing spread-out and sometimes multi-modal posterior distributions.
    % (1000 samples); the bimodal mass at $x\!\approx\!0.5$ reflects shock-location uncertainty, and bright bands are sensor locations.
    \textbf{(b)} State-channel trajectories exhibiting the expected symmetric bimodal posterior on the Lorenz-63 system.}
    \label{fig:bimodal_visualizations}
\end{figure}

Bayesian filtering is challenging when the posterior is multimodal, since a single Gaussian collapses the modes into a spurious mean and distorts the underlying distribution. We demonstrate that \ourmethod{} captures such non-trivial multimodal posteriors in two settings, the Burgers' and KS equations, with only $4$ spatial sensors, and the Lorenz-63 system with only the $Z$ component observed (details in Appendix~\ref{app:dataset}).

We make two observations: 
(1) In the Lorenz-63 system, since the system is symmetrically observable ($Z$ observed, $X,Y$ hidden), an ideal filter would yield a bimodal posterior. As shown in Figure~\ref{fig:bimodal_visualizations}(b), \ourmethod successfully captures the bimodality, with one mode accurately tracking the true state.
(2) In the Burgers' and KS equations, the posterior may also be multimodal. For example, due to sparse sensor placement, there is uncertainty in shock wave propagation that creates two modes around the possible shock locations. As shown in Figure~\ref{fig:bimodal_visualizations}(a), \ourmethod captures the spread-out and sometimes multimodal posterior distributions.
Beyond bimodality, on the tokamak benchmark the residual posterior of $T_e$ exhibits sharp peaks, heavy tails, and radius-dependent correlation structure (near independence at the core, near-collinear dependence at the edge), all of which a single Gaussian cannot represent (Appendix~\ref{app:tokamak_posterior}, Figure~\ref{fig:tokamak_pair_corner_Te}).
% \ruiqi{tokamak}

\subsection{Distributional Accuracy of the Posterior}

\begin{figure}[tb]
    \centering
    \includegraphics[width=\linewidth]{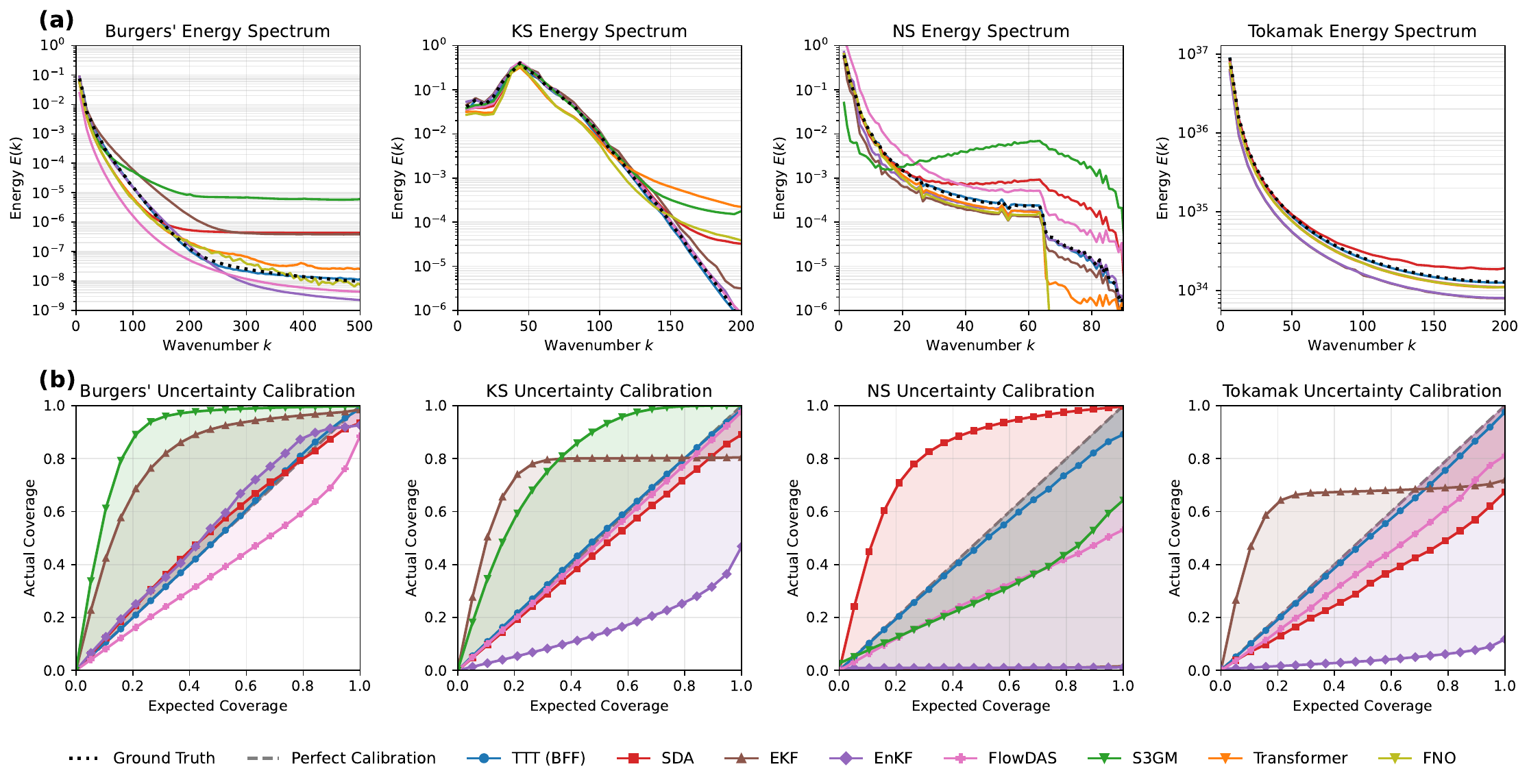}
    \caption{\textbf{Diagnostic evaluations of physical consistency and probabilistic modeling across PDE benchmarks.}
    \textbf{(a)} \ourmethod maintains spectral fidelity across scales, while baselines deviate significantly at high frequencies.
    \textbf{(b)} \ourmethod consistently provides reliable posterior distributional estimates with lower Miscalibration Area (MA), avoiding the over- or under-confidence observed in baselines.}
    \label{fig:diagnostic_metrics}
    \vspace{-0.1in}
\end{figure}

We evaluate on 1D Burgers', KS, and 2D NS (uniformly placed sensors) equations under sparse-sensing regimes (downsample ratio of $128$, $85$, and $16\times 16$) with $\sigma=0.1$ Gaussian noise. See Appendix~\ref{app:dataset} for data generation and sensor placement details. 

A Bayesian filter outputs a posterior \emph{distribution}, so evaluation should be distribution-level. We report three metrics, the state estimation error $\mathcal{E}_{L_2}$, the per-sample energy spectral error $\mathcal{E}_{\text{spec}}$ capturing marginal distribution deviation, and the Miscalibration Area (MA) capturing posterior coverage of the true posterior (computation details in Appendix~\ref{app:metrics}). A good filter must perform well on all three simultaneously.
As shown in Table~\ref{tab:all_datasets_results} and Figure~\ref{fig:diagnostic_metrics}, \ourmethod attains the best score in 8 of 9 benchmark cells across the three PDE systems, with the sole exception being spectral error on 1D KS, where \ourmethod stays within $1.6\times$ of the best Gaussian baseline (EnKF) while delivering $>50\times$ better calibration. The high $\mathcal{E}_{L_2}\!\approx\!0.91$ on 1D KS across all methods reflects benchmark difficulty rather than failure to learn, and the qualitative KS dynamics are in fact recovered by \ourmethod (Appendix~\ref{app:full_results_ks}, Figure~\ref{fig:combined_traj_ks}).

For classical filters, Ensemble Kalman Filter (EnKF) achieves the best or second-best spectral error on KS and 2D NS (uniform), but its Gaussian assumption hinders both uncertainty quantification and mean prediction. The same applies to Extended Kalman Filter (EKF). Deterministic learned models (Transformer, FNO) achieve good mean prediction but blur out high-frequency components (Figure~\ref{fig:diagnostic_metrics}~(a)) and cannot quantify uncertainty by construction.
Among generative filters, SDA~\citep{rozet_score-based_2023} captures uncertainty better than classical filters but suffers from trajectory-level posterior sampling, leaving its mean estimation and spectral consistency unstable. S$^3$GM~\citep{li2024learning} shows the same pattern. FlowDAS~\citep{chen_flowdas_2025} represents the posterior with particles, whose limited expressivity drives its unstable performance and motivates our weight-space approach.

To verify the gains come specifically from the weight-space belief representation, we replace the TTT block with a GRU-conditioned flow matching model of the same backbone and training data. The GRU variant degrades sharply on 2D NS (Appendix~\ref{app:ablation}, Figure~\ref{fig:ns_2d_ablation}), confirming that fixed-dimensional hidden states cannot encode the full posterior that weight-space memory carries.

\begin{table}[b]
    \vspace{-0.2in}
    \centering
    \footnotesize
    \caption{\textbf{Comprehensive quantitative results on the 1D KS and Burgers' and 2D NS datasets.} We report the relative $L_2$ error ($\mathcal{E}_{L_2}$), spectrum deviation ($\mathcal{E}_{\mathrm{spec}}$), and Miscalibration Area (MA). \textbf{Bold} denotes the best method and \underline{underline} denotes the second best. Transformer and FNO are deterministic models that cannot represent the posterior distribution, thus do not have MA score.
    }
    \label{tab:all_datasets_results}
    \setlength{\tabcolsep}{2.5pt} % 极限压缩列间距以容纳 13 列
    \resizebox{0.88\linewidth}{!}{% 强制缩放以适应页面宽度
    \begin{tabular}{@{}lcccccccccccc@{}}
    \toprule
    \multirow{2}{*}{\textbf{Method}} & \multicolumn{3}{c}{\textbf{1D KS}} & \multicolumn{3}{c}{\textbf{1D Burgers'}} & \multicolumn{3}{c}{\textbf{2D NS (Uniform)}} & \multicolumn{3}{c}{\textbf{2D NS (Moving)}} \\
    \cmidrule(lr){2-4} \cmidrule(lr){5-7} \cmidrule(lr){8-10} \cmidrule(lr){11-13}
     & $\mathcal{E}_{L_2}\!\!\downarrow$ & $\mathcal{E}_{\mathrm{spec}}\!\!\downarrow$ & \textbf{MA $\!\!\downarrow$} & $\mathcal{E}_{L_2}\!\!\downarrow$ & $\mathcal{E}_{\mathrm{spec}}\!\!\downarrow$ & \textbf{MA $\!\!\downarrow$} & $\mathcal{E}_{L_2}\!\!\downarrow$ & $\mathcal{E}_{\mathrm{spec}}\!\!\downarrow$ & \textbf{MA $\!\!\downarrow$} & $\mathcal{E}_{L_2}\!\!\downarrow$ & $\mathcal{E}_{\mathrm{spec}}\!\!\downarrow$ & \textbf{MA $\!\!\downarrow$} \\
    \midrule
    EKF     & 1.4186 & 0.1455 & 0.5117 & 0.3030 & 1.0408 & 0.5747 & 0.6621 & 0.2904 & 0.8301 & 0.5910 & 0.5844 & 0.8738 \\
    EnKF        & 1.0552 & \textbf{0.0376} & 0.6766 & 0.5514 & 0.3122 & 0.1017 & 0.9240 & \underline{0.1313} & 0.8682 & 0.7859 & {0.0642} & 0.7328 \\
    FNO         & 0.9873 & 0.2221 & -- & \underline{0.3274} & 0.2021 & -- & 0.3086 & 0.7974 & -- & 0.7812 & 0.1743 & -- \\
    Transformer & 0.9621 & 0.2668 & -- & 0.4338 & \underline{0.1432} & -- & 0.3202 & 0.3212 & -- & 0.7516 & 0.2766 & -- \\
    S$^3$GM     & 0.9828 & {0.1923} & 0.5449 & 0.3888 & {0.2002} & 0.7401 & {0.5432} & {1.2255} & 0.6786 & 0.7031 & \underline{0.0572} & 0.7943 \\
    SDA         & \underline{0.9622} & 0.1277 & 0.0904 & 0.3370 & 0.9140 & \underline{0.0477} & \underline{0.1951} & 0.4919 & \underline{0.5925} & \underline{0.5901} & 0.3133 & \underline{0.5112} \\
    FlowDAS     & 1.0057 & \underline{0.0502} & \underline{0.0293} & 1.1219 & 0.5201 & 0.7222 & 0.8171 & 0.4364 & 0.3938 & 1.0733 & 0.2617 & 0.9456 \\
    \midrule
    \ourmethod (ours) & \textbf{0.9061} & {0.0575} & \textbf{0.0135} & \textbf{0.2896} & \textbf{0.0441} & \textbf{0.0127} & \textbf{0.1651} & \textbf{0.0279} & \textbf{0.0643} & \textbf{0.3771} & \textbf{0.0286} & \textbf{0.1799} \\
    \bottomrule
    \end{tabular}}
\end{table}

\subsection{Realistic Applications}

\begin{figure}[t]
    \centering
    \begin{minipage}[t]{0.46\linewidth}
        \centering
        \vspace{0pt}
        \includegraphics[width=\linewidth,keepaspectratio]{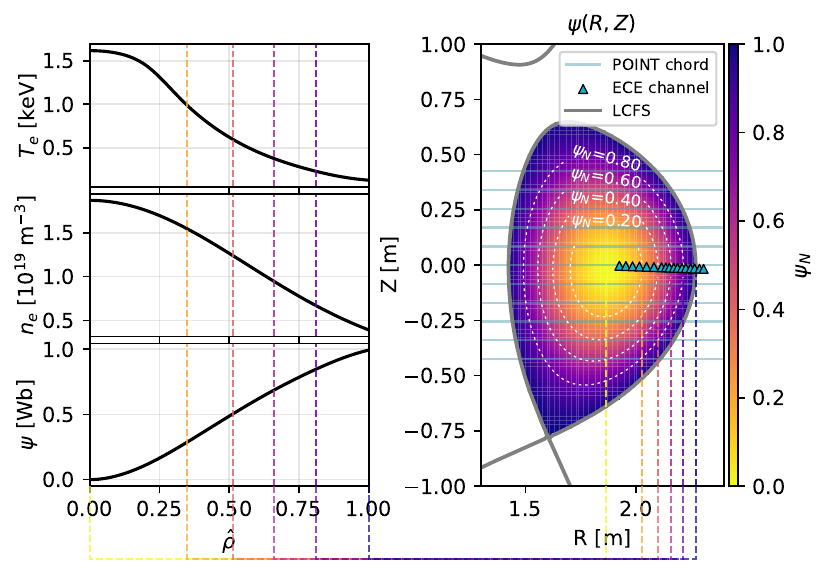}
        \captionof{figure}{\textbf{Poloidal cross-section of the tokamak in the profile estimation task.} 
        The observation includes line-integrated measurements (POINT) which is inherently nonlinear.
        }
        \label{fig:tokamak}
    \end{minipage}\hfill
    \begin{minipage}[t]{0.5\linewidth}
        \centering
        \vspace{0.5cm}
        \footnotesize
        \setlength{\tabcolsep}{4pt}
        \scalebox{1}{%
        \begin{tabular}{@{}lccccc@{}}
        \toprule
        \textbf{Method} & \textbf{sec./step $\!\!\downarrow$} & \textbf{$\mathcal{E}_{L_2}\!\!\downarrow$} & \textbf{$\mathcal{E}_{\mathrm{grad}}\!\!\downarrow$} & \textbf{$\mathcal{E}_{\mathrm{spec}}\!\!\downarrow$} & \textbf{MA $\!\!\downarrow$} \\
        \midrule
        EKF & 0.0499 & 0.1016 & 0.1657 & 0.2075 & 0.4275 \\
        EnKF & 0.0488 & 0.0870 & 0.1207 & 0.2039 & 0.9182 \\
        FNO & 0.0041 & \underline{0.0174} & 0.0779 & 0.0688 & -- \\
        Transformer & 0.0031 & \textbf{0.0170} & 0.1140 & 0.0728 & -- \\
        SDA & 3.7466 & 0.2940 & 0.9483 & 0.0788 & 0.3682 \\
        FlowDAS & 0.8553 & 0.0299 & \underline{0.0581} & \underline{0.0629} & \underline{0.2268} \\
        \midrule
        \ourmethod (ours) & 0.0369 & 0.0193 & \textbf{0.0486} & \textbf{0.0101} & \textbf{0.0225} \\
        \bottomrule
        \end{tabular}}
        \vspace{0.6cm}
        \captionof{table}{\textbf{Channel-averaged quantitative results for tokamak plasma profile estimation.} 
        % In addition to errors, we report per-step inference latency (wall-clock seconds/step).
        % the errors relative $L_2$ error ($\mathcal{E}_{L_2}$), gradient relative $L_2$ error ($\mathcal{E}_{\mathrm{grad}}$), spectrum deviation ($\mathcal{E}_{\mathrm{spec}}$), and Miscalibration Area (MA) averaged across all physical channels. 
        \textbf{Bold} denotes the best and \underline{underline} the second best.}
        \label{table:tkmk2}
    \end{minipage}
\vspace{-1em}
\end{figure}

To stress-test \ourmethod under the sparse and nonlinear observations encountered in real applications, we include two reality-oriented setups, (1) a single moving sensor scanning a 2D flow field (underwater robot exploration), and (2) tokamak plasma internal state estimation under indirect line-integrated diagnostics.

\paragraph{Single moving sensor.}

In the 2D benchmarks, we introduce a setting where a single moving sensor traverses the domain along a fixed zig-zag path to scan the entire flow field (Figure~\ref{fig:single_sensor}).
Here, the filters need to reconstruct the full $128 \times 128$ state distribution from a single measurement point with a $16384:1$ compression ratio. 
In this extremely sparsely observed problem, the gap between \ourmethod and the next baseline widens further. \ourmethod outperforms the second-best by $36\%$, $55\%$, and $65\%$ in $\mathcal{E}_{L_2}$, $\mathcal{E}_{\text{spec}}$, and MA respectively (Table~\ref{tab:all_datasets_results}). Visualizations of the predicted posterior are in Appendix~\ref{app:full_reults_ns}.

\begin{wrapfigure}{r}{0.27\textwidth}
    \centering
    \vspace{-1em} 
    \includegraphics[width=\linewidth]{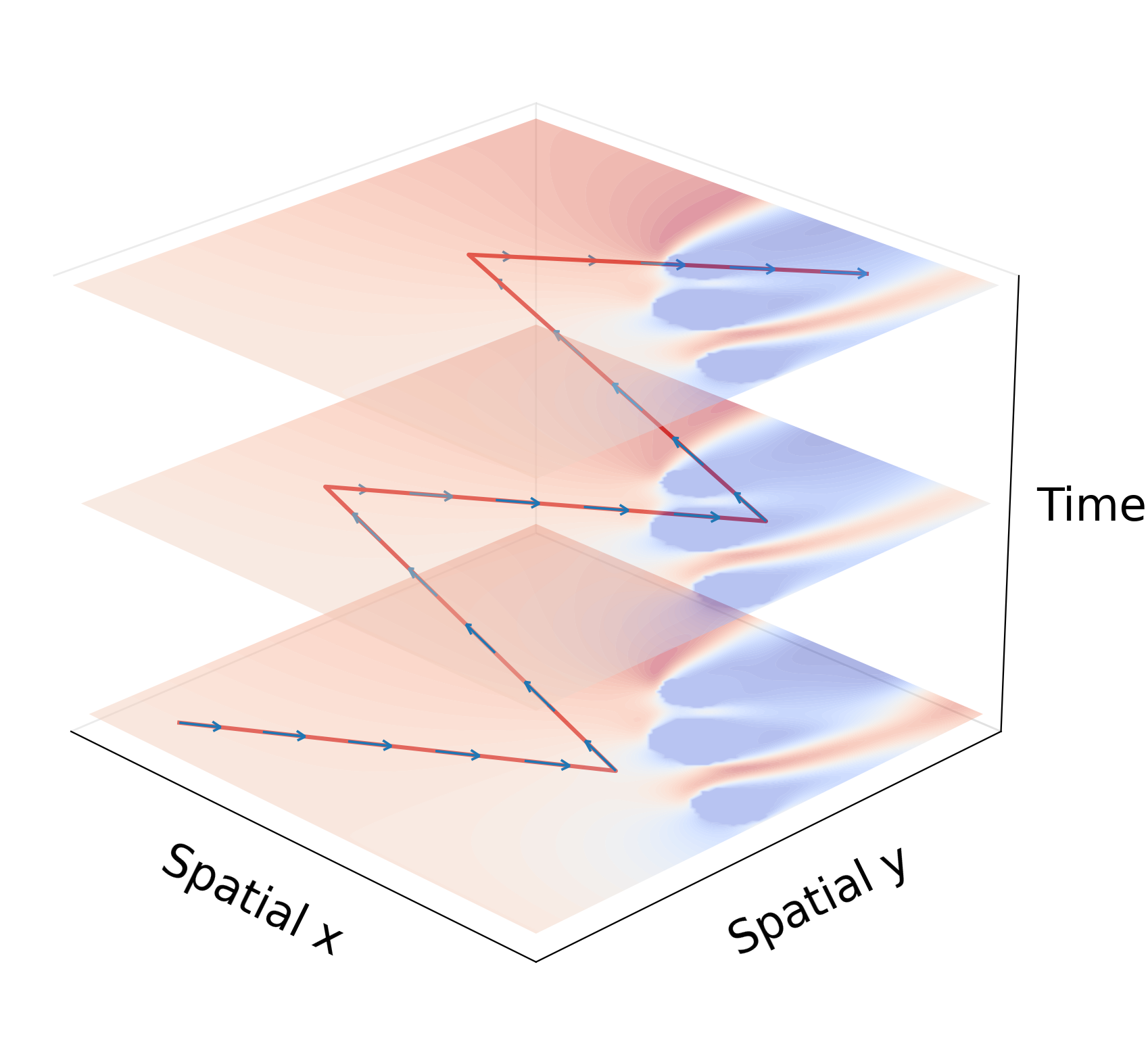}
    \caption{\textbf{Single-moving-sensor setup.} One sensor sweeps the flow field along a zig-zag trajectory.}
    \label{fig:single_sensor}
    \vspace{-1em}
\end{wrapfigure}

\paragraph{Tokamak plasma state}

Real-time access to the internal plasma profiles is a load-bearing capability for plasma control, and is increasingly critical for next-generation devices such as ITER (details in Appendix~\ref{app:tokamak_dataset}).
The state is the stack of three radial profiles $(n_e, T_e, \psi)$ on $\hat\rho\!\in\![0,1]$, evolving under stiff nonlinear transport PDEs conditioned on heating-power actions $a_t$. Diagnostics are local ECE temperature plus \emph{line-integrated} polarimetry/interferometry, making the observation operator inherently nonlinear.
Despite these challenges, \ourmethod attains the best score on three of the four metrics in Table~\ref{table:tkmk2}, namely $\mathcal{E}_{\mathrm{grad}}$, $\mathcal{E}_{\mathrm{spec}}$, and MA, where $\mathcal{E}_{\mathrm{grad}}$ is the relative-$L_2$ error in the profile gradient, critical for plasma downstream tasks. It stays within $14\%$ of the strongest baseline (Transformer), trading slightly increased mean prediction error for uncertainty estimates.

\subsection{Computation Efficiency}

\ourmethod{} reaches inference latencies compatible with the real-time control loops we target. It can be accelerated to $23$\,ms per filtering update on 2D systems and $37$\,ms per step on the tokamak benchmark (Table~\ref{table:tkmk2}), fitting within the $\sim\!100$\,ms budget of real-time tokamak plasma control. Under controlled ODE integration steps, \ourmethod{} is the fastest generative-model-based filter we benchmark, while sacrificing inference speed against deterministic baselines (FNO, Transformer) in exchange for full distributional estimates. A full discussion and ablations are provided in Appendix~\ref{app:efficiency}.

% \subsection{Probabilistic Modeling of the Posterior Distribution}
% \input{tex/results/2_uq}
% \subsection{Ablation Study of Test-time Training}
% \input{tex/results/3_ablation}
% \subsection{Generalization Test of the Learned Operator}
% \input{tex/results/4_generalize}

\section{Conclusion}
We introduced \ourmethod, a scalable Bayesian filtering framework that encodes the evolving posterior directly into the weights of a flow matching model via test-time training. By representing the belief state in weight space, \ourmethod inherently preserves the recursive structure of Bayesian filtering updates without relying on discrete particles or parametric Gaussian assumptions. Our analysis proves this update can exactly realize the Bayesian operator, with the outer-loop loss strictly bounding the $W_2$ operator error. Evaluations across chaotic PDEs and a highly non-linear tokamak plasma benchmark show that \ourmethod attains the best score in $8$ of $9$ PDE benchmark cells, retains its lead under a single moving sensor with $16384{:}1$ compression, and runs within the $\sim\!100$\,ms budget of real-time tokamak plasma control.
Limitations of \ourmethod and the corresponding future directions are discussed in Appendix~\ref{app:discussion}.

% \paragraph{Limitations.} The main limitations of \ourmethod, and the corresponding avenues for future work, are (i) scaling to \emph{3D flows} via latent-space filtering or low-rank TTT updates, (ii) accelerated inference with mean flow, rectified flow, or optimal-transport flow matching, (iii) mitigating the $30\%$ training overhead introduced by second-order gradients, and (iv) integrating collision-free \emph{active exploration} in the moving-sensor setting where observation paths heavily influence estimation accuracy. 

\bibliographystyle{plainnat}
\bibliography{references}

\clearpage
\appendix
\section{Notation}
\label{app:notation}

For ease of reading, Table~\ref{tab:notation} consolidates the symbols used throughout the paper. We follow the convention that \emph{subscripts} (\textit{e.g.}, $\theta_t,b_t$) index the discrete \emph{physical time} of the filtering process, while \emph{parenthesized superscripts} (\textit{e.g.}, $s^{(\tau)},p^{(\tau)}$) index the continuous \emph{flow time} internal to the generative model.

\begin{table}[htbp]
\centering
\small
\caption{Summary of notation used in the main paper and the appendix.}
\label{tab:notation}
\renewcommand{\arraystretch}{1.15}
\begin{tabular}{@{}p{0.27\linewidth} p{0.69\linewidth}@{}}
\toprule
\textbf{Symbol} & \textbf{Meaning} \\
\midrule
\multicolumn{2}{l}{\emph{Physical system and filtering problem}} \\
\midrule
$s_t \in \mathcal{X}\subseteq\R^{n}$ & Physical state at time $t$; $\mathcal{X}$ is the state space. \\
$o_t \in \mathcal{O}\subseteq\R^{k}$ & Observation at time $t$ (typically $k\ll n$); $\mathcal{O}$ is the observation space. \\
$a_t \in \mathcal{A}\subseteq\R^{m}$ & Control action at time $t$; $\mathcal{A}$ is the action space. \\
$h_t \coloneqq (o_{1:t}, a_{1:t-1})$ & History of past observations and actions up to time $t$. \\
$p(s_{t+1}\mid s_t,a_t)$ & Transition dynamics. \\
$p(o_t\mid s_t)$ & Observation model. \\
$b_t(s) \coloneqq p(s_t=s\mid h_t)$ & Belief state (posterior) at time $t$. \\
$\bar b_{t+1}$ & Propagated prior, the one-step pushforward of $b_t$ before the new observation. \\
$K:\mathcal{P}(\mathcal{X})\!\times\!\mathcal{O}\!\times\!\mathcal{A}\!\to\!\mathcal{P}(\mathcal{X})$ & Recursive Bayesian filtering operator: $b_{t+1} = K(b_t, o_{t+1}, a_t)$. \\
$T$ & Number of physical time steps in a trajectory. \\
\midrule
\multicolumn{2}{l}{\emph{Flow matching belief representation}} \\
\midrule
$\tau\in[0,1]$ & Flow time internal to the generative model. \\
$s^{(\tau)},\, p^{(\tau)}$ & Flow-time interpolant and its marginal distribution. \\
$u_\theta(s^{(\tau)},\tau)$ & Velocity field of the flow matching model with parameters $\theta$. \\
$\phi^{u_\theta}:\mathcal{X}\!\to\!\mathcal{X}$ & Flow map induced by integrating $u_\theta$ from $\tau{=}0$ to $\tau{=}1$. \\
$\theta_t \in \Theta$ & Flow matching weights at physical time $t$ (the BFF belief representation). \\
$b_{\theta_t}$ & Belief distribution induced by $\theta_t$. \\
$\bar b_{\theta_{t+1}}$ & Propagated model prior, the one-step pushforward of $b_{\theta_t}$ before the new observation. \\
$\Phi:\Theta\!\to\!\mathcal{P}(\mathcal{X}),\;\theta\mapsto b_\theta$ & Weight-to-belief lift. \\
\midrule
\multicolumn{2}{l}{\emph{Filtering operator in weight space}} \\
\midrule
$\cK(\theta_t,o_{t+1},a_t)$ & Weight-space filtering operator that lifts $K$ to $\Theta$: $\theta_{t+1}=\cK(\theta_t,o_{t+1},a_t)$. \\
$\cKphi$ & The $\phi$-parameterized realization of $\cK$ used by BFF. \\
$\eta$ & Inner-loop (test-time) learning rate; learned jointly with $\phi$. \\
$\sg[\,\cdot\,]$ & Stop-gradient operator. \\
\midrule
\multicolumn{2}{l}{\emph{Loss functions}} \\
\midrule
$\ell^\star(\theta)$ & Ideal filtering loss: target population flow-matching loss against $u_{K b_{\theta_t}}$ (Eq.~\eqref{eq:sec23-ideal-loss}). \\
$\hat\ell(\theta)$ & Empirical filtering loss: evaluable Monte-Carlo estimator of $\ell^\star$ (Eq.~\eqref{eq:sec23-anchor-loss}). \\
$\ell_\phi(\theta;\theta_t,o_{t+1},a_t)$ & BFF surrogate loss in latent space (Eq.~\eqref{eq:sec23-bff-surrogate}). \\
$F_N(\theta),\,G_N(\theta_t,o_{t+1},a_t)$ & Stacked model and target velocity branches of the empirical filtering loss. \\
$f_\phi,\,g_\phi$ & Learned latent heads taking values in the latent space $\cH$. \\
$\cH$ & Shared latent space of $f_\phi,g_\phi$. \\
$\Pi_N:\R^{N\times n}\!\to\!\cH$ & Latent map used in the latent amortization analysis. \\
$\mathcal{L}(\phi)$ & Outer-loop training loss (Eq.~\eqref{eq:training-loss}). \\
$\mathcal{E}(\phi)$ & Filtering operator error in $W_2^2$ (Eq.~\eqref{eq:operator-error}). \\
$N,\,N_j$ & Number of anchor samples and propagation Monte-Carlo samples. \\
$W_2$ & Wasserstein-2 distance. \\
\bottomrule
\end{tabular}
\end{table}

\section{Related Works}

\label{app:related_work}

\paragraph{Non-learning filters.}
Sequential state estimation for PDEs is primarily hindered by the difficulty of accurately representing the posterior distribution in high-dimensional, non-linear state spaces. Standard Kalman Filter variants, such as the EKF and EnKF \citep{kalman_new_1960,evensen_ensemble_2009}, rely on a parametric Gaussian assumption, which fails to capture the multi-modal or highly skewed distributions common in non-linear physical systems. Conversely, Particle Filtering \citep{gordon_novel_1993} offers a non-parametric representation of the posterior but is plagued by the "curse of dimensionality"; as the state dimension increases, the particle weights exponentially concentrate on a single sample, making standard PF computationally intractable for the discretized fields typically found in PDE simulations. To bypass these sampling issues, Variational Methods (e.g., 4D-Var or Variational Bayes) transform filtering into an optimization framework by maximizing the Evidence Lower Bound (ELBO). While these scale more effectively via adjoint-based gradients, they often converge to local optima and typically provide over-simplified uncertainty estimates, failing to recover the full, complex topology of the true posterior.

\paragraph{Diffusion- and flow matching posterior sampling-based filters.}
Following the paradigm of processing the entire trajectory for state estimation, like in variational inference, diffusion generative models like \citep{rozet_learning_2024,li2024learning} have been applied to PDE Bayesian filtering.
They execute posterior sampling at the trajectory level and utilize energy guidance of diffusion/flow matching models~\citep{ho_classifier-free_2022,feng_guidance_2025} to sample states conditioned on the observations. However, this approach has two fundamental limitations. The first one lies in the trajectory-level modeling, where the model is required to conduct posterior sampling of the entire trajectory in online filtering, which is computationally inefficient (as shown in Table~\ref{tab:efficiency}). Secondly, guidance does not, by itself, address the difficulties in high-dimensional posterior sampling. As proved in \citet{chung_diffusion_2024}, the exact guidance requires estimation of a high-dimensional integration of 
\begin{equation}
    \mathbb{E}_{x_1\sim p(x_1|x_t)}[\log p(y|x_1)],
\end{equation}
where $x_1$ denotes the clean sample, $x_t$ is the noisy sample, and $y$ is the label,
at every single integration time of diffusion sampling, and for flow matching, the same problem exists~\citep{feng_guidance_2025}. Therefore, contemporary guidance methods typically exhibit large approximation error \citep{feng_guidance_2025}. This results in unsatisfactory performance in high-dimensional complex distributions like the belief state in PDE state estimation, just as we observed in our experiments.

% using diffusion models to execute posterior sampling involves (1) the usage of guidance which introduces large approximation error; and (2) the generation of the entire trajectory which is more computationally intensive, typically requiring a larger model and more sampling steps.
% Therefore, we stick to the recursive belief posterior update formulation (Eq. \eqref{eq:bayesian_filtering}) in our work.

\paragraph{Transport- and optimal-transport-based filters.}
A complementary line of work frames the update step as learning a transport map from the prior to the posterior at each time step, including Brenier optimal-transport filters and ensemble coupling methods. A practical strength of these approaches is that the per-step map is typically close to identity, because the prior and posterior are often similar over a single filtering step, which makes the map easy to approximate. \ourmethod differs in a complementary way: instead of constructing a new transport map at every step, we keep the posterior inside the weights $\theta_t$ of a single flow matching model and evolve those weights through a meta-learned gradient step. This has two practical implications. First, once $\theta_t$ is updated, an arbitrary number of posterior samples can be drawn through the ODE integrator without re-solving an optimization problem. Second, the update is structurally Markovian at the weight level, which provides an inductive bias aligned with the recursive filtering operator and helps generalize beyond the training horizon. We therefore view transport-based and weight-space approaches as compatible perspectives on recursive filtering rather than competitors, and leave a direct empirical comparison to future work.

\paragraph{Flow-based generative models and test-time training.}
There have been rapid advances in the generative models, especially diffusion models and flow matching \cite{ho_denoising_2020,lipman_flow_2023}. They share the ideology of defining a forward and reverse probability path, where the forward path is simple and is used to conduct scalable training, while the reverse path is used to generate samples via numerical simulations of ODEs and SDEs. Flow matching has demonstrated accelerated sampling speed and better sample quality, which motivates us to adopt it in our work.

% We aim to unlock the potential in modern generative models in Bayesian filtering using online-adapted neural network weights of flow matching.
The online gradient descent algorithm in \ourmethod is closely related to the advances in the research field of Test-Time Training (TTT) \citep{sun_test-time_2020,sun_learning_2025} and Fast Weights \citep{schlag_linear_2021}, where the model parameters themselves evolve to represent the belief over the system state. This paradigm is rooted in the concept of dual-speed learning, where ``slow weights'' capture long-term dynamics and ``fast weights'' adapt to transient information, and it extends the traditional concept of associative memory~\citep{schlag_linear_2021}. Recently, TTT has also been applied to flow-based generative models \citep{dalal_one-minute_2025}.
This motivates us to treat inference as a continuous optimization process, using self-supervised ``inner loops'' to update the flow matching model's weights during the test phase.

\section{Limitations, Discussion and Future Work}
\label{app:discussion}

\paragraph{From existence to finite-capacity approximation.}
Proposition~\ref{prop:appendix-expressivity} is an existence result: it guarantees that the functional form of our surrogate loss is rich enough to represent $K$, but does not quantify the approximation error that arises when $f_\phi, g_\phi$ have finite capacity (for instance, the MLP projections we use in practice). Characterizing how this error depends on posterior complexity and embedding capacity is an interesting direction for future theoretical work. This assumption on the TTT block is a condition on its formulation, not on the specific backbone architecture.

% \paragraph{Scaling to 3D flows.}
% Our experiments target 1D and 2D systems whose full ambient state is tractable to push through a flow-matching backbone at every step. Scaling to fully 3D systems (e.g., 3D Navier--Stokes turbulence) is likely to require operating on a learned latent representation of the state rather than the ambient field, and may benefit from low-rank or LoRA-style TTT updates that reduce the per-step weight-update cost. We have not explored either direction in this paper, and they form a natural next step.

\paragraph{Accelerated inference via more efficient flow samplers.}
The midpoint solver with $5$ ODE steps already brings per-step latency on the tokamak benchmark to $\sim\!37$\,ms (Appendix~\ref{app:efficiency}), but ODE integration remains the dominant cost in our inference pipeline. Recent flow-matching variants such as mean flow, rectified flow, and optimal-transport flow matching are designed to produce nearly straight trajectories and can in principle generate high-quality samples in a handful of steps (or even one). Plugging these training objectives into \ourmethod is largely orthogonal to its weight-space update mechanism and would directly translate into faster inference. We have not pursued this in the current paper but consider it a high-value direction.

\paragraph{Training overhead from second-order gradients.}
The inner-loop second-order derivative used by the TTT update inflates the training-step cost by roughly $30\%$ over vanilla flow matching at a comparable batch size (Appendix~\ref{app:efficiency}). This overhead is moderate per step but does compound over long training runs. Reducing it via truncated backpropagation through the inner loop, implicit-differentiation tricks, or learned approximations of the weight update is a natural direction we have not pursued.

\paragraph{Weight-as-feature for downstream tasks.}
To alleviate particle-number limits on downstream tasks such as estimating statistical properties of the posterior, it is a promising direction to feed the TTT-updated weights themselves into downstream models that predict quantities of interest end-to-end. Our work demonstrates the feasibility of manipulating flow-based generative-model weights to tackle complex distributional operations in data assimilation, which opens such possibilities.

\paragraph{Designing task-specific surrogate losses.}
Proposition~\ref{prop:appendix-expressivity} indicates that our surrogate loss with sufficiently expressive $f_\phi,g_\phi$ can represent the filtering operator exactly, but our MLP or linear projections are a deliberate minimal choice. Incorporating physical structure or constraints directly into $\ell_\phi$ or the inner-loop optimization (e.g., a conservation-aware projection after each TTT step) is a natural direction that we have not pursued here.

\paragraph{Active, safety-aware sensor exploration.}
Our 2D moving-sensor benchmark uses a fixed, predefined zig-zag trajectory, which simplifies evaluation but sidesteps the practical question of how to choose informative measurement paths online. Since observation paths heavily influence estimation accuracy in our experiments, coupling \ourmethod with active-exploration algorithms that simultaneously maximize information gain and respect physical safety (e.g., collision-free trajectories around obstacles) is a key open direction.

\section{Implementation Details}
\label{app:implementation}

This section unpacks the practical instantiation of \ourmethod, walking through the backbone, the choice of online-updated weights $\theta$, the architecture of the surrogate-loss heads $f_\phi,g_\phi$, and the training pipeline. We organize the description around three concrete design questions: (i) which subset of the flow matching model constitutes $\theta$, (ii) how $f_\phi$ ingests the high-dimensional $\theta$, and (iii) how $g_\phi$ fuses the observation $o_{t+1}$ and action $a_t$.

\paragraph{Backbone: a DiT with per-layer TTT modules.}
Our flow matching velocity field $u_\theta(s^{(\tau)},\tau)$ is parameterized by a Diffusion Transformer (DiT)~\citep{peebles_scalable_2023}. Inside each transformer layer we insert a TTT module after self-attention, so that the layer applies the sequence \texttt{Attention} $\to$ \texttt{TTT} $\to$ \texttt{SSMGating}. By default, the TTT module performs a second multi-head self-attention on top of the first attention's output, parameterized by a single per-layer weight $W^{(\ell)}\in\R^{3H\times H}$ that stacks the Q/K/V projections of this second attention; the SSM-style gate then residually adds the TTT output back to the attention output. $W^{(\ell)}$ is the only online-updated parameter in the layer: it is evolved across physical time by a single gradient step on the inner surrogate loss (Eq.~\eqref{eq:sec23-bff-surrogate}). All other DiT components (token embedding, the layer's first-attention QKV/output projections, MLP sub-layers, AdaLN modulation, output head) form a static backbone shared across all physical steps. Figure~\ref{fig:ttt-arch} illustrates the integration and the per-step weight update.

\begin{figure}[t]
\centering
\begin{tikzpicture}[
    >={Latex[length=2mm,width=1.6mm]},
    font=\small,
    every node/.style={align=center, inner sep=2.5pt},
    static/.style={draw, rounded corners=2pt, fill=gray!10, minimum height=8mm, minimum width=3.0cm},
    online/.style={draw, rounded corners=2pt, fill=orange!22, minimum height=8mm, minimum width=3.0cm, very thick},
    digest/.style={draw, rounded corners=2pt, fill=white, minimum height=8mm, minimum width=5.2cm},
    bonline/.style={draw, rounded corners=2pt, fill=orange!22, minimum height=8mm, minimum width=5.2cm, very thick},
    head/.style={draw, rounded corners=2pt, fill=blue!12, minimum height=8mm, minimum width=2.4cm},
    losssty/.style={draw, rounded corners=2pt, fill=red!12, minimum height=8mm, minimum width=5.2cm},
    bigonline/.style={draw, rounded corners=2pt, fill=orange!22, minimum height=8mm, minimum width=5.2cm, very thick},
    flow/.style={->, thick},
    sg/.style={->, thick, dashed},
]
%--- Panel (a): one DiT layer (hardcoded y so titles align with panel (b)) ---
\node[font=\bfseries] (titleA) at (0, 0)    {(a) DiT layer with TTT};
\node[static]         (xin)    at (0, -1.1) {hidden state $x$};
\node[static]         (sa)     at (0, -2.2) {self-attention};
\node[online]         (ttt)    at (0, -3.3) {TTT block\\\scriptsize weight $W^{(\ell)}\!\in\!\R^{3H\times H}$};
\node[static]         (gate)   at (0, -4.4) {SSM gate};
\node[static]         (out)    at (0, -5.5) {to MLP};

\draw[flow] (xin)  -- (sa);
\draw[flow] (sa)   -- (ttt);
\draw[flow] (ttt)  -- (gate);
\draw[flow] (gate) -- (out);
\draw[flow] (sa.east) -- ++(7mm,0) |- (gate.east);

%--- Panel (b): inner-loop W update at step t (titles share y=0 with panel (a); rows match panel (a)) ---
\node[font=\bfseries] (titleB) at (7.2, 0)    {(b) Inner-loop $W^{(\ell)}$ update at step $t$};
\node[bonline]        (Wprev)  at (7.2, -1.1) {$W^{(\ell)}_{t-1}$};
\node[digest]         (dig)    at (7.2, -2.2) {digest $W^{(\ell)}_{t-1}\,x^{(\ell)}$};
\node[head]           (f)      at (5.6, -3.3) {$f_\phi(W)$};
\node[head]           (g)      at (9.0, -3.3) {$g_\phi(\sg[W_{t-1}],o_t,a_{t-1})$};
\node[losssty]        (L)      at (7.2, -4.4) {$\ell_\phi=\|f_\phi - g_\phi\|^2$};
\node[bigonline]      (Wnew)   at (7.2, -5.5) {$W^{(\ell)}_t = W^{(\ell)}_{t-1} - \eta\,\nabla_W\,\ell_\phi$};

\draw[flow] (Wprev)     -- (dig);
\draw[flow] (dig.south) -- (f.north);
\draw[sg]   (dig.south) -- (g.north);
\draw[flow] (f.south)   -- (f.south |- L.north);
\draw[flow] (g.south)   -- (g.south |- L.north);
\draw[flow] (L)         -- (Wnew);
\end{tikzpicture}
\caption{Architecture and inner-loop update of the TTT block.
\textbf{(a)} Inside every transformer layer of the DiT, a TTT module sits between self-attention and an SSM-style residual gate. It performs a \emph{second} multi-head self-attention on the first attention's output, parameterized by a single per-layer weight $W^{(\ell)}\!\in\!\R^{3H\times H}$ stacking $Q,K,V$.
\textbf{(b)} At each physical step $t$, $W^{(\ell)}$ is updated by one gradient step on the inner surrogate loss $\ell_\phi$. Both heads $f_\phi=\mathtt{theta\_proj}(W\!\cdot\!x^{(\ell)})$ and $g_\phi=\mathtt{target\_proj}([\sg(W_{t-1}\!\cdot\!x^{(\ell)}),o_t,a_{t-1}])$ share the digest $W^{(\ell)}_{t-1} x^{(\ell)}$, but $g_\phi$ stop-gradients it (dashed arrow), realizing the asymmetric dependence on $\theta$ in Eq.~\eqref{eq:sec23-bff-surrogate}. Orange highlights the online-updated $W^{(\ell)}$; gray boxes are the static backbone shared across all physical steps.}
\label{fig:ttt-arch}
\end{figure}

\paragraph{Online-updated weights $\theta$.}
The online state $\theta$ comprises only the per-layer TTT weight matrices $W^{(\ell)}\in\R^{3H\times H}$, one per transformer layer (with $H$ the model hidden size); each $W^{(\ell)}$ stacks the query, key, and value projections that drive the second self-attention computed on top of the layer's standard attention output. The full belief representation is the concatenation $\theta=\big(W^{(1)},\dots,W^{(L)}\big)$, where $L$ is the number of transformer layers. Each $W^{(\ell)}$ is updated independently of the others by the inner step, so the per-step gradient computation parallelizes across layers and depth. The static DiT backbone (token embedding, regular attention QKV/output, MLP sub-layers, AdaLN, output head) is \emph{not} part of $\theta$ and is shared across all physical steps; it encodes the prior knowledge of the state distribution and dynamics learned during outer training. With our defaults ($H{=}256$, $L{=}6$), $\theta$ contains $\sim\!1.2\!\times\!10^6$ scalars, which is of the same order as the static backbone but small enough that a single backward pass through the inner loss is the dominant per-step cost.

\paragraph{Per-layer auxiliary parameters.}
Three additional per-layer parameter groups parameterize the inner objective and are trained \emph{only} by the outer loss:
\begin{itemize}
  \item An initial weight $W_0^{(\ell)}\in\R^{*\times H}$ from which the inner loop starts at the beginning of every trajectory.
  \item A scalar inner step size $\eta^{(\ell)}\in\R$, initialized to $0.01$ and learned jointly with $\phi$.
  \item A learnable probe vector $x^{(\ell)}\in\R^{H}$ that summarizes $W^{(\ell)}$ via the contraction $W^{(\ell)} x^{(\ell)}$ (see below).
\end{itemize}
We collect all three groups together with the heads $f_\phi,g_\phi$ defined next under the meta-parameter symbol $\phi$ used in the main text.

\paragraph{Surrogate-loss heads $f_\phi$ and $g_\phi$.}
A direct projection from the high-dimensional matrix $W^{(\ell)}\in\R^{*\times H}$ into a feature space would be parameter-heavy and break differentiability through the inner step. We instead summarize $W^{(\ell)}$ by its action on the learnable probe $x^{(\ell)}$:
\begin{equation*}
  \mathrm{pred}^{(\ell)} \;\coloneqq\; W^{(\ell)} x^{(\ell)} \;\in\; \R^{*},
  \qquad * \in \{H,\ 3H\},
\end{equation*}
which is a smooth, low-dimensional digest of the full matrix that retains the linear dependence on $W^{(\ell)}$ needed for the inner gradient.

The model channel $f_\phi$ then projects $\mathrm{pred}^{(\ell)}$ into observation space:
\begin{equation*}
  f_\phi^{(\ell)}(\theta) \;=\; \mathtt{theta\_proj}^{(\ell)}\!\big(W^{(\ell)} x^{(\ell)}\big) \;\in\; \R^{\mathrm{obs\_dim}},
\end{equation*}
where $\mathtt{theta\_proj}^{(\ell)}$ is a per-layer MLP (or, optionally, a linear map).

The target channel $g_\phi$ fuses the three inputs $(\sg[\theta],o_{t+1},a_t)$ by raw feature-dimension concatenation followed by a single projection:
\begin{equation*}
  g_\phi^{(\ell)}(\sg[\theta],o_{t+1},a_t)
  \;=\;
  \mathtt{target\_proj}^{(\ell)}\!\Big(
    \big[\,\mathrm{sg}\!\big(W^{(\ell)} x^{(\ell)}\big),\; o_{t+1},\; a_t\,\big]
  \Big)
  \;\in\; \R^{\mathrm{obs\_dim}}.
\end{equation*}
Three points are worth noting. First, the heads are valued in observation space, so the latent space $\cH$ in the main text is concretely $\R^{\mathrm{obs\_dim}}$. Second, the weight digest $W^{(\ell)} x^{(\ell)}$ entering $g$ is detached: the stop-gradient operator $\sg[\cdot]$ blocks the inner-loop gradient from flowing into $g$'s side of the residual, so the inner update direction is shaped only by $f$'s $W^{(\ell)}$ path, while $g$'s own weights $\mathtt{target\_proj}^{(\ell)}$ remain in the outer-loop graph and are still trained by the outer flow-matching loss. Third, the $g$-side encoding of $o_{t+1}$ and $a_t$ is intentionally minimal (raw concatenation followed by a single per-layer linear or MLP projection), because the asymmetry between $f$ (which depends on the live $\theta$) and $g$ (which depends only on the new information) is what makes a single inner gradient step on $\ell_\phi$ realize a Bayesian-style update; richer $g$-side encoders are admissible but unnecessary in practice.

\paragraph{Inner update at inference.}
Per layer, at every physical step, we perform the differentiable single gradient step (Algorithm~\ref{alg:ttt_test})
\begin{equation*}
  W^{(\ell)}_{t+1}
  \;=\;
  W^{(\ell)}_{t} \;-\; \eta^{(\ell)}\,\nabla_{W^{(\ell)}_{t}}\,
  \big\|\,f_\phi^{(\ell)}(\theta_t) - g_\phi^{(\ell)}(\sg[\theta_t], o_{t+1}, a_t)\,\big\|^{2},
\end{equation*}
implemented by \texttt{torch.autograd.grad} with \texttt{create\_graph=True} so that the inner update is itself differentiable for the outer-loop training described next. We restrict to a single inner step both for efficiency and because the expressivity result in Proposition~\ref{prop:appendix-expressivity} already covers single-step updates.

\paragraph{Training pipeline.}
Outer training proceeds in two stages.
\textbf{(1) Pretraining the static backbone.} We pretrain the DiT backbone together with $W_0$ on the marginal state distribution $p(s)$ with a standard flow-matching loss before outer training, which gives the inner loop a better starting point at small cost since no spatio-temporal dynamics are involved at this stage.
\textbf{(2) Outer-loop training.} We then jointly train all parameter groups (DiT backbone, $W_0^{(\ell)}$, $\eta^{(\ell)}$, $x^{(\ell)}$, $f_\phi$, and $g_\phi$) with the unrolled outer-loop loss Eq.~\eqref{eq:training-loss}. The bottleneck of this stage is the DiT forward pass through the unrolled trajectory rather than the lightweight TTT modules; we therefore subsample physical time steps when evaluating the flow-matching loss and initialize the gating with $\alpha\!=\!0.1$ to stabilize early outer-loop dynamics. We optimize with AdamW (learning rate $10^{-4}$, weight decay $0$, gradient clip $1.0$) using a cosine schedule with $1{,}000$-step linear warmup; $\eta^{(\ell)}$ uses a $10\!\times\!$ smaller learning rate to keep the inner step size stable in early training.

\paragraph{Hyperparameters and efficiency.}
Per-experiment hyperparameter choices (DiT depth/width, training horizon $T$, batch size, total outer steps) are reported in our released code, and sensor configurations and dataset specifics are reported in Appendix~\ref{app:dataset}. As reported in Table~\ref{tab:efficiency}, the per-step inference overhead from the TTT update is negligible (one extra backward pass restricted to the per-layer $W^{(\ell)}$ matrices), and outer-training wall-clock time is comparable to standard flow-matching training of a DiT of the same backbone size.

\section{BFF Algorithm Pseudocode}
The pseudo-code for the training and inference loops of \ourmethod{} are summarized in Algorithm~\ref{alg:ttt_train} and Algorithm~\ref{alg:ttt_test}, respectively.

\begin{algorithm}[tb]
   \caption{\ourmethod{} Training.}
   \label{alg:ttt_train}
\begin{algorithmic}
    \STATE {\bfseries Input:} Dataset $s_{[1:T]}, o_{[1:T]}, a_{[1:T-1]} \sim P_{\text{data}}$.
    \STATE {\bfseries Output:} Trained $\phi, \theta_0, \eta$.
    \STATE Pretrain $\theta_0$ on $s_{t}\sim P_{\text{data}},~ t\sim U\{1:T\}$ using standard flow matching loss
    \FOR{$i=0$ {\bfseries to} train steps  \hfill $\triangleright$ Outer Loop \renewcommand{\algorithmicdo}{}}
    \STATE $\mathcal{L} \gets 0$ 
    \FOR{$t = 1$ {\bfseries to} $T$ {\bfseries do} \hfill $\triangleright$ Inner Loop \renewcommand{\algorithmicdo}{}}
        \STATE $\theta_{t}\gets \theta_{t-1} - \eta \nabla_{\theta_{t-1}} \|f_\phi(\theta_{t-1}) - g_\phi(\sg[\theta_{t-1}], o_t, a_{t-1})\|^2$
        \IF{$t \in$ loss steps}
            \STATE Sample $\tau\sim U(0,1),~ s_t^{(0)}\sim\mathcal{N}(0,I)$
            \STATE $s_t^{(\tau)} \gets \tau s_t^{(1)} + (1-\tau) s_t^{(0)}$
            \STATE $\mathcal{L} \gets \mathcal{L} + \|u_{\theta_t}(s_t^{(\tau)},\tau) - (s_t^{(1)} - s_t^{(0)})\|^2$
        \ENDIF
    \ENDFOR
    \STATE Optimize $\phi,\theta_0,\eta$ with $\nabla_{\phi,\theta_0,\eta}\mathcal{L}$
    \ENDFOR
    \renewcommand{\algorithmicdo}{\textbf{do}}
\end{algorithmic}
\end{algorithm}

\begin{algorithm}[tb]
   \caption{\ourmethod{} Inference.}
   \label{alg:ttt_test}
\begin{algorithmic}
    \STATE {\bfseries Input:} Observation sequence $o_{[1:T]}, a_{[1:T-1]}$, trained model $\phi, \theta_0, \eta$.
    \STATE {\bfseries Output:} Sampled states $\hat{s}_{[1:T]}$.
    \FOR{$t = 1$ {\bfseries to} $T$ {\bfseries do} \hfill $\triangleright$ Physical Time Loop \renewcommand{\algorithmicdo}{}}
        \STATE $\theta_{t}\gets \theta_{t-1} - \eta \nabla_{\theta} \ell_\phi(\theta;\theta_{t-1}, o_t, a_{t-1})\big|_{\theta=\theta_{t-1}}$
        \STATE Sample $s_t^{(0)} \sim \mathcal{N}(0, I)$
        \FOR{$\tau=0$ {\bfseries to} $1$ {\bfseries step} $d\tau$ {\bfseries do} \hfill $\triangleright$ Flow ODE Integration \renewcommand{\algorithmicdo}{}}
            \STATE $s_t^{(\tau+d\tau)}\gets s_t^{(\tau)} + u_{\theta_t}(s_t^{(\tau)},\tau) \cdot d\tau$
        \ENDFOR
        \STATE $\hat{s}_t \gets s_t^{(1)}$
    \ENDFOR
    \renewcommand{\algorithmicdo}{\textbf{do}}
\end{algorithmic}
\end{algorithm}

\section{Surrogate (Inner) Loss Design}
\label{app:two-channel}

This appendix provides the proofs of the lemmas and proposition stated in Section~\ref{sec:ttt}. 

\subsection{Empirical Filtering Loss}
\label{app:two-channel-anchor}

This subsection proves the main-text Proposition~\ref{prop:sec23-emp-filt} of the \emph{Empirical filtering loss} paragraph. Three helper facts are used: a score--velocity affine bijection that is standard in flow matching, an additive split of the Bayesian update at the score level, and a conditional flow-matching identity that turns the propagation channel into an unbiased target.

\begin{lemma}[Score--velocity affine bijection]
\label{lem:appA-score-velocity}
For the linear-Gaussian probability path $s^{(\tau)}=\tau s+(1-\tau)s^{(0)}$ with $s\sim q$ and $s^{(0)}\sim\mathcal{N}(0,I)$, the marginal velocity field $u_q(s,\tau)=\E[s'-s^{(0)}\given s^{(\tau)}=s]$ and the marginal score $\nabla_s\log q^{(\tau)}(s)$ are related by
\begin{equation}
    u_q(s,\tau)
    \;=\;
    \frac{s}{\tau}
    +
    \frac{1-\tau}{\tau}\,\nabla_s\log q^{(\tau)}(s),
    \qquad
    \nabla_s\log q^{(\tau)}(s)
    \;=\;
    \frac{\tau\,u_q(s,\tau)-s}{1-\tau},
    \label{eq:appA-affine}
\end{equation}
for any $\tau\in(0,1)$ and any $q\in\mathcal{P}(\mathcal{X})$ with differentiable marginal $q^{(\tau)}$.
\end{lemma}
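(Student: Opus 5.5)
The plan is to compute both the marginal velocity and the marginal score as conditional expectations given $s^{(\tau)}=s$, and then eliminate the unobserved endpoint between them.

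Fix $\tau\in(0,1)$ and write $X=s^{(\tau)}=\tau S+(1-\tau)Z$, with $S\sim q$ and $Z=s^{(0)}\sim\mathcal{N}(0,I)$ independent. Conditional on $S=s'$, $X$ is Gaussian, $\mathcal{N}(\tau s',(1-\tau)^2 I)$, so the marginal density is
\begin{equation*}
q^{(\tau)}(s)=\int \mathcal{N}\big(s;\tau s',(1-\tau)^2 I\big)\,q(\mathrm{d}s').
\end{equation*}

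\textbf{Step 1 (Tweedie-type identity).} Differentiate under the integral. This is justified because the Gaussian kernel and its gradient are bounded uniformly in $s'$ on compact sets of $s$, so dominated convergence applies. The kernel's gradient is
\begin{equation*}
\nabla_s\mathcal{N}\big(s;\tau s',(1-\tau)^2 I\big)=-\frac{s-\tau s'}{(1-\tau)^2}\,\mathcal{N}\big(s;\tau s',(1-\tau)^2 I\big).
\end{equation*}
Dividing by $q^{(\tau)}(s)$ and recognizing the posterior of $S$ given $X=s$ gives
\begin{equation*}
\nabla_s\log q^{(\tau)}(s)=-\frac{s-\tau\,\E[S\given X=s]}{(1-\tau)^2}.
\end{equation*}
Since $s-\tau S=(1-\tau)Z$ on the event $X=s$, this equals $-\E[Z\given X=s]/(1-\tau)$. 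Hence
\begin{equation*}
\E[Z\given X=s]=-(1-\tau)\nabla_s\log q^{(\tau)}(s).
\end{equation*}

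\textbf{Step 2 (velocity in terms of the noise posterior).} From $X=\tau S+(1-\tau)Z$ we get $S=(X-(1-\tau)Z)/\tau$. Therefore
\begin{equation*}
S-Z=\frac{X}{\tau}-\frac{1-\tau}{\tau}Z-Z=\frac{X}{\tau}-\frac{Z}{\tau}.
\end{equation*}
Taking conditional expectations given $X=s$:
\begin{equation*}
u_q(s,\tau)=\frac{s}{\tau}-\frac{1}{\tau}\,\E[Z\given X=s].
\end{equation*}

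\textbf{Step 3 (combine).} Substituting Step 1 into Step 2 yields
\begin{equation*}
u_q(s,\tau)=\frac{s}{\tau}+\frac{1-\tau}{\tau}\nabla_s\log q^{(\tau)}(s),
\end{equation*}
which is the first identity. The map $v\mapsto s/\tau+\frac{1-\tau}{\tau}v$ has nonzero slope for $\tau\in(0,1)$, so it is an affine bijection. Solving for the score gives the second identity, $\nabla_s\log q^{(\tau)}(s)=(\tau u_q(s,\tau)-s)/(1-\tau)$.

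\textbf{Main obstacle.} The only delicate point is regularity, and it is mild.
\begin{itemize}
\item $q^{(\tau)}>0$ everywhere, because it is a Gaussian convolution, so conditional expectations are defined pointwise rather than merely almost everywhere.
\item Differentiating under the integral is justified as in Step 1.
\item $\E|S|$ must be finite (or handled conditionally) so that $\E[S\given X=s]$ exists. Finiteness of the Gaussian posterior moment for each fixed $s$ already suffices, because the likelihood factor $\mathcal{N}(s;\tau s',(1-\tau)^2I)$ decays like $e^{-c|s'|^2}$ in $s'$.
\end{itemize}
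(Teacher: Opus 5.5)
Your proof is correct and follows essentially the same route as the paper: both write the marginal velocity as an affine function of a posterior mean given $s^{(\tau)}=s$, then convert that posterior mean into the score via Tweedie's formula. The differences are cosmetic. You eliminate the data endpoint, work with the noise posterior $\E[s^{(0)}\mid s^{(\tau)}=s]$, and derive Tweedie directly by differentiating the Gaussian convolution under the integral. The paper instead eliminates the noise, writes $u_q=(\E[s'\mid s^{(\tau)}=s]-s)/(1-\tau)$, and cites Tweedie's formula for $\E[\tau s'\mid s^{(\tau)}=s]$.
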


\begin{proof}
The path identity $s^{(\tau)}=\tau s'+(1-\tau)s^{(0)}$ rearranges to $s'-s^{(0)}=(s'-s^{(\tau)})/(1-\tau)$, hence taking conditional expectations gives
\begin{equation}
    u_q(s,\tau)
    \;=\;
    \frac{\E[s'\given s^{(\tau)}=s]-s}{1-\tau}.
    \label{eq:appA-velocity-as-mean}
\end{equation}
The conditional density of $s^{(\tau)}$ given $s'$ is $\mathcal{N}(\tau s',(1-\tau)^2 I)$. We use the following form of Tweedie's formula~\citep{efron2011tweedie}: if $X=\mu+\sigma Z$ with $\mu\sim\pi$, $Z\sim\mathcal{N}(0,I)$ independent, and marginal density $m(x)=\int\mathcal{N}(x;\mu,\sigma^2 I)\,\pi(\mu)\,d\mu$, then
\begin{equation}
    \E[\mu\given X=x]\;=\;x+\sigma^{2}\,\nabla_x\log m(x).
    \label{eq:appA-tweedie-general}
\end{equation}
We apply Eq.~\eqref{eq:appA-tweedie-general} with $X=s^{(\tau)}$, $\mu=\tau s'$ (so the prior on $\mu$ is the pushforward of $q$ under $s'\mapsto\tau s'$, and the marginal of $X$ is exactly $q^{(\tau)}$), and $\sigma=1-\tau$. This yields
\begin{equation}
    \E[\tau s'\given s^{(\tau)}=s]\;=\;s+(1-\tau)^{2}\,\nabla_s\log q^{(\tau)}(s),
    \label{eq:appA-tweedie-applied}
\end{equation}
and dividing by $\tau$ gives
\begin{equation}
    \E[s'\given s^{(\tau)}=s]
    \;=\;
    \frac{s}{\tau}+\frac{(1-\tau)^2}{\tau}\,\nabla_s\log q^{(\tau)}(s).
    \label{eq:appA-tweedie}
\end{equation}
Substituting Eq.~\eqref{eq:appA-tweedie} into Eq.~\eqref{eq:appA-velocity-as-mean} gives the first identity in Eq.~\eqref{eq:appA-affine}; the second is its inverse.
\end{proof}

The map Eq.~\eqref{eq:appA-affine} is the technical bridge between scores and velocities under the FM probability path: every additive decomposition at the score level induces an additive decomposition at the velocity level with gain factor $\alpha(\tau)=(1-\tau)/\tau$.

\begin{lemma}[Bayes split at the score level]
\label{lem:appA-bayes-score}
Let $\pi\in\mathcal{P}(\mathcal{X})$ be a prior with differentiable density and $p(o\given s)$ a likelihood with finite normalization $Z=\int p(o\given s)\,\pi(s)\,ds\in(0,\infty)$. Define the Bayes posterior $\pi^o(s):=\pi(s)\,p(o\given s)/Z$. Then
\begin{equation}
  \nabla_s\log \pi^o(s)\;-\;\nabla_s\log\pi(s)
  \;=\;
  \nabla_s\log p(o\given s).
  \label{eq:appA-score-correction}
\end{equation}
\end{lemma}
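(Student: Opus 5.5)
The plan is a direct computation: take logarithms of the posterior density and differentiate. The only care needed is to say where the densities are positive, so that the logarithms and their gradients exist.

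First, I restrict to the open set $U:=\{s:\pi(s)>0,\ p(o\given s)>0\}$. On $U$ the posterior density $\pi^o(s)=\pi(s)\,p(o\given s)/Z$ is strictly positive. Since $Z\in(0,\infty)$, the logarithm splits additively:
\begin{equation*}
    \log\pi^o(s)\;=\;\log\pi(s)+\log p(o\given s)-\log Z .
\end{equation*}

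Second, I differentiate this identity in $s$. The term $\log Z$ is a constant independent of $s$, because $s$ has been integrated out in its definition, so its gradient vanishes. The gradient of $\log\pi$ exists on $U$ because $\pi$ is differentiable and positive there. Differentiability of $s\mapsto p(o\given s)$ is implicit in the statement, since the right-hand side of Eq.~\eqref{eq:appA-score-correction} is written; I will state it explicitly as a standing assumption. Subtracting $\nabla_s\log\pi(s)$ then gives Eq.~\eqref{eq:appA-score-correction} pointwise on $U$.

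The main obstacle, though a mild one, is the domain issue: outside $U$ the scores are undefined. I would handle this by declaring that the identity is meant $\pi^o$-almost everywhere, or equivalently on the support of the posterior. That is the only region where the lemma is later used, since in Proposition~\ref{prop:sec23-emp-filt} the scores are evaluated on path-smoothed, everywhere-positive densities.

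Finally, I would remark that the same argument applies verbatim to the smoothed marginals $\pi^{(\tau)}$ and $\bar p^{(\tau)}(o\given\cdot)$ used in Eq.~\eqref{eq:sec23-FG}. Combined with Lemma~\ref{lem:appA-score-velocity}, this yields the additive propagation-plus-likelihood velocity split with gain factor $(1-\tau)/\tau$.
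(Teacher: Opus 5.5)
Your proof is correct and follows the paper's argument: Bayes' formula gives $\log\pi^o(s)=\log\pi(s)+\log p(o\given s)-\log Z$, and differentiating in $s$ removes the $s$-independent constant $\log Z$. Your restriction to the set where $\pi$ and $p(o\given s)$ are positive, together with an explicit differentiability assumption on the likelihood, is a harmless refinement that the paper leaves implicit.
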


\begin{proof}
Bayes' formula gives
\begin{equation}
    \log \pi^o(s) - \log\pi(s)
    \;=\;
    \log p(o\given s) - \log Z.
    \label{eq:appA-log-split}
\end{equation}
The normalization $\log Z$ does not depend on $s$, so differentiating in $s$ removes it and yields Eq.~\eqref{eq:appA-score-correction}.
\end{proof}

In the inner-loss derivation we apply Lemma~\ref{lem:appA-bayes-score} to the prior--posterior pair $(\bar b_{\theta_{t+1}},\,K b_{\theta_t})$ defined relative to the model belief $b_{\theta_t}$, where $\bar b_{\theta_{t+1}}(s):=\int p(s\given s',a_t)\,b_{\theta_t}(s')\,ds'$ is the propagated model prior and $K b_{\theta_t}$ is the Bayes update of $b_{\theta_t}$. The same identity Eq.~\eqref{eq:filter-operator} applies with $b_t$ replaced by $b_{\theta_t}$.

\begin{lemma}[Conditional flow-matching identity]
\label{lem:appB-condFM}
Let $s'\sim b_{\theta_t}$, $s^p\sim p(\,\cdot\,\given s',a_t)$, $s^{(0)}\sim\mathcal{N}(0,I)$, $\tau\sim U(0,1)$ (mutually independent given $s'$), and $s^{(\tau)}=\tau s^p+(1-\tau)s^{(0)}$ as in the anchor sampling of Section~\ref{sec:ttt}. Then
\begin{equation}
    \E\!\left[s^p - s^{(0)}\,\Big|\, s^{(\tau)}=s,\;\tau\right]
    \;=\;
    u_{\bar b_{\theta_{t+1}}}(s,\tau).
    \label{eq:appB-condFM}
\end{equation}
\end{lemma}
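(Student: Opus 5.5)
The plan is to reduce the statement to the standard conditional flow-matching marginalization argument. Specifically, we show that the joint law of $(s^p, s^{(0)}, \tau)$ constructed in the lemma is exactly the one used to define the flow matching path of the propagated model prior $\bar b_{\theta_{t+1}}$. Then the left-hand side of Eq.~\eqref{eq:appB-condFM} is, by definition, the marginal velocity $u_{\bar b_{\theta_{t+1}}}$ as it appears in Lemma~\ref{lem:appA-score-velocity}.

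\textbf{Step 1: identify the law of $s^p$.} Since $s'\sim b_{\theta_t}$ and $s^p\mid s'\sim p(\cdot\given s',a_t)$, integrating out $s'$ shows that the marginal density of $s^p$ is
\[
\int p(s\given s',a_t)\,b_{\theta_t}(s')\,ds' = \bar b_{\theta_{t+1}}(s),
\]
which is the propagated model prior defined after Lemma~\ref{lem:appA-bayes-score}.

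\textbf{Step 2: establish independence and reduce.} By assumption, $s^{(0)}$ and $\tau$ are independent of $s'$ and of each other. Together with the conditional independence of $s^p$ given $s'$, this gives that $s^{(0)}$ and $\tau$ are independent of $s^p$ after marginalizing $s'$. The triple $(s^p, s^{(0)}, \tau)$ therefore has law $\bar b_{\theta_{t+1}}\otimes\mathcal{N}(0,I)\otimes U(0,1)$. The quantity $s^p-s^{(0)}$ and the interpolant $s^{(\tau)}$ are measurable functions of this triple alone; $s'$ enters only through the law of $s^p$. Hence the conditional expectation given $(s^{(\tau)},\tau)$ is computed under the product law. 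It equals $\E_{q}[s^1-s^{(0)}\given s^{(\tau)}=s]$ with $q=\bar b_{\theta_{t+1}}$, which is the definition of $u_q(s,\tau)$ stated in Lemma~\ref{lem:appA-score-velocity}.

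\textbf{Step 3: make the conditional expectation explicit.} To be concrete rather than definitional, write the joint density of $(s^p,s^{(\tau)})$ given $\tau$ as $\bar b_{\theta_{t+1}}(s^p)\,\mathcal{N}(s^{(\tau)};\tau s^p,(1-\tau)^2I)$. Bayes' rule then gives the conditional density of $s^p$ given $s^{(\tau)}=s$, and we use the identity $s^p-s^{(0)}=(s^p-s)/(1-\tau)$. This expresses the left side as
\[
\frac{\E[s^p\given s^{(\tau)}=s,\tau]-s}{1-\tau},
\]
which matches Eq.~\eqref{eq:appA-velocity-as-mean} for $q=\bar b_{\theta_{t+1}}$.

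The main obstacle, mild as it is, is measure-theoretic care. Conditioning on $\tau$ as well as $s^{(\tau)}$ requires a regular conditional distribution, and $\tau\in(0,1)$ must be kept away from the endpoints so that $1-\tau>0$ and the Gaussian kernel is nondegenerate. We also need $\bar b_{\theta_{t+1}}$ to have a finite first moment so that the expectations are well defined; I would assume this as part of the standing regularity assumptions. Finally, we must check that the extra latent variable $s'$ does not alter the conditional law. This follows because $(s^{(\tau)},\tau,s^p-s^{(0)})$ is independent of $s'$ given $s^p$, so applying the tower property over $s'$ leaves the conditional expectation unchanged.
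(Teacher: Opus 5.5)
Your proposal is correct and follows the paper's proof essentially step for step: you identify $\mathrm{Law}(s^p)=\bar b_{\theta_{t+1}}$ by the tower property, observe that $(s^p,s^{(0)},\tau)$ then realizes the linear-Gaussian path of Lemma~\ref{lem:appA-score-velocity}, and match both sides through $s^p-s^{(0)}=(s^p-s^{(\tau)})/(1-\tau)$ and Eq.~\eqref{eq:appA-velocity-as-mean}. Your Step~2 remark, that the left-hand side is already the defining expression for $u_q$ with $q=\bar b_{\theta_{t+1}}$, shows the detour through Eq.~\eqref{eq:appA-velocity-as-mean} is optional, and your regularity caveats (finite first moment, $\tau\in(0,1)$) are sensible additions that the paper leaves implicit.
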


\begin{proof}
We first identify the marginal law of the propagated anchor $s^p$, then match it to the linear-Gaussian probability path of Lemma~\ref{lem:appA-score-velocity}, and conclude by reading off the conditional mean.

Since $s'\sim b_{\theta_t}$ and $s^p\given s'\sim p(\,\cdot\,\given s',a_t)$, the marginal law of $s^p$ is, by the tower property,
\begin{equation*}
    \mathrm{Law}(s^p)
    \;=\;
    \int p(\,\cdot\given s',a_t)\,b_{\theta_t}(s')\,ds'
    \;=\;
    \bar b_{\theta_{t+1}},
\end{equation*}
the propagated model prior obtained by applying the prediction step Eq.~\eqref{prediction} to $b_{\theta_t}$ in place of $b_t$. Thus $s^p\sim\bar b_{\theta_{t+1}}$, and by construction $s^{(0)}\sim\mathcal{N}(0,I)$ is independent of $s^p$ (and of $\tau$). The triple $(s^p,s^{(0)},\tau)$ therefore realizes exactly the linear-Gaussian probability path of Lemma~\ref{lem:appA-score-velocity} with data law $q=\bar b_{\theta_{t+1}}$, anchor noise $s^{(0)}$, and path point $s^{(\tau)}=\tau s^p+(1-\tau)s^{(0)}$.

Specializing Eq.~\eqref{eq:appA-velocity-as-mean} to $q=\bar b_{\theta_{t+1}}$ gives
\begin{equation*}
    u_{\bar b_{\theta_{t+1}}}(s,\tau)
    \;=\;
    \frac{\E[s^p\given s^{(\tau)}=s,\,\tau]-s}{1-\tau}.
\end{equation*}
On the other hand, the path identity $s^{(\tau)}=\tau s^p+(1-\tau)s^{(0)}$ rearranges as $s^p-s^{(0)}=(s^p-s^{(\tau)})/(1-\tau)$, so taking conditional expectation in $(s^{(\tau)},\tau)$,
\begin{equation*}
    \E\!\left[s^p-s^{(0)}\,\Big|\, s^{(\tau)}=s,\,\tau\right]
    \;=\;
    \frac{\E[s^p\given s^{(\tau)}=s,\,\tau]-s}{1-\tau}.
\end{equation*}
The two right-hand sides coincide, yielding Eq.~\eqref{eq:appB-condFM}.
\end{proof}

\paragraph{Anatomy of the empirical filtering loss.}
We restate the constituents of Eqs.~\eqref{eq:sec23-anchor-loss}--\eqref{eq:sec23-FG} with the dependencies the proof will use. For each anchor $i\in\{1,\dots,N\}$, draw a particle $s_i'\sim b_{\theta_t}$ from the current model belief, simulate one transition step $s_i^p\sim p(\,\cdot\given s_i',a_t)$, and pair it with independent draws $s_i^{(0)}\sim\mathcal{N}(0,I)$ and $\tau_i\sim U(0,1)$, giving the path point $s_i^{(\tau_i)}=\tau_i s_i^p+(1-\tau_i)s_i^{(0)}$. By the tower-property argument of Lemma~\ref{lem:appB-condFM}, the marginal law of $s_i^p$ is the propagated model prior $\bar b_{\theta_{t+1}}$, so $(s_i^p,s_i^{(0)},\tau_i,s_i^{(\tau_i)})$ realizes the linear-Gaussian probability path of Lemma~\ref{lem:appA-score-velocity} with data law $\bar b_{\theta_{t+1}}$.

The empirical filtering loss compares two branches at these path points. The model branch $F_N$ stacks $u_\theta(s_i^{(\tau_i)},\tau_i)$, the standard flow-matching velocity to be fit. The target branch $G_N$ stacks a two-channel target. The first is the \emph{propagation term} $s_i^p-s_i^{(0)}$, which by Lemma~\ref{lem:appB-condFM} is conditionally unbiased for the prior velocity $u_{\bar b_{\theta_{t+1}}}(s_i^{(\tau_i)},\tau_i)$. The second is the \emph{likelihood term} $\alpha(\tau_i)\,\nabla_s\log\bar p^{(\tau_i)}(o_{t+1}\given s)\big|_{s=s_i^{(\tau_i)}}$, with prefactor $\alpha(\tau)=(1-\tau)/\tau$, that converts the Bayesian score correction into a velocity contribution via the score-to-velocity affine map of Lemma~\ref{lem:appA-score-velocity}. The path-smoothed likelihood $\bar p^{(\tau)}$ is the central object that the proof will introduce in Eq.~\eqref{eq:appA-pbar-def} below; in practice it is replaced by the self-normalized Monte-Carlo estimator with $N_j$ extra samples drawn from $\bar b_{\theta_{t+1}}$, independent of $s_i^p$, whose explicit form appears in Eq.~\eqref{eq:appA-pbar-mc}.

We now state the full version of Proposition~\ref{prop:sec23-emp-filt} and prove it.

\begin{proposition}[Empirical filtering loss, full version]
\label{prop:appA-emp-filt-full}
Assume the densities in Eq.~\eqref{eq:filter-operator} are differentiable, the path-smoothed likelihood $\bar p^{(\tau)}(o_{t+1}\given s)$ is differentiable and bounded away from zero in $s$, and the model velocity $u_\theta$ is uniformly bounded with bounded $\theta$-derivative on the compact $\theta$-domain of interest. Let $\hat\ell$ be the empirical filtering loss Eq.~\eqref{eq:sec23-anchor-loss} in which $\nabla_s\log\bar p^{(\tau)}$ is replaced by its $N_j$-sample self-normalized Monte-Carlo estimator. Then there exist constants $L_1,L_2,c_5\ge 0$ and a $\theta$-independent $C$ depending only on the prior, observation, and dynamics models such that, for every $\theta$ in this domain,
\begin{equation*}
    \big|\E[\hat{\ell}(\theta)]-\ell^{\star}(\theta)-C\big|\;\le\;\frac{L_1}{N_j},
    \qquad
    \big\|\nabla_\theta\E[\hat{\ell}(\theta)]-\nabla_\theta\ell^{\star}(\theta)\big\|\;\le\;\frac{L_2}{N_j},
\end{equation*}
and
\begin{equation*}
    \mathrm{Var}\!\big(\hat{\ell}(\theta)\big)
    \;\le\;
    \frac{1}{N}\Big(\sigma^{2}+\frac{c_5}{N_j}\Big),
\end{equation*}
where $\sigma^{2}$ is the per-anchor residual variance under closed-form $\bar p^{(\tau)}$. The $1/N_j$ bias and variance contributions vanish identically whenever $\bar p^{(\tau)}$ admits a closed form (e.g.\ Gaussian observation with Gaussian prior).
\end{proposition}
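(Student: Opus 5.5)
The plan is to first show that, with the exact path-smoothed likelihood $\bar p^{(\tau)}$, the per-anchor target $y_i := s_i^p - s_i^{(0)} + \alpha(\tau_i)\nabla_s\log\bar p^{(\tau_i)}(o_{t+1}\given s_i^{(\tau_i)})$ is conditionally unbiased for $u_{K b_{\theta_t}}$ at the path point. Then expand the squared residual around its conditional mean, and finally control the extra bias and variance introduced by the self-normalized Monte-Carlo estimator with $N_j$ samples.

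\textbf{Step 1 (exact target).} Introduce $\bar p^{(\tau)}(o\given s) := q^{(\tau)}_{K}(s)\,Z/\bar q^{(\tau)}(s)$, where $\bar q^{(\tau)}$ and $q^{(\tau)}_K$ are the $\tau$-marginals of the linear-Gaussian paths of $\bar b_{\theta_{t+1}}$ and $K b_{\theta_t}$. Writing $q_K^{(\tau)}(s)=\int \mathcal{N}(s;\tau s',(1-\tau)^2I)\,\bar b_{\theta_{t+1}}(s')p(o\given s')\,ds'/Z$ gives exactly the expectation form stated in Section~\ref{sec:ttt}. The score split $\nabla\log q_K^{(\tau)} = \nabla\log\bar q^{(\tau)} + \nabla\log\bar p^{(\tau)}$ then holds at every $\tau$; this is Lemma~\ref{lem:appA-bayes-score} applied at the level of $\tau$-marginals, since the normalizer is $s$-independent. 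Pushing it through the affine bijection of Lemma~\ref{lem:appA-score-velocity} gives $u_{K b_{\theta_t}} = u_{\bar b_{\theta_{t+1}}} + \alpha(\tau)\nabla\log\bar p^{(\tau)}$, because the $s/\tau$ terms cancel. Combining with Lemma~\ref{lem:appB-condFM} yields $\E[y_i\given s_i^{(\tau_i)},\tau_i] = u_{K b_{\theta_t}}(s_i^{(\tau_i)},\tau_i)$.

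\textbf{Step 2 (bias constant and change of measure).} Since the anchors are i.i.d., $\E\hat\ell = \E\|u_\theta(s^{(\tau)},\tau)-y\|^2$. Expanding around the conditional mean, the cross term vanishes, so $\E\hat\ell = \E_{\mu_{\bar b}}\|u_\theta - u_{K b_{\theta_t}}\|^2 + C$, where $C=\E\,\mathrm{Var}(y\given s^{(\tau)},\tau)$ is $\theta$-independent. This is the delicate point: the anchors live on the prior path $\mu_{\bar b_{\theta_{t+1}}}$, whereas $\ell^\star$ integrates against $\mu_{K b_{\theta_t}}$. I would handle it in one of two ways. The first is to interpret $\ell^\star$ as weighted on the anchor measure, which is what the statement implicitly does when it claims equality up to $C$. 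The second is to note that both losses are minimized by the same $u_\theta$ in the realizable case, with the density ratio $\bar p^{(\tau)}$ bounded away from zero making the weights comparable. I expect this measure mismatch to be the main obstacle, and I would try the first resolution and state it as a convention.

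\textbf{Step 3 (Monte-Carlo error).} The self-normalized estimator $\widehat{\nabla\log\bar p} = \sum_j w_j\nabla_s\log\mathcal{N}(s;\tau s_j,(1-\tau)^2I)$ is a ratio of two sample means. Because the denominator is bounded away from zero, the standard delta-method expansion of a ratio estimator gives bias $O(1/N_j)$ and variance $O(1/N_j)$. Write $y_i=\tilde y_i+\varepsilon_i$ with $\E\varepsilon_i=\beta=O(1/N_j)$. The term in $\E\hat\ell$ that depends on $\varepsilon$ is $-2\E\langle u_\theta,\beta\rangle + \E\|\varepsilon\|^2$ plus $\theta$-independent cross terms, since $\varepsilon$ is independent of $s_i^p$ given the path point. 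The $\theta$-dependent part is bounded by $2\sup\|u_\theta\|\,\|\beta\| = L_1/N_j$. Its $\theta$-gradient is bounded by $2\sup\|\nabla_\theta u_\theta\|\,\|\beta\|=L_2/N_j$ via dominated differentiation. The remaining $\theta$-independent $O(1/N_j)$ pieces are absorbed into $C$.

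\textbf{Step 4 (variance).} $\hat\ell$ is an average of $N$ i.i.d. terms, so $\mathrm{Var}\,\hat\ell = \mathrm{Var}(\|u_\theta-y\|^2)/N$. By the law of total variance conditioned on the anchor, this splits into the closed-form residual variance $\sigma^2$ (finite by boundedness of $u_\theta$ and moment assumptions) plus a contribution from $\varepsilon$ of order $\E\|\varepsilon\|^2=O(1/N_j)$, which gives $c_5/N_j$. When $\bar p^{(\tau)}$ is available in closed form, $\varepsilon\equiv 0$ and both $1/N_j$ terms vanish.
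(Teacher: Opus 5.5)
Your proposal is essentially the paper's proof: the kernel-ratio definition of $\bar p^{(\tau)}$, the score split pushed through Lemmas~\ref{lem:appA-score-velocity} and~\ref{lem:appB-condFM}, the conditional bias--variance expansion, the delta method for the self-normalized estimator, and i.i.d.\ anchors for the variance. There is one small divergence: the paper keeps $C=\E[\mathrm{Var}(G_i\given s_i^{(\tau_i)},\tau_i)]$ and puts the $\theta$-independent $\cO(1/N_j)$ shifts into $L_1=2Dc_3+c_4$, whereas you absorb them into $C$, which makes $C$ depend on $N_j$ and so does not match the statement.

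The measure mismatch you raise in Step~2 is genuine, and the paper's proof silently takes your first resolution when it asserts that averaging over the anchor distribution gives $\E[\hat\ell(\theta)]=\ell^\star(\theta)+C$. So what both arguments actually prove is the bound for $\ell^\star$ weighted by the prior path $\mu_{\bar b_{\theta_{t+1}}}$. With $\ell^\star$ taken literally on $\mu_{K b_{\theta_t}}$, the discrepancy is a generally nonzero, $\theta$-dependent, $N_j$-independent term. Your second route would give only multiplicative comparability through the density ratio $\bar p^{(\tau)}/Z_{t+1}$ and a shared minimizer, not the additive bound.
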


\begin{proof}
We first establish a propagation--likelihood decomposition of the target velocity at each flow time $\tau$, then check that the per-anchor stack of Eq.~\eqref{eq:sec23-FG} is conditionally unbiased for this target, and finally promote pointwise unbiasedness to a variance bound on the empirical loss.

\paragraph{Bayesian velocity decomposition.}
At the data level $\tau=1$, applying Lemma~\ref{lem:appA-bayes-score} to the prior--posterior pair $(\bar b_{\theta_{t+1}},\,K b_{\theta_t})$ gives the additive split $\nabla_s\log K b_{\theta_t}-\nabla_s\log\bar b_{\theta_{t+1}}=\nabla_s\log p(o_{t+1}\given s)$. To carry this split through the path, we must extend it to the marginal scores at $\tau\in(0,1)$. Let $\kappa_\tau(s\given s'):=\mathcal{N}(s;\tau s',(1-\tau)^2 I)$ be the conditional density of $s^{(\tau)}$ given the data anchor $s'$. Then
\begin{align}
    \bar b_{\theta_{t+1}}^{(\tau)}(s)
    &=\int\bar b_{\theta_{t+1}}(s')\,\kappa_\tau(s\given s')\,ds',
    \label{eq:appA-prior-marginal}\\
    (K b_{\theta_t})^{(\tau)}(s)
    &=\frac{1}{Z_{t+1}}\int p(o_{t+1}\given s')\,\bar b_{\theta_{t+1}}(s')\,\kappa_\tau(s\given s')\,ds'
    =\bar b_{\theta_{t+1}}^{(\tau)}(s)\cdot\frac{\E_{\bar b_{\theta_{t+1}}^{(\tau)}(s'\given s)}\!\big[p(o_{t+1}\given s')\big]}{Z_{t+1}},
    \label{eq:appA-post-marginal}
\end{align}
where $\bar b_{\theta_{t+1}}^{(\tau)}(s'\given s):=\bar b_{\theta_{t+1}}(s')\,\kappa_\tau(s\given s')/\bar b_{\theta_{t+1}}^{(\tau)}(s)$ is the conditional law of the data anchor $s'$ given the path-level sample $s$ under the prior, and $Z_{t+1}=\int p(o_{t+1}\given s')\,\bar b_{\theta_{t+1}}(s')\,ds'$ is the normalization of $K b_{\theta_t}$. Taking $\nabla_s\log$ of the ratio in Eq.~\eqref{eq:appA-post-marginal} and noting that $\log Z_{t+1}$ does not depend on $s$,
\begin{equation}
  \nabla_s\log (K b_{\theta_t})^{(\tau)}(s)\;-\;\nabla_s\log\bar b_{\theta_{t+1}}^{(\tau)}(s)
  \;=\;
  \nabla_s\log\E_{\bar b_{\theta_{t+1}}^{(\tau)}(s'\given s)}\!\big[p(o_{t+1}\given s')\big].
  \label{eq:appA-marg-score-diff-exact}
\end{equation}
Eq.~\eqref{eq:appA-marg-score-diff-exact} motivates the central object of this proof, the \emph{path-smoothed likelihood} at flow time $\tau$:
\begin{equation}
    \bar p^{(\tau)}(o_{t+1}\given s)
    \;:=\;
    \E_{\bar b^{(\tau)}_{\theta_{t+1}}(s'\given s)}\!\big[p(o_{t+1}\given s')\big]
    \;=\;
    \frac{\int p(o_{t+1}\given s')\,\bar b_{\theta_{t+1}}(s')\,\kappa_\tau(s\given s')\,ds'}{\int \bar b_{\theta_{t+1}}(s')\,\kappa_\tau(s\given s')\,ds'},
    \label{eq:appA-pbar-def}
\end{equation}
which averages the data-level likelihood against the kernel-weighted conditional law of the data anchor given the path point. With this definition, Eq.~\eqref{eq:appA-marg-score-diff-exact} reads
\begin{equation}
  \nabla_s\log (K b_{\theta_t})^{(\tau)}(s)\;-\;\nabla_s\log\bar b_{\theta_{t+1}}^{(\tau)}(s)
  \;=\;
  \nabla_s\log\bar p^{(\tau)}(o_{t+1}\given s),
  \label{eq:appA-marg-score-diff-pbar}
\end{equation}
and is exact for \emph{any} observation model and \emph{any} prior $\bar b_{\theta_{t+1}}$, with the only required regularity being differentiability of $\bar p^{(\tau)}$ in $s$. The data anchor is the point limit: at $\tau=1$ the kernel collapses to $\kappa_1(s\given s')=\delta(s-\tau s')$, so $\bar b^{(\tau)}_{\theta_{t+1}}(s'\given s)$ concentrates on $s'=s$, hence $\bar p^{(1)}(o_{t+1}\given s)=p(o_{t+1}\given s)$ and Eq.~\eqref{eq:appA-marg-score-diff-pbar} reduces to the data-level Bayes split (Lemma~\ref{lem:appA-bayes-score}). For $\tau<1$, $\bar p^{(\tau)}$ is the genuine likelihood object on the path-level marginal $\bar b^{(\tau)}_{\theta_{t+1}}$; the data-level $p(o_{t+1}\given s)$ recovers it only in special cases (e.g.\ Gaussian observation with Gaussian prior) and is otherwise an uncontrolled approximation, which is why we work with $\bar p^{(\tau)}$ throughout.

Applying Lemma~\ref{lem:appA-score-velocity} separately to $q=K b_{\theta_t}$ and $q=\bar b_{\theta_{t+1}}$ and subtracting (the deterministic $s/\tau$ term cancels) yields
\begin{equation*}
    u_{K b_{\theta_t}}(s,\tau)
    \;-\;
    u_{\bar b_{\theta_{t+1}}}(s,\tau)
    \;=\;
    \frac{1-\tau}{\tau}\,\big[\nabla_s\log (K b_{\theta_t})^{(\tau)}(s)-\nabla_s\log\bar b_{\theta_{t+1}}^{(\tau)}(s)\big].
\end{equation*}
Combining with Eq.~\eqref{eq:appA-marg-score-diff-pbar} gives the exact propagation--likelihood decomposition of the target velocity,
\begin{equation}
  u_{K b_{\theta_t}}(s,\tau)
  \;=\;
  u_{\bar b_{\theta_{t+1}}}(s,\tau)
  \;+\;
  \alpha(\tau)\,\nabla_s\log\bar p^{(\tau)}(o_{t+1}\given s),
  \qquad
  \alpha(\tau)=\frac{1-\tau}{\tau},
  \label{eq:appA-target-decomp}
\end{equation}
which holds for any observation model and any propagated model prior $\bar b_{\theta_{t+1}}$.

\paragraph{Per-anchor unbiasedness.}
Reading the anchor stack Eq.~\eqref{eq:sec23-FG} per coordinate, the per-anchor targets are $G^{\mathrm{prop}}_i:=s_i^p-s_i^{(0)}$ and
\begin{equation*}
    G^{\mathrm{lik}}_i\;:=\;\alpha(\tau_i)\,\nabla_s\log\bar p^{(\tau_i)}(o_{t+1}\given s)\big|_{s=s_i^{(\tau_i)}},
\end{equation*}
where $\bar p^{(\tau_i)}$ is the path-smoothed likelihood Eq.~\eqref{eq:appA-pbar-def}. By Lemma~\ref{lem:appB-condFM} the propagation target is conditionally unbiased for the prior velocity, $\E[G^{\mathrm{prop}}_i\given s_i^{(\tau_i)},\tau_i] = u_{\bar b_{\theta_{t+1}}}(s_i^{(\tau_i)},\tau_i)$. The likelihood target is a deterministic function of $s_i^{(\tau_i)}$ and $\tau_i$, and combining Eq.~\eqref{eq:appA-marg-score-diff-pbar} with Lemma~\ref{lem:appA-score-velocity} gives
\begin{equation*}
    \E\!\left[G^{\mathrm{lik}}_i\,\big|\, s_i^{(\tau_i)},\tau_i\right]
    \;=\;
    G^{\mathrm{lik}}_i
    \;=\;
    u_{K b_{\theta_t}}(s_i^{(\tau_i)},\tau_i)-u_{\bar b_{\theta_{t+1}}}(s_i^{(\tau_i)},\tau_i).
\end{equation*}
Adding the two expectations gives $\E[G^{\mathrm{prop}}_i+G^{\mathrm{lik}}_i\given s_i^{(\tau_i)},\tau_i]=u_{K b_{\theta_t}}(s_i^{(\tau_i)},\tau_i)$, i.e.\ the per-anchor target $G_i:=G^{\mathrm{prop}}_i+G^{\mathrm{lik}}_i$ is conditionally unbiased for $u_{K b_{\theta_t}}$.

\paragraph{Constant-shifted unbiasedness of the closed-form loss and variance bound.}
We first establish the result under exact $\bar p^{(\tau)}$, then quantify the $N_j$-dependent perturbation. Expanding the squared norm and taking conditional expectation in $G_i$ gives
\begin{equation*}
    \E\!\left[\norm{u_\theta-G_i}^{2}\,\big|\, s_i^{(\tau_i)},\tau_i\right]
    \;=\;\norm{u_\theta-u_{K b_{\theta_t}}}^{2}\;+\;\mathrm{Var}(G_i\given s_i^{(\tau_i)},\tau_i),
\end{equation*}
where the variance term does not depend on $\theta$. Averaging over the joint anchor distribution gives $\E[\hat{\ell}(\theta)]=\ell^{\star}(\theta)+C$ with $C=\E[\mathrm{Var}(G_i\given s_i^{(\tau_i)},\tau_i)]$ a $\theta$-independent constant; in particular $\nabla_\theta\E[\hat{\ell}(\theta)]=\nabla_\theta\ell^{\star}(\theta)$, and pointwise minimization of the population quadratic in $u_\theta$ gives $u_\theta^{\star}(s,\tau)=u_{K b_{\theta_t}}(s,\tau)$. Writing $F_i=u_\theta(s_i^{(\tau_i)},\tau_i)$, the per-anchor summands $\norm{F_i-G_i}^{2}$ are i.i.d.\ with variance at most $\sigma^{2}$, so $\mathrm{Var}(\hat{\ell})\le\sigma^{2}/N$, which is the closed-form half of the variance bound.

\paragraph{Monte-Carlo evaluation of the path-smoothed likelihood.}
While Eq.~\eqref{eq:appA-pbar-def} defines $\bar p^{(\tau)}(o_{t+1}\given s)$ as a ratio of two integrals over the propagated model prior $\bar b_{\theta_{t+1}}$, both integrals can be estimated by Monte-Carlo using the same propagation samples that already populate $G_N$. Drawing $N_j$ extra propagation samples $\{s^{p,(\tau_i)}_{i,j}\}_{j=1}^{N_j}\sim\bar b_{\theta_{t+1}}$ (independent of $s_i^p$, conditional on the anchor), the self-normalized estimator
\begin{equation}
    \hat{\bar p}^{(\tau_i)}(o_{t+1}\given s)
    \;:=\;
    \frac{\sum_{j=1}^{N_j}\kappa_{\tau_i}(s\given s^{p,(\tau_i)}_{i,j})\,p(o_{t+1}\given s^{p,(\tau_i)}_{i,j})}{\sum_{j=1}^{N_j}\kappa_{\tau_i}(s\given s^{p,(\tau_i)}_{i,j})}
    \label{eq:appA-pbar-mc}
\end{equation}
converges to $\bar p^{(\tau_i)}(o_{t+1}\given s)$ a.s.\ as $N_j\to\infty$, and its $\nabla_s$ has a closed form through the kernel weights. Whenever $\bar p^{(\tau)}$ itself admits a closed form (e.g.\ Gaussian observation with Gaussian prior), Eq.~\eqref{eq:appA-pbar-mc} is bypassed and the bounds below collapse to the closed-form result of the previous paragraph.

\paragraph{Quantitative finite-$N_j$ bias and variance bounds.}
We now propagate the Monte-Carlo error of $\hat{\bar p}^{(\tau_i)}$ through the score, the per-anchor target, and finally the empirical loss to obtain explicit $1/N_j$ bounds. Throughout we use the standing hypotheses of Proposition~\ref{prop:sec23-emp-filt}: $\bar p^{(\tau)}$ is differentiable and bounded away from zero in $s$, $u_\theta$ and $\nabla_\theta u_\theta$ are uniformly bounded on a compact $\theta$-domain, and $\bar b_{\theta_{t+1}}$ has finite second moments. Under these conditions the standard self-normalized importance-sampling bias-variance bounds give, uniformly in $s$ on the support of the anchor distribution,
\begin{equation}
    \big|\E[\hat{\bar p}^{(\tau_i)}(o_{t+1}\given s)]-\bar p^{(\tau_i)}(o_{t+1}\given s)\big|\le\frac{c_1}{N_j},
    \qquad
    \mathrm{Var}\big(\hat{\bar p}^{(\tau_i)}(o_{t+1}\given s)\big)\le\frac{c_2}{N_j},
    \label{eq:appA-pbar-mc-bvb}
\end{equation}
with constants $c_1,c_2$ depending only on the prior, observation, kernel, and the lower bound on $\bar p^{(\tau)}$. Since $\hat{\bar p}^{(\tau_i)}$ is bounded below by the same constant uniformly in $N_j$ (with high probability that we absorb into $c_1,c_2$), the delta-method bound transfers Eq.~\eqref{eq:appA-pbar-mc-bvb} to $\nabla_s\log\hat{\bar p}^{(\tau_i)}$, and therefore to the per-anchor likelihood target,
\begin{equation*}
    \E[\hat G^{\mathrm{lik}}_i\given s_i^{(\tau_i)},\tau_i]\;=\;G^{\mathrm{lik}}_i+\beta_i,
    \qquad
    \norm{\beta_i}\le\frac{c_3}{N_j},
    \qquad
    \mathrm{Var}\big(\hat G^{\mathrm{lik}}_i\given s_i^{(\tau_i)},\tau_i\big)\le\frac{c_4}{N_j}.
\end{equation*}
Let $\hat G_i:=G^{\mathrm{prop}}_i+\hat G^{\mathrm{lik}}_i$ and $\Delta_i:=u_\theta(s_i^{(\tau_i)},\tau_i)-G_i$. Expanding,
\begin{equation*}
    \E\!\left[\norm{u_\theta(s_i^{(\tau_i)},\tau_i)-\hat G_i}^{2}\,\big|\, s_i^{(\tau_i)},\tau_i\right]
    \;=\;\norm{\Delta_i-\beta_i}^{2}+\mathrm{Var}(\hat G_i\given s_i^{(\tau_i)},\tau_i).
\end{equation*}
Subtracting the closed-form expectation $\norm{\Delta_i}^{2}+\mathrm{Var}(G_i\given\cdot)$, the cross-term $-2\Delta_i^\top\beta_i$ is bounded by $2\norm{\Delta_i}\norm{\beta_i}\le 2 D\,c_3/N_j$ for $\norm{\Delta_i}\le D$ on the compact domain, the $\norm{\beta_i}^{2}$ term is $\cO(1/N_j^{2})$, and the variance term is shifted by at most $c_4/N_j$. Averaging over anchors gives the loss-bias bound
\begin{equation*}
    \big|\E[\hat\ell(\theta)]-\ell^{\star}(\theta)-C\big|\;\le\;\frac{L_1}{N_j},
    \qquad L_1=2Dc_3+c_4,
\end{equation*}
which absorbs the constant shift $C$ from the closed-form case. Differentiating the same identity in $\theta$ and using the bounded $\norm{\nabla_\theta u_\theta}\le D'$ gives the gradient-bias bound $\norm{\nabla_\theta\E[\hat\ell]-\nabla_\theta\ell^{\star}}\le L_2/N_j$ with $L_2=2D'c_3$, since the variance term is $\theta$-independent and the $\norm{\beta_i}^2$ contribution is $\cO(1/N_j^{2})\le\cO(1/N_j)$.

\paragraph{Variance bound with $N_j$ contribution.}
The per-anchor residual $\norm{u_\theta-\hat G_i}^{2}$ has variance bounded by $\sigma^{2}+c_5/N_j$ for some $c_5$, since the extra randomness in $\hat G^{\mathrm{lik}}_i$ contributes at most an additive $\cO(1/N_j)$ term to the per-anchor second moment. The $N$ anchors are i.i.d., hence
\begin{equation*}
    \mathrm{Var}\!\big(\hat\ell(\theta)\big)\;\le\;\frac{1}{N}\Big(\sigma^{2}+\frac{c_5}{N_j}\Big),
\end{equation*}
which is the variance bound stated in Proposition~\ref{prop:sec23-emp-filt}. Choosing $N_j\propto N$ matches the $1/N$ rate of the closed-form term, so a single Monte-Carlo budget controls the joint $(N,N_j)$ error to leading order.
\end{proof}

\begin{remark}
The decomposition Eq.~\eqref{eq:appA-target-decomp} is exact at every $\tau\in(0,1]$ for any observation model and any propagated model prior $\bar b_{\theta_{t+1}}$, provided the likelihood term is the path-smoothed object $\bar p^{(\tau)}(o_{t+1}\given s)$ of Eq.~\eqref{eq:appA-pbar-def}; in particular no Gaussianity assumption is required on either the observation or the prior. Replacing $\bar p^{(\tau)}$ by the data-level $p(o_{t+1}\given s)$ is exact at $\tau=1$ for any model and exact at all $\tau$ when both the observation and the prior are Gaussian, but is otherwise an uncontrolled approximation. The factor $\alpha(\tau)=(1-\tau)/\tau$ is dictated by the score--velocity affine map; alternative weightings can be absorbed into the head amortization (Section~\ref{app:latent}) without changing the population minimizer.
\end{remark}

\subsection{Learned Latent Surrogate}
\label{app:latent}

This subsection proves the latent amortization proposition (Proposition~\ref{prop:sec23-latent-amortization}) of the \emph{Learned latent surrogate} paragraph. The proof bounds the deviation between the BFF surrogate and the explicit empirical filtering loss in terms of three independently tunable error terms: the latent-distortion of $\Pi_N$, the head approximation errors $\varepsilon_f,\varepsilon_g$, and the latent residual radius $B$.

\begin{restatable}[Latent amortization]{proposition}{propLatentAmort}
\label{prop:sec23-latent-amortization}
Suppose there exists a latent map $\Pi_N:\R^{N\times n}\to\cH$ that approximately preserves the stacked residual norm with distortion $\delta_N$, that the heads approximate $\Pi_N F_N$ and $\Pi_N G_N$ within errors $\varepsilon_f,\varepsilon_g$, and that the latent residual is bounded by $B$. Then
\begin{equation}
  \left|
    \ell_\phi(\theta;\theta_t,o_{t+1},a_t)
    -
    \hat{\ell}(\theta)
  \right|
  \;\le\;
  \delta_N
  +
  2B(\varepsilon_f+\varepsilon_g)
  +
  (\varepsilon_f+\varepsilon_g)^2 .
  \label{eq:sec23-latent-bound-final}
\end{equation}
\end{restatable}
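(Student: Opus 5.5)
The plan is a triangle-inequality argument carried out in the latent space $\cH$. First I fix the meaning of the hypotheses. The distortion assumption means
\[
\bigl|\norm{\Pi_N F_N(\theta)-\Pi_N G_N}_{\cH}^2-\norm{F_N(\theta)-G_N}^2\bigr|\le\delta_N .
\]
If $\Pi_N$ is linear this is a condition on the stacked residual; if not, I will state the hypothesis directly on the pair $(\Pi_N F_N,\Pi_N G_N)$. The head errors are
\[
\norm{f_\phi(\theta)-\Pi_N F_N(\theta)}_{\cH}\le\varepsilon_f,\qquad \norm{g_\phi(\sg[\theta_t],o_{t+1},a_t)-\Pi_N G_N(\theta_t,o_{t+1},a_t)}_{\cH}\le\varepsilon_g .
\]
The residual bound is $\norm{\Pi_N F_N-\Pi_N G_N}_{\cH}\le B$. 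With these in hand I introduce the intermediate quantity $\tilde\ell(\theta):=\norm{\Pi_N F_N(\theta)-\Pi_N G_N}_{\cH}^2$ and split
\[
|\ell_\phi-\hat\ell|\le|\ell_\phi-\tilde\ell|+|\tilde\ell-\hat\ell| .
\]
The second term is at most $\delta_N$ directly by the distortion hypothesis.

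For the first term I write the surrogate residual as a perturbation of the latent residual:
\[
r_\phi:=f_\phi-g_\phi=r+e,\qquad r:=\Pi_N F_N-\Pi_N G_N,\qquad e:=(f_\phi-\Pi_N F_N)-(g_\phi-\Pi_N G_N).
\]
By the triangle inequality $\norm{e}_{\cH}\le\varepsilon_f+\varepsilon_g$. Expanding the squared Hilbert norm gives
\[
\ell_\phi-\tilde\ell=\norm{r+e}^2-\norm{r}^2=2\langle r,e\rangle_{\cH}+\norm{e}^2 .
\]
Cauchy--Schwarz together with $\norm{r}\le B$ then bounds this by $2B(\varepsilon_f+\varepsilon_g)+(\varepsilon_f+\varepsilon_g)^2$. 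Adding the $\delta_N$ term yields Eq.~\eqref{eq:sec23-latent-bound-final}.

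The only delicate point is making the hypotheses precise enough that the bound is uniform in $\theta$. The errors $\varepsilon_f,\varepsilon_g,B$ should be sup-norm bounds over the relevant $\theta$-domain, or the inequality should be read pointwise at each $(\theta,\theta_t,o_{t+1},a_t)$. The stop-gradient on $g_\phi$ affects only gradients, not values, so it plays no role here. I also need $\cH$ to be an inner-product space so the polarization expansion holds. This is satisfied in the implementation, where $\cH=\R^{\mathrm{obs\_dim}}$ (Appendix~\ref{app:implementation}).
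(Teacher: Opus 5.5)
Your proof is correct and follows essentially the same route as the paper: insert the intermediate latent loss $\norm{r}_{\cH}^2$, absorb $|\norm{r}_{\cH}^2-\hat\ell|$ into $\delta_N$, and bound $\bigl|\norm{r+e}_{\cH}^2-\norm{r}_{\cH}^2\bigr|\le 2\norm{r}_{\cH}\norm{e}_{\cH}+\norm{e}_{\cH}^2$ using $\norm{e}_{\cH}\le\varepsilon_f+\varepsilon_g$ and $\norm{r}_{\cH}\le B$. Your definition $r:=\Pi_N F_N-\Pi_N G_N$ is slightly more careful than the paper's $r_N=\Pi_N(F_N-G_N)$, which tacitly needs $\Pi_N$ linear to write $f_\phi-g_\phi=r_N+e$; conversely, the inner-product assumption you add is unnecessary, since $\bigl|\norm{r+e}^2-\norm{r}^2\bigr|\le\norm{e}\,(2\norm{r}+\norm{e})$ already holds in any normed space.
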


\begin{proof}
Let
\begin{equation}
    R_N(\theta;\theta_t,o_{t+1},a_t)
    \;=\;
    F_N(\theta)-G_N(\theta_t,o_{t+1},a_t)
    \label{eq:appLatent-residual}
\end{equation}
denote the two-channel anchor residual on the compact domain $\cD$. The hypotheses on $\Pi_N$, $f_\phi$, $g_\phi$ spell out as
\begin{equation}
    \big|\,\norm{\Pi_N R_N}_{\cH}^{2}-\norm{R_N}^{2}\,\big|
    \;\le\;\delta_N,
    \qquad
    \norm{\Pi_N R_N}_{\cH}\;\le\;B,
    \label{eq:appLatent-distortion}
\end{equation}
and
\begin{align}
    \norm{f_\phi(\theta)-\Pi_N F_N(\theta)}_{\cH}
    &\;\le\;\varepsilon_f,
    \label{eq:appLatent-fapprox}\\
    \norm{g_\phi(\sg[\theta_t],o_{t+1},a_t)-\Pi_N G_N(\theta_t,o_{t+1},a_t)}_{\cH}
    &\;\le\;\varepsilon_g.
    \label{eq:appLatent-gapprox}
\end{align}
Set $r_N=\Pi_N R_N$ and
\begin{equation*}
    e
    \;=\;
    [f_\phi(\theta)-\Pi_N F_N(\theta)]
    \;-\;[g_\phi(\sg[\theta_t],o_{t+1},a_t)-\Pi_N G_N(\theta_t,o_{t+1},a_t)].
\end{equation*}
By the triangle inequality and Eqs.~\eqref{eq:appLatent-fapprox}--\eqref{eq:appLatent-gapprox}, $\norm{e}_{\cH}\le\varepsilon_f+\varepsilon_g$. The BFF residual in $\cH$ equals $r_N+e$, so
\begin{equation*}
    \big|\,\norm{r_N+e}_{\cH}^{2}-\norm{r_N}_{\cH}^{2}\,\big|
    \;\le\;
    2\norm{r_N}_{\cH}\,\norm{e}_{\cH}+\norm{e}_{\cH}^{2}
    \;\le\;
    2B(\varepsilon_f+\varepsilon_g)+(\varepsilon_f+\varepsilon_g)^{2},
\end{equation*}
using $\norm{r_N}_{\cH}\le B$ from Eq.~\eqref{eq:appLatent-distortion}. Combining with the latent-distortion bound $\delta_N$ via the triangle inequality
\[
\big|\ell_\phi-\hat{\ell}\big|\;\le\;\big|\ell_\phi-\norm{r_N}_{\cH}^{2}\big|+\big|\norm{r_N}_{\cH}^{2}-\hat{\ell}\big|
\]
yields Eq.~\eqref{eq:sec23-latent-bound-final}.
\end{proof}

\begin{remark}
Setting $\cH=\R^{Nd}$, $\Pi_N=\mathrm{id}$, $f_\phi=F_N$, and $g_\phi=G_N$ gives $\delta_N=\varepsilon_f=\varepsilon_g=0$, so Eq.~\eqref{eq:sec23-latent-bound-final} reduces to $\ell_\phi=\hat{\ell}$. The BFF surrogate Eq.~\eqref{eq:sec23-bff-surrogate} is therefore a strict generalization of the explicit empirical filtering loss; in practice the learned heads compress the propagation--likelihood pair into a low-dimensional latent representation, and Proposition~\ref{prop:sec23-latent-amortization} controls the resulting approximation error in terms of three independently tunable knobs: the latent dimension (controlling $\delta_N$), the capacity of $f_\phi$ (controlling $\varepsilon_f$), and the capacity of $g_\phi$ (controlling $\varepsilon_g$).
\end{remark}

\subsection{Expressivity of the Residual-Pullback Form}
\label{app:expressivity}

The result in this subsection serves as a sanity check on the BFF surrogate loss parameterization. The proposition below confirms that the residual-pullback form is rich enough to realize any continuous online update field on a compact domain, so the Bayesian-derived template does not pay an expressivity cost relative to an unconstrained parameterization, regardless of the assumptions made during the template. 

\begin{proposition}[Residual-pullback expressivity]
\label{prop:appendix-expressivity}
Let $\Delta^\star(\theta,o_{t+1},a_t)$ be any continuous desired online update field on a compact domain in $\R^P$. If the target branch is allowed to depend on $\sg[\theta]$, then there exist heads $f^\star,g^\star$ such that a single gradient step on
\begin{equation}
  \norm{f^\star(\theta)-g^\star(\sg[\theta],o_{t+1},a_t)}^2
\end{equation}
realizes
\begin{equation}
  \theta^+
  =
  \theta+\Delta^\star(\theta,o_{t+1},a_t).
  \label{eq:appendix-desired-update}
\end{equation}
\end{proposition}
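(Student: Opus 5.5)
The plan is an explicit construction. Work with a fixed learning rate $\eta>0$ and the Euclidean latent space $\cH=\R^P$. Take $f^\star(\theta)=\theta$, the identity head, whose Jacobian is $I_P$. Take
\[
g^\star(\sg[\theta],o_{t+1},a_t)=\sg[\theta]+\tfrac{1}{2\eta}\Delta^\star(\sg[\theta],o_{t+1},a_t).
\]
Because of the stop-gradient, $g^\star$ is treated as a constant when differentiating in $\theta$. The gradient of the surrogate loss is therefore
\[
\nabla_\theta\norm{f^\star(\theta)-g^\star}^2=2\,(\partial_\theta f^\star)^\top\big(f^\star(\theta)-g^\star\big)=2\big(\theta-\sg[\theta]\big)-\tfrac{1}{\eta}\Delta^\star .
\]
Evaluated at $\theta=\theta_t$, the stop-gradient argument coincides with $\theta_t$, so the first term vanishes. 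The step $\theta^+=\theta-\eta\nabla_\theta\ell$ then equals $\theta+\Delta^\star(\theta,o_{t+1},a_t)$, which is Eq.~\eqref{eq:appendix-desired-update}.

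The steps in order are as follows.
\begin{enumerate}
\item State the semantics of $\sg$: it is the identity in value and contributes zero derivative.
\item Define the heads as above and check that they are continuous on the compact domain. This holds because $\Delta^\star$ is continuous, so the heads lie in any class that is dense in continuous functions.
\item Compute the gradient by the chain rule and evaluate it at the anchor point.
\item Conclude that the single step realizes the desired update.
\end{enumerate}

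The main obstacle is reconciling the construction with the practical head architecture, where $f_\phi$ maps into a lower-dimensional $\cH$ rather than $\R^P$. If the heads must instead land in a smaller latent space $\R^d$, I would first keep $f^\star$ linear, $f^\star(\theta)=J\theta$, with $J$ of full row rank. The update then has the form $\eta\,2J^\top(\cdot)$, so only updates lying in the range of $J^\top$ are realizable. For this reason I would state the result with $\cH=\R^P$, or at least $d\ge P$, which is what the existence claim needs.

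A second, milder point concerns approximation. If one requires $f,g$ to come from a universal class such as MLPs, uniform approximation of $g^\star$ within $\varepsilon$ on the compact domain yields the update within $2\eta\varepsilon$. This follows by the same gradient formula, since the $f$-side Jacobian stays exactly $I$ if $f$ is taken linear. I would record this as a remark and not as part of the exact existence statement.
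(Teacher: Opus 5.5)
Your construction is the paper's proof: the identity head $f^\star(\theta)=\theta$ and $g^\star=\sg[\theta]+\frac{1}{2\eta}\Delta^\star(\sg[\theta],o_{t+1},a_t)$, with the stop-gradient making the gradient $-\eta^{-1}\Delta^\star$ at the evaluation point, so one step of size $\eta$ yields $\theta+\Delta^\star$. Two of your points make explicit what the paper leaves implicit, and both are accurate: the $2(\theta-\sg[\theta])$ term vanishes because the gradient is evaluated at the anchor $\theta=\theta_t$, and exact realization of an arbitrary $\Delta^\star$ requires the head's Jacobian to have rank $P$, hence $\dim\cH\ge P$.
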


\begin{proof}
Set
\begin{equation}
  f^\star(\theta)=\theta,
  \qquad
  g^\star(\sg[\theta],o_{t+1},a_t)
  =
  \sg[\theta]
  +
  \frac{1}{2\eta}
  \Delta^\star(\sg[\theta],o_{t+1},a_t).
  \label{eq:appendix-expressive-heads}
\end{equation}
Because the second branch is stop-gradient with respect to $\theta$,
\begin{equation}
  \nabla_\theta
  \norm{f^\star(\theta)-g^\star(\sg[\theta],o_{t+1},a_t)}^2
  =
  -\eta^{-1}\Delta^\star(\theta,o_{t+1},a_t).
\end{equation}
One gradient step with step size $\eta$ therefore realizes Eq.~\eqref{eq:appendix-desired-update}.
\end{proof}

Under standard universal approximation assumptions, neural heads can approximate the continuous maps in Eq.~\eqref{eq:appendix-expressive-heads} on compact domains. Thus, the one-gradient step residual-pullback interface can represent rich online update rules, including the actual Bayes update with nonlinear observation models. 
In Section~\ref{sec:training}, Theorem~\ref{theorem:ttt_outer_loss} governs outer training to learn the true Bayesian filtering operator, validating that the BFF update Eq.~\eqref{eq:sec23-ttt-update} is not limited by the assumptions in Section~\ref{sec:ttt}.

% \section{Proof of Theorem \ref{theorem:expressivity}: the Expressivity of the TTT Update}
% \input{tex/appendix/loss_expressivity}

\section{Proof of Theorem \ref{theorem:ttt_outer_loss}: the Filtering Operator Training Loss}
\label{app:loss_proof}

We prove Theorem \ref{theorem:ttt_outer_loss} in this section.

The target is to show that $\mathcal{L}(\phi)$ in Eq. \eqref{eq:training-loss}
\begin{align}
    \mathcal{L}(\phi) =& \mathbb{E}_{t\sim U\{1:T\}} [\mathcal{L}_t(\phi)],
\\ \nonumber
    \mathcal{L}_t(\phi) \coloneqq& \mathbb{E}_{\tau\sim U(0,1)}\,\mathbb{E}_{h_t,\, s_t,\, s_t^{(0)}}\!\left[\|
        u_{\theta_t}(s_t^{(\tau)},\tau) - (s_t - s_t^{(0)})
    \|^2\right],
\end{align}
provides an upper bound on the filtering operator error $\mathcal{E}(\phi)$ defined in Eq. \eqref{eq:operator-error},
\begin{equation}
        \nonumber \mathcal{E}(\phi)
    = \mathbb{E}_{t\sim U\{1{:}T-1\}}\,\mathbb{E}_{(h_t,\,o_{t+1},\,a_t)\sim P_{\text{data}}}\!\left[
        W_2^2\!\Big(K\big(\Phi(\theta_t),\,o_{t+1},\,a_t\big),\;\Phi\!\big(\cKphi(\theta_t,\,o_{t+1},\,a_t)\big)\Big)
    \right],
\end{equation}
where $\Phi:\Theta\to\mathcal{P}(\mathcal{X})$ is the weight-to-belief lift from Eq.~\eqref{eq:belief-param} and $\theta_t$ is produced by recursively applying $\cKphi$ from $\theta_0$ along the history $h_t$.

\begin{proof}
We split the proof into two steps for clarity.

\textbf{Flow matching loss bounds the squared Wasserstein-2 distance.}
As proved in \citet{lipman_flow_2023}, the conditional flow matching loss
\begin{equation}
       \mathcal{L}_t(\phi) \coloneqq \mathbb{E}_{\tau\sim U(0,1)}\,\mathbb{E}_{s_t\sim p(s_t|h_t),\,s_t^{(0)}\sim\mathcal{N}(0,I)}\big[\|
        u_{\theta_t}(s_t^{(\tau)}) - (s_t - s_t^{(0)})
    \|^2\big]
\end{equation}
(throughout this proof we omit the explicit flow-time argument $\tau$ from the velocity field, writing $u_{\theta_t}(s_t^{(\tau)})$ in place of $u_{\theta_t}(s_t^{(\tau)},\tau)$, since $\tau$ is fixed by the path index of the first argument)
is equivalent to the (marginal) flow matching loss up to a constant solely dependent on the dataset distribution $p(s_t|h_t)$ and thus independent of $\phi$
% but when trained in a bootstrap manner, this training distribution is dependent on p_theta and thus implicitly dependent on phi -> CFM loss minimizes p_theta varaince -> mode collapse? We need to determine what the bottstrap training looks like...
\begin{equation}
   \mathcal{L}_t(\phi) = \mathbb{E}_{\tau\sim U(0,1)}\,\mathbb{E}_{s_t\sim p(s_t|h_t),\,s_t^{(0)}\sim\mathcal{N}(0,I)}\big[\|
        u_{\theta_t}(s_t^{(\tau)}) - \mathbb{E}_{s_t\sim p(s_t|s_t^{(\tau)})}[s_t - s_t^{(0)}]
    \|^2\big] + C.
\end{equation}
We denote the target (marginal) velocity field $u(s_t^{(\tau)})$ and the learned velocity field $u_{\theta_t}(s_t^{(\tau)})$.
Consider two samples generated by the two ODEs
\begin{equation}
    \frac{ds^{(\tau)}}{d\tau} = u(s_t^{(\tau)})
\end{equation}
and \begin{equation}
    \frac{d\hat{s}^{(\tau)}}{d\tau} = u_{\theta_t}(\hat{s}_t^{(\tau)})
\end{equation}
while assuming an identical starting point, \textit{i.e.} $\hat{s}_t^{(0)} = s_t^{(0)}$. Then we define the error
$e(\tau) = \| \hat{s}_t^{(\tau)} - s_t^{(\tau)} \|^2$, and assume Lipschitz continuity of the velocity field
\begin{equation}
\| u_{\theta_t}(x) - u_{\theta_t}(y) \| \le L \| x - y \|.
\end{equation}

The time derivative of the error can then be computed as
\begin{equation}
    \frac{d}{d\tau} e(\tau) = \frac{d}{d\tau} \| \hat{s}_t^{(\tau)} - s_t^{(\tau)} \|^2 = 2 \left\langle \hat{s}_t^{(\tau)} - s_t^{(\tau)}, \frac{d \hat{s}_t^{(\tau)}}{d\tau} - \frac{d s_t^{(\tau)}}{d\tau} \right\rangle.
\end{equation}

Plug in the velocity ODEs
\begin{align}
\frac{d}{d\tau} e(\tau) &= 2 \left\langle \hat{s}_t^{(\tau)} - s_t^{(\tau)}, u_{\theta_t}(\hat{s}_t^{(\tau)}) - u(s_t^{(\tau)}) \right\rangle\\
&= 2 \left\langle \hat{s}_t^{(\tau)} - s_t^{(\tau)}, u_{\theta_t}(\hat{s}_t^{(\tau)}) - u_{\theta_t}(s_t^{(\tau)}) + u_{\theta_t}(s_t^{(\tau)}) - u(s_t^{(\tau)}) \right\rangle \\
&= \underbrace{2 \left\langle \hat{s}_t^{(\tau)} - s_t^{(\tau)}, u_{\theta_t}(\hat{s}_t^{(\tau)}) - u_{\theta_t}(s_t^{(\tau)}) \right\rangle}_{A} + \underbrace{2 \left\langle \hat{s}_t^{(\tau)} - s_t^{(\tau)}, u_{\theta_t}(s_t^{(\tau)}) - u(s_t^{(\tau)}) \right\rangle}_{B}.
\end{align}
Then, use the Cauchy-Schwarz inequality and the Lipschitz assumption (Lipschitz constant $L$)
\begin{equation}
A \le 2 \| \hat{s}_t^{(\tau)} - s_t^{(\tau)} \| \cdot \| u_{\theta_t}(\hat{s}_t^{(\tau)}) - u_{\theta_t}(s_t^{(\tau)}) \| \le 2L \| \hat{s}_t^{(\tau)} - s_t^{(\tau)} \|^2 = 2L e(\tau),
\end{equation}
and using $\langle a,b\rangle\le \|a\|^2+\|b\|^2$, 
\begin{equation}
B \le \| \hat{s}_t^{(\tau)} - s_t^{(\tau)} \|^2 + \| u_{\theta_t}(s_t^{(\tau)}) - u(s_t^{(\tau)}) \|^2 = e(\tau) + \| u_{\theta_t}(s_t^{(\tau)}) - u(s_t^{(\tau)}) \|^2.
\end{equation}

Collect $A$ and $B$, and we can bound the growth rate of the error
\begin{equation}
    \frac{d}{d\tau} e(\tau) \le (2L + 1) e(\tau) + \| u_{\theta_t}(s_t^{(\tau)}) - u(s_t^{(\tau)}) \|^2.
\end{equation}

Use the Grönwall lemma, for $\tau \in [0, 1]$, we have 
\begin{equation}    
e(\tau) \le e^{(2L+1)\tau} e(0) + \int_0^\tau e^{(2L+1)(\tau - z)} \| u_{\theta_t}(s_t^{(z)}) - u(s_t^{(z)}) \|^2 dz.
\end{equation}
With further relaxation on $ e^{(2L+1)(\tau - z)}\le C_L$, the error at terminal $\tau=1$ can be bounded
\begin{equation}    
\| \hat{s}_t^{(1)} - s_t^{(1)} \|^2 \le C_{L} \int_0^1 \| u_{\theta_t}(s_t^{(\tau)}) - u(s_t^{(\tau)}) \|^2 d\tau.
\end{equation}

By the definition of the squared $W_2$ distance, we have
\begin{equation}
    W_2^2(\Phi(\theta_t), b_t) \le \mathbb{E}_{s_t^{(0)}\sim p(s_t^{(0)})}\left[\mathbb{E}_{ \hat{s}_t^{(1)}\sim \hat{p}( \hat{s}_t^{(1)}| \hat{s}_t^{(0)}), {s}_t^{(1)}\sim {p}( {s}_t^{(1)}| {s}_t^{(0)})}\left[\| \hat{s}_t^{(1)} - s_t^{(1)} \|^2\right] \right]
\end{equation}
where $C_L$ is a constant dependent on the Lipschitz constant of $u_{\theta_t}$. Note that $C_{L} \int_0^1 \| u_{\theta_t}(s_t^{(\tau)}) - u(s_t^{(\tau)}) \|^2 d\tau$ is integration along the ground truth trajectories, \textit{i.e.}, the expectation of $s_t^{(\tau)}$ is taken w.r.t. the ground truth probability path $p(s^{(\tau)}|s^{(1)})$. This connects the $W_2$ distance to the flow matching loss 
\begin{align}
    &W_2^2(\Phi(\theta_t), b_t) \le C_{L} \mathbb{E}_{s_t^{(0)}\sim p(s_t^{(0)})}\left[\mathbb{E}_{s_t^{(\tau)}\sim p(s_t^{(\tau)}|s_t^{(0)})}\left[ \int_0^1 \| u_{\theta_t}(s_t^{(\tau)}) - u(s_t^{(\tau)}) \|^2 d\tau \right]\right]
    \\
    =& C_{L} (\mathcal{L}_{t}(\phi) - C).
\end{align}

% \textbf{The $W_2$ distance bounds the belief state update operator.}

\textbf{The squared $W_2$ distance bounds the filtering operator error.}
Step (1) shows that the conditional flow-matching loss controls the per-step squared $W_2$ between the model belief $\Phi(\theta_t)$ and the true posterior $b_t$:
\begin{equation}
    W_2^2\big(\Phi(\theta_t),\, b_t\big) \;\le\; C_L\,\big(\mathcal{L}_{t}(\phi) - C\big),
    \label{eq:app-traj-bound}
\end{equation}
where the right-hand side is $C_L$ times the marginal flow-matching loss. It is the optimizable part of $\mathcal{L}_t$ that can be driven to zero, and $C$ is the $\phi$-independent variance gap between the conditional and marginal flow-matching
losses. We refer to the left-hand side as the per-step \emph{trajectory error}: it quantifies how well the model belief $\Phi(\theta_t)$ tracks the true posterior $b_t$ at filtering step $t$.

We now lift this trajectory bound to the filtering operator error $\mathcal{E}(\phi)$ defined in Eq.~\eqref{eq:operator-error}. 
Both arguments of the operator-error $W_2$ are candidate beliefs at the same physical time step $t{+}1$, produced from the same starting point $\Phi(\theta_t)$ by two different routes: $K\Phi(\theta_t)$ applies the true Bayes filter, while $\Phi\cKphi(\theta_t) = \Phi(\theta_{t+1})$ applies one step of the learned TTT update. The natural pivot for the triangle inequality is therefore the true step-$t{+}1$ posterior $b_{t+1}$, which lets us decompose the operator error into
a $K$-discrepancy at step $t$ (term (i)) and a trajectory error at step $t{+}1$ (term (ii)).
$K\Phi(\theta_t)$ is one $K$-step from the model belief $\Phi(\theta_t)$, while $\Phi\cKphi(\theta_t) = \Phi(\theta_{t+1})$ is the model belief at step $t{+}1$. Inserting the true posterior $b_{t+1} = K b_t$ as a pivot,
the triangle inequality of $W_2$ yields
\begin{align}
    W_2\big(K\Phi(\theta_t),\, \Phi\cKphi(\theta_t)\big)
    &\;\le\;
    \underbrace{W_2\big(K\Phi(\theta_t),\, K b_t\big)}_{\text{(i): $K$-discrepancy at step }t}
    +
    \underbrace{W_2\big(b_{t+1},\, \Phi(\theta_{t+1})\big)}_{\text{(ii): trajectory error at }t+1},
    \label{eq:app-triangle}
\end{align}
where we used $K b_t = b_{t+1}$ and $\cKphi(\theta_t) = \theta_{t+1}$, abbreviating the new-information arguments $(o_{t+1},a_t)$. Term (ii) is the trajectory error at step $t{+}1$, controlled directly by Eq.~\eqref{eq:app-traj-bound}. Term
(i) measures the same operator $K$ applied to two nearby input beliefs, and is contracted back to a trajectory error at step $t$ via the Lipschitz continuity of $K$ on $\mathcal{P}(\mathcal{X})$:
\begin{equation}
    W_2\big(K\Phi(\theta_t),\, K b_t\big) \;\le\; L_K\, W_2\big(\Phi(\theta_t),\, b_t\big).
    \label{eq:app-lipK}
\end{equation}
Squaring Eq.~\eqref{eq:app-triangle} and applying the elementary inequality $(a+b)^2 \le 2(a^2+b^2)$ keeps the bound entirely at the level of squared distances:
\begin{align}
    W_2^2\big(K\Phi(\theta_t),\, \Phi\cKphi(\theta_t)\big)
    &\;\le\;
    2\,W_2^2\big(K\Phi(\theta_t),\, K b_t\big) + 2\,W_2^2\big(b_{t+1},\, \Phi(\theta_{t+1})\big) \nonumber\\
    &\;\le\;
    2 L_K^2\,W_2^2\big(\Phi(\theta_t),\, b_t\big) + 2\,W_2^2\big(b_{t+1},\, \Phi(\theta_{t+1})\big),
    \label{eq:app-cs}
\end{align}
where the second line substitutes Eq.~\eqref{eq:app-lipK}. 
Plugging the trajectory bound Eq.~\eqref{eq:app-traj-bound} into both squared terms expresses the per-step operator error entirely in flow-matching losses:
\begin{equation}
    W_2^2\big(K\Phi(\theta_t),\, \Phi\cKphi(\theta_t)\big)
    \;\le\;
    2 C_L L_K^2\,\big(\mathcal{L}_t(\phi) - C\big) + 2 C_L\,\big(\mathcal{L}_{t+1}(\phi) - C\big).
    \label{eq:app-perstep}
\end{equation}

It remains to take expectations to recover $\mathcal{E}(\phi)$. Applying $\E_{t\sim U\{1{:}T-1\}}\E_{h_t\sim P_{\text{data}}}$ to both sides of Eq.~\eqref{eq:app-perstep}, the left-hand side becomes $\mathcal{E}(\phi)$ by Eq.~\eqref{eq:operator-error}. The right-hand side is controlled by $\mathcal{L}(\phi)$ via the index-range comparison
\begin{equation}
\E_{t\sim U\{1{:}T-1\}}[\mathcal{L}_t(\phi) - C]
\;=\; \tfrac{1}{T-1}\sum_{t=1}^{T-1}\big(\mathcal{L}_t(\phi)-C\big)
\;\le\; \tfrac{1}{T-1}\sum_{t=1}^{T}\big(\mathcal{L}_t(\phi)-C\big)
\;=\; \tfrac{T}{T-1}\,\big(\mathcal{L}(\phi) - C\big),
\label{eq:app-index-bound}
\end{equation}
where the inequality uses that each summand $\mathcal{L}_t(\phi) - C \ge 0$ since $\mathcal{L}_t(\phi) - C$ is the per-step marginal flow-matching loss, which is non-negative (but still optimizable to zero so it does not break the sharpness at zero). 

Exactly the same bound holds for the index-shifted average $\E_{t\sim U\{1{:}T-1\}}[\mathcal{L}_{t+1}(\phi) - C] \le \tfrac{T}{T-1}(\mathcal{L}(\phi)-C)$, since the shifted summation $\sum_{s=2}^{T}(\mathcal{L}_s - C)$ is also bounded by $\sum_{t=1}^{T}(\mathcal{L}_t - C) = T(\mathcal{L}-C)$. 
Choosing $\mathcal{E}$'s averaging range as $\{1{:}T-1\}$ ensures both indices $t$ and $t+1$ stay within $\mathcal{L}$'s range $\{1{:}T\}$, eliminating any boundary correction. Combining,
\begin{equation}
    \mathcal{E}(\phi)
    \;\le\;
    2 C_L\,(L_K^2 + 1)\,\cdot\,\tfrac{T}{T-1}\,\big(\mathcal{L}(\phi) - C\big)
    \;\le\;
    A\,\big(\mathcal{L}(\phi) - C\big),
    \qquad
    A \;=\; 4 C_L\,(L_K^2 + 1),
    \label{eq:app-operator-bound-final}
\end{equation}
where the second inequality uses $T/(T-1) \le 2$ for any $T \ge 2$, valid uniformly in the horizon. This is exactly Eq.~\eqref{eq:operator-error-bound} of Theorem~\ref{theorem:ttt_outer_loss}. The bound is sharp at zero: when $\mathcal{L}(\phi) - C = 0$, the trajectory error in Eq.~\eqref{eq:app-traj-bound} vanishes at every step, both squared terms on the right of Eq.~\eqref{eq:app-cs} vanish, and so $\mathcal{E}(\phi) = 0$. Consequently, optimizing the conditional flow-matching loss $\mathcal{L}(\phi)$ implicitly minimizes a sharp upper bound on the filtering operator error.

\end{proof}

% % !TEX root = main.tex
% \section{Detailed Baselines, Implementation, and Full Results}
% \label{app:baselines_and_results}

% In this appendix, we provide the technical specifications of our baselines, implementation details, and the comprehensive numerical results for the Burgers' and Kuramoto-Sivashinsky (KS) equations across various sensor configurations and noise levels.

% ------------------------------------------------------------------------------
% 1. Baseline Overview
% ------------------------------------------------------------------------------
\section{Data Generation and Preprocessing}\label{app:dataset}
In this section, we elaborate on the data generation process for our benchmark systems.
To evaluate the reconstruction performance across diverse physical regimes, we consider a series of one-dimensional (1D) and two-dimensional (2D) fluid systems. These benchmarks range from fundamental nonlinear PDEs to complex flows governed by the Navier-Stokes equations.

\subsection{1D Systems: Burgers' and Kuramoto-Sivashinsky Equations}
% To evaluate the reconstruction performance across diverse physical regimes, we consider two representative 1D nonlinear partial differential equations (PDEs). 
% These datasets are generated using high-order numerical solvers to ensure physical fidelity and to test the model's ability to resolve multiscale structures.
% \paragraph{Burgers'equation}

Two representative 1D nonlinear partial differential equations (PDEs) are included, specifically selected to simulate shock formations and chaotic dynamics. These datasets are generated using high-fidelity numerical solvers, with boundary conditions tailored to each system to ensure physical consistency.

\paragraph{Burgers' equation.}
We utilize the viscous Burgers' equation, 
\begin{equation}
    \frac{\partial u}{\partial t} + u \frac{\partial u}{\partial x} = \nu \frac{\partial^2 u}{\partial x^2} + f,
\end{equation}
to model the emergence and propagation of shocks. Physically, the nonlinear advection term drives the formation of steep gradients, while the diffusion term provides regularization. 

% numerical solver-related
We use a second-order finite difference scheme \citep{leveque2007finite} for spatial discretization and the explicit Euler method for temporal integration. 
The system is solved on a spatial domain $x \in [0, 1]$ discretized into $N_x=256$ grids. 
We simulate the system for $t\in[0,1]$ with the time step size $dt = 1 \times 10^{-4}$, and save $N_t=100$ snapshots in the trajectory.

% IC, BC, and forcing term
For the Burgers' equation, we enforce homogeneous Dirichlet boundary conditions, $u(0, t) = u(1, t) = 0$, by padding the field with zeros during each integration step. 
We generate ICs to simulate shock wave formation and propagation. The initial condition is constructed as a superposition of two Gaussian pulses with randomized parameters:
\begin{equation}
u_0(x) = \sum_{i=1}^{2} A_i \exp\left(-\frac{(x - \mu_i)^2}{2\sigma_i^2}\right)    .
\end{equation}

We introduce a time-varying external forcing term $f(x, t)$. The force field is generated as a sum of spatiotemporal Gaussian blobs (Random Gaussian Field approximation):
\begin{equation}
f(x, t) = \sum_{j=1}^{M} A_j \exp\left(-\frac{(x - \mu_{x,j})^2}{2\sigma_{x}^2}\right) \exp\left(-\frac{(t - \mu_{t,j})^2}{2\sigma_{t}^2}\right),
\end{equation}
where we set $M=8$. The spatial and temporal centers ($\mu_{x}, \mu_{t}$) are sampled uniformly over the domain $[0, L] \times [0, T]$. This ensures the forcing term varies smoothly but unpredictably across both space and time, preventing the model from overfitting to static boundary constraints.

% parameters
Based on the code implementation, the parameters are sampled uniformly as follows: locations $\mu_1 \sim \mathcal{U}[0.2, 0.4]$, $\mu_2 \sim \mathcal{U}[0.6, 0.8]$, amplitudes $A_1 \sim \mathcal{U}[0, 2.0]$, $A_2 \sim \mathcal{U}[-2.0, 0]$, and widths $\sigma_i \sim \mathcal{U}[0.05, 0.15]$.
% Opposite signs in the amplitudes are chosen to induce complex wave collisions.

\paragraph{Kuramoto-Sivashinsky (KS) equation.}

We investigate the Kuramoto-Sivashinsky equation, a paradigmatic model for spatio-temporal chaos \citep{trefethen2000spectral} in extended systems, governed by the fourth-order nonlinear PDE:
\begin{equation}
    \frac{\partial u}{\partial t} + u \frac{\partial u}{\partial x} + \frac{\partial^2 u}{\partial x^2} + \frac{\partial^4 u}{\partial x^4} = f,
\end{equation}
The dynamics are characterized by a competition between a large-scale instability induced by the anti-diffusion term $u_{xx}$ and small-scale damping from the hyper-diffusion term $u_{xxxx}$. Ground truth trajectories are generated using a high-precision pseudo-spectral method combined with a stiffly-stable fourth-order Exponential Time Differencing Runge-Kutta (ETDRK4) \citep{COX2002430,doi:10.1137/S1064827502410633} scheme. The system is discretized into $N_x = 256$ Fourier modes and integrated with a fine solver step of $\Delta t_{\text{solver}} = 5 \times 10^{-4}$ over a total duration of $T=200$. To ensure the system converges to the chaotic attractor and eliminates transient dynamics, we discard the initial evolution phase ($t \in [0, 100]$) and retain the subsequent 100 time units for analysis. To achieve diverse topological coverage, initial conditions $u(x,0)$ are sampled from a zero-mean Gaussian Process (GP). Crucially, we employ a periodic kernel to strictly enforce the boundary conditions:
\begin{equation}
    k(x, x') = \sigma^2 \exp\left(-\frac{2\sin^2(\pi|x-x'|/L)}{\ell^2}\right),
\end{equation}
where the length scale is set to $\ell=8.0$ and variance $\sigma=8.0$. The final dataset comprises 12,000 trajectories (10,000 training, 2,000 testing), temporally subsampled at $\Delta t_{\text{out}}=1.0$ to capture the long-term evolution of coherent structures.

\paragraph{Sensor placement.}
We uniformly place $N_s = 4$ sensors along the spatial domain $[0, L]$, with the first and last sensors anchored at the two boundary points. Observations are sampled at a fixed time interval identical to the state snapshot time interval in the dataset. For Burgers' equation, the two boundary sensors give effectively no information since the boundary values are fixed to zero by the Dirichlet condition, leaving only $2$ informative interior measurements; relative to the $256$-point state grid, this corresponds to a $128\times$ down-sampling. For the KS equation with periodic boundary conditions, the two boundary sensors coincide, so the four placements collapse to $3$ distinct measurement locations, corresponding to an $85\times$ down-sampling of the $256$-point state grid. The sensor placement and example trajectories are visualized in Figure~\ref{fig:sample_burgers_0_frame_t100}, \ref{fig:sample_burgers_traj}, \ref{fig:sample_ks_traj}, and \ref{fig:sample_ks_0_frame_t24}.
\subsection{2D System: Navier-Stokes Flow Past Multiple Cylinders}
\label{ns-2d}
We introduce a large-scale multi-body flow benchmark to evaluate the model's capacity for spatial generalization across disconnected flow structures. The system is governed by the 2D incompressible Navier-Stokes equations:
\begin{align}
    \frac{\partial \mathbf{u}}{\partial t} + (\mathbf{u} \cdot \nabla) \mathbf{u} &= -\nabla p + \frac{1}{Re} \nabla^2 \mathbf{u}, \\
    \nabla \cdot \mathbf{u} &= 0.
\end{align}
We configure a computational domain with a spatial resolution of $128 \times 128$ containing three stationary cylinders, subjected to a uniform inflow velocity $\mathbf{u}_{\text{in}} = (0, 1)$. As the fluid encounters these obstacles, boundary layer separation occurs, leading to the formation of low-pressure wake regions and unsteady Von Kármán vortex streets downstream of each cylinder \citep{von1911mechanismus,roshko1954development}. A core challenge lies in the localized scale variations, where individually varying cylinder diameters $D_i$ induce wakes with distinct spatial scales and shedding frequencies, governed by the Strouhal relationship:
\begin{equation}
    f_s \approx \frac{St \cdot U}{D_i}.
\end{equation}
Consequently, the model must simultaneously capture these heterogeneous local dynamics, ranging from high-frequency eddies behind small obstacles to slower, large-scale structures behind massive ones, while correctly maintaining a laminar background flow in the void regions. In our current setup, the cylinders are arranged approximately perpendicular to the inflow direction, ensuring that the three wake regions remain spatially independent with negligible mutual interference. We reserve the investigation of strong wake interactions and moving obstacles for future work.

To ensure robust generalization over physical and geometric parameters, we construct a large-scale dataset of 10,100 simulation environments using Latin Hypercube Sampling (LHS) \citep{Stein01051987}. The Reynolds number is sampled uniformly from $Re \sim \mathcal{U}[50, 500]$, covering the transition from steady recirculation bubbles to fully developed periodic shedding. To prevent geometric overfitting and force the model to learn the underlying fluid-structure interactions, the positions and diameters $D \in [0.8, 2.0]$ of three cylinders are randomized. Ground truth trajectories are generated via the \texttt{Lily Pad} \citep{weymouth2015lilypadrealtimeinteractive} solver using the Boundary Data Immersion Method (BDIM) \citep{WEYMOUTH20116233} with a spatial resolution of $128 \times 128$ and a temporal step of $\Delta t = 0.01$. The final dataset consists of sequence lengths of $T=50$ and is partitioned into 8000 samples for training, 2000 for validation, and a held-out set of 100 samples for testing.

\paragraph{Sensor placement.}

We consider two complementary observation regimes for the 2D Navier-Stokes system. The first is a structured sparse setting based on a coarse-grained uniform grid: we downsample the high-resolution spatial domain ($\Omega \in \mathbb{R}^{128 \times 128}$) by a factor of $scale=16$, placing sensors at the centroids of non-overlapping $16 \times 16$ local patches (with a half-interval spatial offset). This yields $N_{obs} = (128/16)^2 = 64$ stationary measurement points, a regular but low-resolution view of the flow dynamics. An illustrative snapshot of the sensor layout is shown in Figure~\ref{fig:S13_T20_3channels}.

The second is a \emph{single-moving-sensor} observation mode, designed to test extreme data compression. At each physical time step, a single mobile sensor samples only one scalar from the $128\!\times\!128$ velocity field, yielding a staggering $16384\!:\!1$ compression ratio. The sensor follows a deterministic trajectory: a horizontal zig-zag sweep (20-frame period) coupled with a linear vertical descent over the total duration $T$, bounded by a $10\%$ spatial margin. To ensure fair and consistent evaluation, this identical predefined trajectory is rigidly applied across all test samples. Consequently, we assume an idealized sensor without collision avoidance constraints; the joint optimization of active exploration efficiency and physical safety is left for future work. The scan path and quantitative results are presented in Figure~\ref{fig:single_sensor} and Appendix~\ref{app:full_results}.

\begin{figure}
    \centering
    \includegraphics[width=1\linewidth]{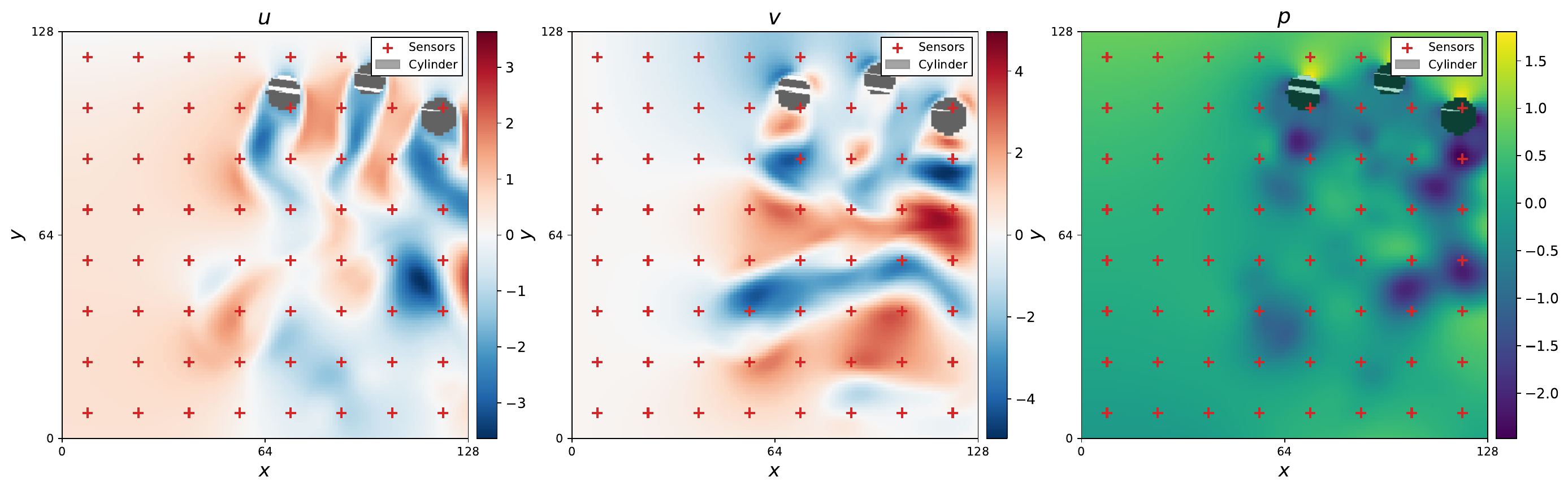}
    \caption{Illustration of the Navier-Stokes uniform $8\!\times\!8$ sensor placement and an example state.}
    \label{fig:S13_T20_3channels}
\end{figure}
\clearpage 
\begin{figure}
    \centering
    \begin{subfigure}{0.42\linewidth}
        \centering
        \adjincludegraphics[width=\linewidth, trim={0 0 {0.6667\width} 0}, clip]{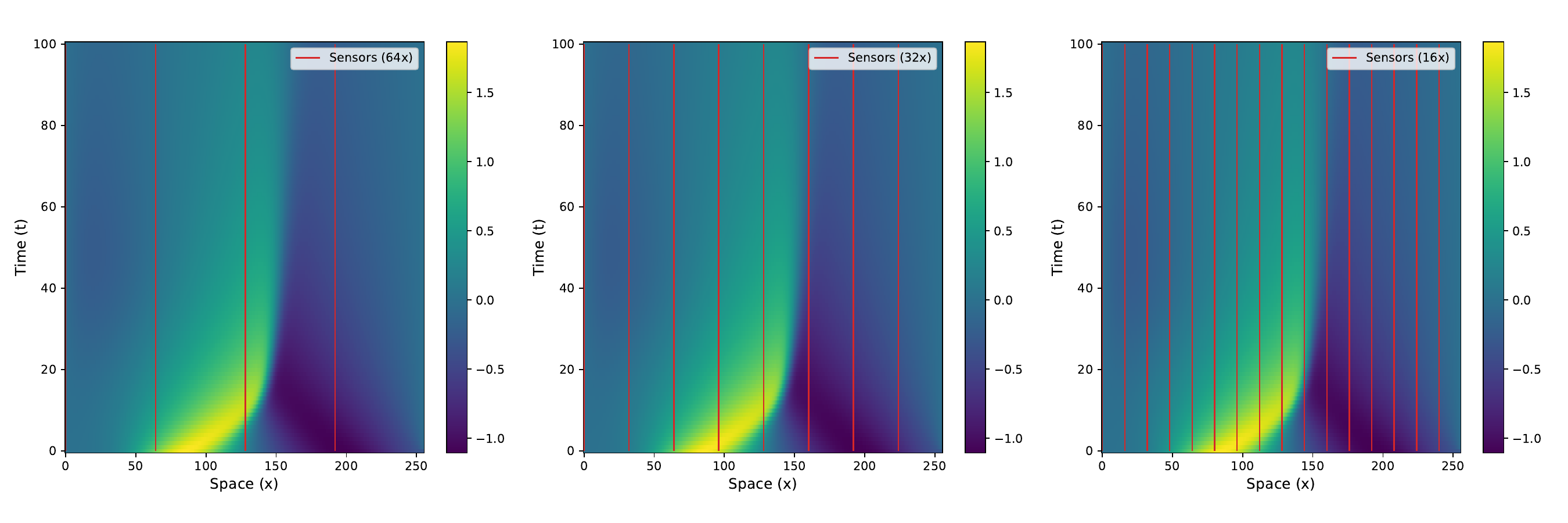}
        \caption{Entire trajectory.}
        \label{fig:sample_burgers_traj}
    \end{subfigure}\hfill
    \begin{subfigure}{0.48\linewidth}
        \centering
        \adjincludegraphics[width=\linewidth, trim={0 0 {0.6667\width} 0}, clip]{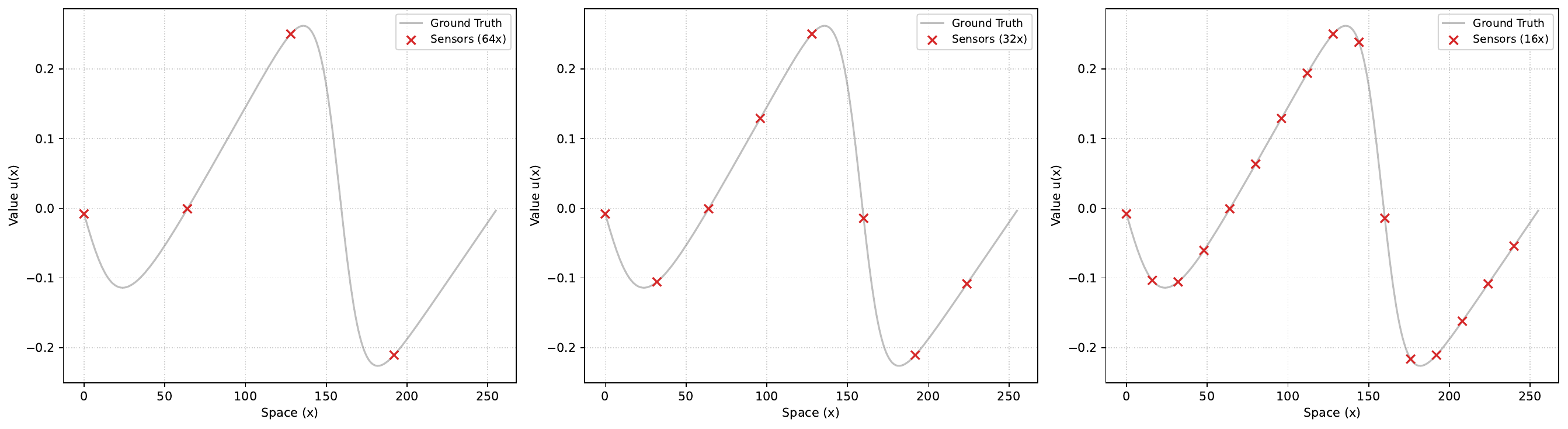}
        \caption{Single snapshot.}
        \label{fig:sample_burgers_0_frame_t100}
    \end{subfigure}
    \caption{Illustration of the Burgers' equation in the dataset. The four sensors are highlighted in red.}
\end{figure}

\begin{figure}
    \centering
    \begin{subfigure}{0.42\linewidth}
        \centering
        \adjincludegraphics[width=\linewidth, trim={0 0 {0.6667\width} 0}, clip]{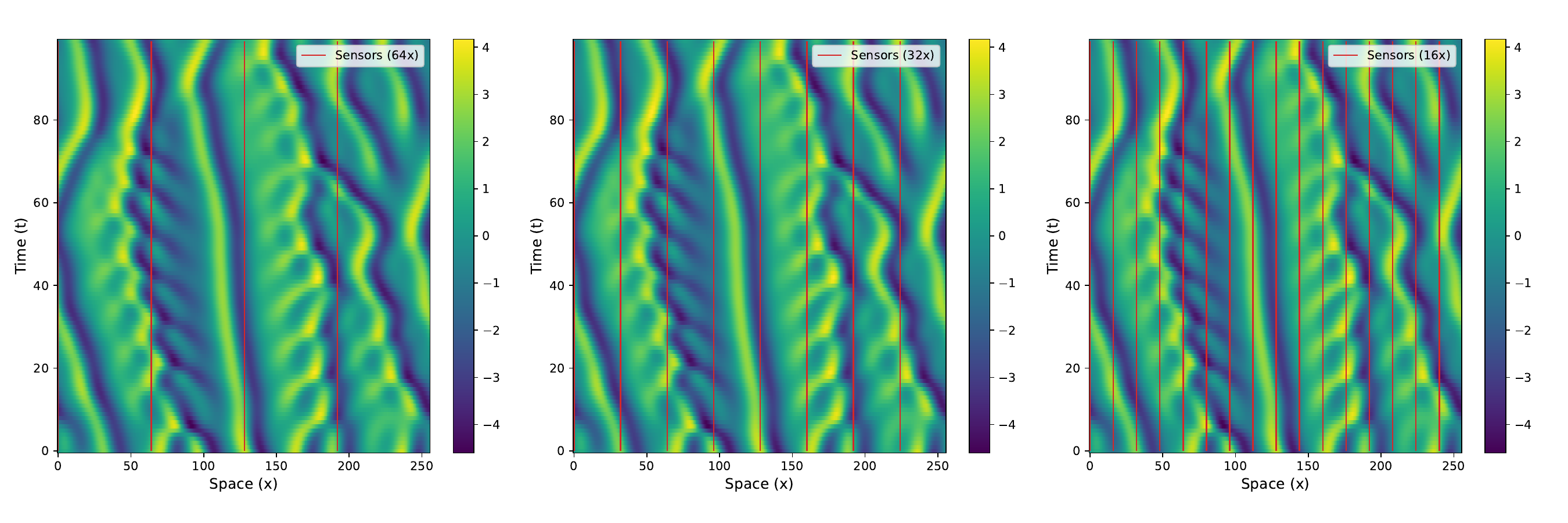}
        \caption{Entire trajectory.}
        \label{fig:sample_ks_traj}
    \end{subfigure}\hfill
    \begin{subfigure}{0.48\linewidth}
        \centering
        \adjincludegraphics[width=\linewidth, trim={0 0 {0.6667\width} 0}, clip]{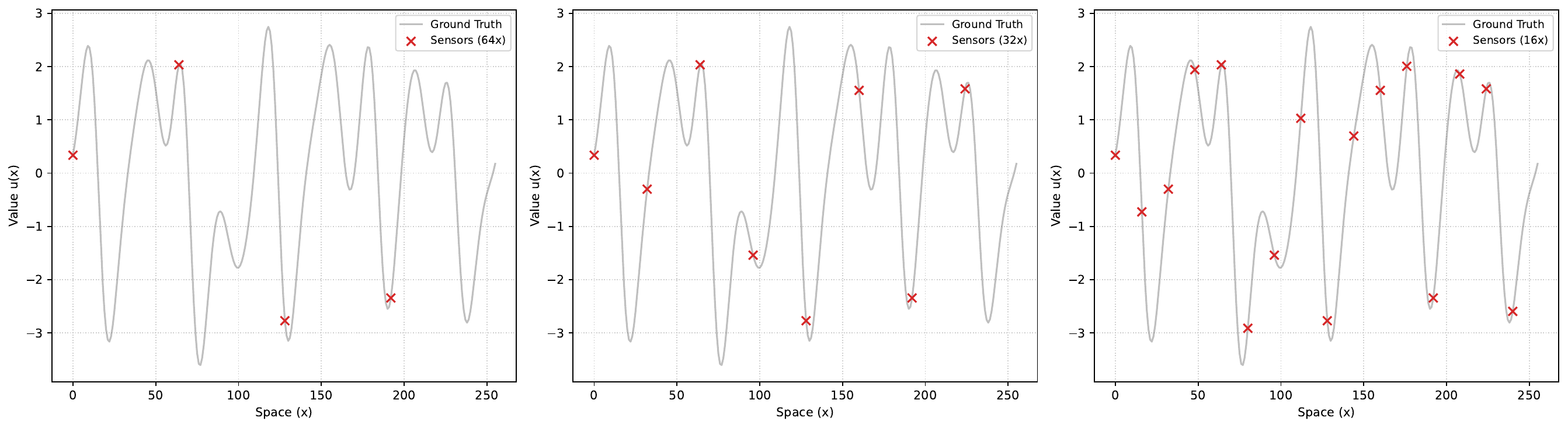}
        \caption{Single snapshot.}
        \label{fig:sample_ks_0_frame_t24}
    \end{subfigure}
    \caption{Illustration of the KS equation in the dataset. The four sensors are highlighted in red.}
\end{figure}
\clearpage

\subsection{Lorenz-63 System}
\label{app:lorenz}

The Lorenz-63 system is a classic three-dimensional chaotic ODE,
\begin{equation}
    \dot{X} = \sigma(Y-X), \qquad \dot{Y} = X(\rho-Z) - Y, \qquad \dot{Z} = XY - \beta Z,
\end{equation}
with the standard parameters $(\sigma, \rho, \beta) = (10, 28, 8/3)$, for which the system exhibits a butterfly-shaped strange attractor. We use it as a stress test for long-horizon recursive filtering: only the $Z$ coordinate is observed (with additive Gaussian noise, $\sigma_o=0.5$), while $X$ and $Y$ remain hidden. Because the Lorenz system is symmetric under the involution $(X,Y,Z)\mapsto(-X,-Y,Z)$ and the observation only sees $Z$, the filtering posterior is naturally bimodal in the $(X,Y)$-plane, providing a clean testbed for distributional accuracy.

\paragraph{Dataset.} Trajectories are integrated with an explicit fourth-order Runge-Kutta scheme at $\Delta t = 0.01$. To remove transients and ensure samples lie on the attractor, each trajectory is preceded by a $20$ time-unit burn-in starting from initial states drawn uniformly within $[-20,20]^3$. The dataset consists of $10^4$ training trajectories, each of length $T_{\text{train}}=100$, yielding $\approx\!1.2\times 10^5$ lobe-switching events in total, enough to densely cover the chaotic behavior of the attractor. Test trajectories are generated under the same protocol but are extended to $T_{\text{test}}\in\{1000, 4000\}$, i.e., $10\times$ and $40\times$ the training horizon, to evaluate generalization to far longer recursive inference.

\subsection{Tokamak Plasma Profile Estimation Benchmark}
\label{app:tokamak_dataset}

\paragraph{What a tokamak is, in one minute.}
A tokamak is a donut-shaped chamber that confines a hot ionized gas (a plasma) with strong magnetic fields, illustrated in the schematic of Figure~\ref{fig:tokamak}. To describe its state at any moment, plasma physicists collapse the 3D plasma volume to a 1D radial coordinate $\rho_N\in[0,1]$, where $\rho_N=0$ is the hot core at the centre of the donut's poloidal cross-section and $\rho_N=1$ is the cold plasma edge near the chamber wall. The full plasma state is then represented by three 1D profiles defined on this coordinate, the electron temperature $T_e(\rho_N)$, the electron density $n_e(\rho_N)$, and the magnetic flux $\psi(\rho_N)$. Together these three profiles summarise how energy and particles are distributed inside the plasma, and they evolve over time under a system of coupled, stiff, and nonlinear transport partial differential equations that play the role of the system dynamics.

\paragraph{Mapping the tokamak environment to a filtering problem.}
Cast in standard filtering terms, the tokamak environment fits cleanly into the partially observable Markov setup used throughout this paper. The state $s_t$ is the stack of three 1D profiles $(T_e, n_e, \psi)$ on $\rho_N$. The action $a_t$ encodes the operator's external knobs at time $t$, in particular the heating power and its spatial deposition through electron cyclotron heating. The transition $p(s_{t+1}\mid s_t, a_t)$ is given by the coupled transport solver described below. The observation $o_t$ comes from physical diagnostics mounted outside the chamber, which only report a sparse, often line-integrated, and nonlinear function of the state. The filter never sees the profiles themselves, only what the diagnostics report, and it must reconstruct a posterior over the full state $s_t$ from this stream of indirect measurements together with the running action sequence.

\paragraph{Why profile estimation matters in practice.}
Beyond serving as a clean filtering benchmark, accurate real-time estimation of $T_e$, $n_e$, and $\psi$ is a load-bearing capability for tokamak operation. Heating a plasma efficiently requires knowing where to deposit the heat, which in turn requires knowing how energy is currently distributed across the radial profile. Maintaining stable confinement requires the same information so that operators can react before instabilities have a chance to grow. This will become even more important for next-generation burning-plasma experiments such as ITER and follow-on commercial reactors, where the plasma must reach conditions in which fusion reactions release enough energy to keep the plasma hot on its own, and active control based on accurate profile estimates is needed to sustain these conditions over long pulses. A filter that reconstructs these profiles in real time from the available diagnostics is therefore not just a probabilistic-modelling exercise but also a practical building block for fusion control.

\paragraph{Why we additionally report the gradient error $\mathcal{E}_{\mathrm{grad}}$ on this benchmark.}
On the tokamak benchmark we report a fourth metric in addition to those defined in Appendix~\ref{app:metrics}, the channel-averaged gradient relative $L_2$ error
\begin{equation}
\mathcal{E}_{\mathrm{grad}} \;=\; \frac{1}{C}\sum_{c=1}^{C} \mathbb{E}_{t}\!\left[\frac{\lVert \partial_\rho \widehat{s}_t^{\,c} - \partial_\rho s_t^{*,c} \rVert_2}{\lVert \partial_\rho s_t^{*,c} \rVert_2}\right],
\end{equation}
where $\partial_\rho$ denotes the radial derivative of the channel-$c$ profile along the normalized coordinate $\rho_N\in[0,1]$, computed by finite differences on the discretization grid. The reason for this addition is that, in plasma operation, the radial \emph{gradients} of $T_e$ and $n_e$ are physically more important than their absolute values. Heat and particle fluxes are essentially driven by these gradients through transport coefficients, and most stability criteria used by tokamak physicists (such as edge-pedestal stability) are also gradient-driven. A filter whose mean prediction matches the truth in absolute value but smears the gradient out is therefore of limited practical value, since downstream analyses such as transport-coefficient estimation, turbulence diagnosis, and stability assessment all consume the local slope of the profile rather than its level.

\paragraph{Why this is a hard benchmark for AI filters.}
Three properties make this task a stress test for posterior estimators that have been trained mainly on PDE-like dynamics. First, the observation operator is highly nonlinear, because the POINT polarimetry channel reports a Faraday-rotation phase $\psi_F\!\propto\!\int n_e B_\parallel\,\mathrm d\ell$ where $B_\parallel$ is reconstructed from spatial derivatives of $\psi$, so the mapping is bilinear in the state pair $(n_e,\psi)$, and the ECE channel uses a signal-dependent noise model whose variance scales with the measured value. Second, the line-integrated diagnostics in the POINT system collapse the entire $n_e$ profile into a handful of scalar phases, which destroys most of the spatial information and forces the filter to lean heavily on its learned prior. Third, the system is actively controlled, so the filter must condition on the action sequence rather than treat the dynamics as autonomous, which differs from the unforced PDE benchmarks earlier in this paper.

The tokamak plasma profile estimation task targets a real-world filtering problem characterized by a highly nonlinear observation operator. The hidden state at each physical step $t$ consists of 1D electron-density $n_e(\rho_N)$, temperature $T_e(\rho_N)$, and flux $\psi(\rho_N)$ profiles defined on the normalized radial coordinate $\rho_N\!\in\![0,1]$ \citep{joung_gs-deepnet_2023}.
The evolution of these profiles is governed by coupled transport equations driven by external \textbf{actions} $a_t$ comprising the electron cyclotron power, deposition location, and width, which the filter must account for. 
Two types of diagnostics are included:
(1) electron cyclotron emission diagnostics with signal-dependent observation noise whose noise level is proportional to the signal, placed at 15 fixed locations on the radial coordinate $\rho_N$; and (2) the POINT system, which consists of 11 horizontal laser chords producing both an interferometry phase and a Faraday-rotation phase. It measures integrated line-of-sight quantities in the $(R,Z)$ plane, and is inherently nonlinear.

\textbf{Dataset Generation.} Ground-truth trajectories are produced by a 1D coupled transport solver (state scheme \texttt{coupled\_te\_psi}) anchored to the real EAST shot 152444. The flux geometry $\psi(R,Z)$ and the boundary $\rho_N(R,Z)$ map are taken directly from the EFIT g-file of that shot, while the base profiles for $T_e$, $n_e$ and $\psi$ are loaded from the same shot's reconstruction and used as the nominal initial condition. The electron temperature evolves through the Felici-Fable form-factor transport model under an implicit-Euler integrator, while $n_e$ is held at its randomized initial profile (electron density evolution is disabled to keep the focus on heat transport). Each trajectory spans $0.1$\,s of plasma evolution at $\Delta t=10^{-3}$\,s ($1$\,kHz, matching typical tokamak diagnostic rates), and snapshots are saved every $5$ solver steps to give $\approx 20$ frames per trajectory.

Inter-trajectory diversity is induced by two independent randomizations rather than a single global perturbation: (i) six initial-profile parameters (core gain, width, and edge gain for both $T_e$ and $n_e$) are drawn uniformly from explicit ranges around the shot-152444 base profiles, \textit{e.g.}\ $T_e$-core $\in[0.50,1.15]$, $T_e$-width $\in[0.32,0.80]$, $n_e$-core $\in[0.70,1.30]$; and (ii) the actuator trace is a piecewise-constant electron-cyclotron-heating (ECH) schedule with $K=4$ launchers, where for each segment the deposition radius $\rho_{\text{dep}}\!\in\![0.05,0.85]$, deposition width $w_{\text{dep}}\!\in\![0.12,0.28]$, and injected power $P_{\text{in}}\!\in\![0.2,0.9]$\,MW are resampled, segment durations are drawn uniformly from $[25,80]$\,ms, and each launcher independently has a $10\%$ probability of remaining off after activation.

\textbf{Observation Operator and Noise Model.} Two physical diagnostic systems are simulated, yielding three logical observation channels in total. (1)~An electron-cyclotron-emission (ECE) array reads $T_e$ at a set of fixed radial locations $\rho_N$ taken from \texttt{ECE\_Te\_shot152444.mat}, sampling the current $T_e$ profile at those $\rho_N$ values directly. (2)~A POINT-style polarimetry/interferometry system places $11$ horizontal sight-lines at fixed heights $z\!\in\!\{-0.425,\,-0.340,\,\ldots,\,+0.425\}$\,m and integrates $n_e$ along each sight-line (after a 512-point Riemann discretization in $R$) using a fixed $\rho_N(R,Z)$ map drawn from the EFIT g-file of shot 152444, which produces an interferometry phase $\phi$ that is linear in $n_e$ and a Faraday-rotation phase $\psi_F\!\propto\!\int n_e\,B_\parallel\,\mathrm d\ell$, where $B_\parallel$ is reconstructed from spatial derivatives of $\psi$. Because $B_\parallel$ depends on $\psi$, the Faraday channel is bilinear in the state pair $(n_e,\psi)$ and is the principal source of nonlinearity in the observation operator.

Diagnostic noise is added independently in physical units. ECE follows a signal-dependent Gaussian model $y=y_{\text{clean}}+\epsilon$, $\epsilon\!\sim\!\mathcal N(0,\sigma^2)$ with $\sigma=\sqrt{\sigma_{\text{abs}}^2+(\sigma_{\text{rel}}\,|y_{\text{clean}}|)^2}$, $\sigma_{\text{abs}}=20$\,eV and $\sigma_{\text{rel}}=0.03$. POINT phases use additive Gaussian noise of $\sigma_{\phi}=1.0^\circ$ for the interferometry channel (with an additional shared common-mode contribution of $0.35^\circ$ per chord-bundle), and $\sigma_{\psi_F}=0.1^\circ$ for the Faraday-rotation channel. Given the large scale disparity between physical quantities (e.g.\ $n_e\!\sim\!10^{20}$\,m$^{-3}$ vs.\ $T_e\!\sim\!10^{4}$\,eV), a per-channel, per-spatial-point mean/std normalization computed from the training set is applied prior to model ingestion.

\subsection{Evaluation Metrics}
\label{app:metrics}

A Bayesian filter targets the full posterior, so we evaluate along three complementary axes: \emph{mean accuracy}, \emph{spectral fidelity of individual samples}, and \emph{calibration of the posterior coverage}. Concretely, we report the channel-averaged relative $L_2$ error $\mathcal{E}_{L_2}$, the energy-spectrum error $\mathcal{E}_{\mathrm{spec}}$, and the per-pixel Miscalibration Area (MA).

\paragraph{Relative $L_2$ error ($\mathcal{E}_{L_2}$).} Let $s_t^{*,c}$ denote the ground-truth state of channel $c$ at time $t$, and $\widehat{s}_t^{\,c}$ the filtered mean estimate (sample average over the ensemble). We compute the relative $L_2$ error per channel and then average over channels:
\begin{equation}
\mathcal{E}_{L_2} \;=\; \frac{1}{C}\sum_{c=1}^{C} \mathbb{E}_{t}\!\left[\frac{\lVert \widehat{s}_t^{\,c} - s_t^{*,c} \rVert_2}{\lVert s_t^{*,c} \rVert_2}\right].
\end{equation}

\paragraph{Energy-spectrum error ($\mathcal{E}_{\mathrm{spec}}$).} A low mean error does not imply that individual samples respect the physical spectrum. We therefore evaluate spectral fidelity at the level of \emph{each generated sample}, not the ensemble mean. For sample $s_t^{(i)}$ we compute its energy spectrum $E^{(i)}(k)$ (1D Fourier spectrum for the 1D PDEs and the tokamak profiles, radially binned 2D Fourier spectrum for Navier-Stokes), take its logarithm, and evaluate the mean absolute error against the ground-truth log spectrum on a system-specific wavenumber band $[k_{\min},\,k_{\max}]$:
\begin{equation}
\mathcal{E}_{\mathrm{spec}} \;=\; \mathbb{E}_{t,\,i}\!\left[\frac{1}{k_{\max}-k_{\min}+1}\sum_{k=k_{\min}}^{k_{\max}} \big|\log E^{(i)}(k) - \log E^{*}(k)\big|\right].
\end{equation}
We fix $k_{\min}=1$ throughout and set $k_{\max}=200$ (KS), $500$ (Burgers'), $85$ (Navier--Stokes), and $200$ (tokamak), chosen to cover the resolved physical range of each system while excluding numerical-noise-dominated tails. Evaluating $\mathcal{E}_{\mathrm{spec}}$ per sample (rather than on the mean) is what distinguishes a probabilistic filter that produces physically realistic posterior draws from one whose mean is accurate but whose samples are over-smoothed or noise-like.

\paragraph{Per-pixel Miscalibration Area (MA).} To quantify whether \ourmethod's uncertainty aligns with the actual posterior spread, we adopt the miscalibration measure of~\citet{chung_uncertainty_2021}, evaluated independently at every spatial location. Given the observation history $h_t$, at each pixel $x$ we draw an ensemble of $100$ samples from \ourmethod and let $\widehat{Q}_\alpha(x;h_t)$ denote the empirical $\alpha$-quantile of the marginal predictive distribution at $x$ obtained from this ensemble. For a target coverage $q\in[0,1]$, the central predicted interval at $x$ is
\begin{equation}
Q_q(x;h_t) \;=\; \big(\widehat{Q}_{(1-q)/2}(x;h_t),\;\widehat{Q}_{(1+q)/2}(x;h_t)\big),
\end{equation}
and the miscalibration at coverage $q$ is the gap between the empirical coverage rate and the promised coverage:
\begin{equation}
\mathrm{MC}(q;x) \;=\; \big|\Pr_{s_t,\,h_t\sim P_{\mathrm{data}}}\!\big(s_t(x)\in Q_q(x;h_t)\big) \;-\; q\big|.
\end{equation}
Averaging over $q$ yields the per-pixel Miscalibration Area $\mathrm{MA}(x)=\int_0^1 \mathrm{MC}(q;x)\,\mathrm{d}q$, which we approximate by uniform discretization over $q$. We report the value averaged across all pixels and channels. Smaller MA indicates that the predicted intervals are simultaneously well-calibrated at every spatial location, not merely on average.

\section{Baseline Specifications}
\label{app:baselines}

In this appendix, we provide the technical specifications of our baselines.

We evaluate our method against a comprehensive set of baselines categorized into three distinct paradigms: classical data assimilation, deterministic supervised learning, and score-based generative modeling. Unless otherwise stated, neural baselines are tuned to have comparable parameter counts to our model for fair comparison.

\subsection{Classical Data Assimilation (DA) with Learned Surrogates}
To ensure a fair comparison with deep learning baselines, we strictly exclude analytical governing equations (e.g., Navier-Stokes) from the DA frameworks. Instead, we adopt a data-driven strategy where the system dynamics are learned from the training data. This setup evaluates the filters' ability to perform inference using imperfect, learned surrogates rather than ground-truth physics. We investigate two distinct data-driven DA approaches:

\paragraph{EKF.}
By default, we follow the standard Extended Kalman Filter formulation~\citep{kalman_new_1960}: the (non-linear) time evolution simulator computes Jacobian $\partial f_\theta/\partial \mathbf{x}$ via automatic differentiation since the corresponding solvers are rewritten in PyTorch with closed form implicit scheme (rather than multiple Picard or Newton iterations), and the full-state covariance is propagated through the linearized dynamics at each step. This vanilla implementation is used for the 1D Burgers' and KS systems, and the tokamak benchmark, where the ambient state dimension is moderate ($<1000$D).

For the 2D Navier-Stokes benchmark, however, propagating a full $\approx (128^2)^2$ covariance is computationally intractable, so we adopt a reduced-order variant denoted \textbf{POD-LKF}. We first apply Proper Orthogonal Decomposition (POD)~\citep{lumley1967structure,berkooz1993proper} to project the state $\mathbf{x} \in \mathbb{R}^{128^2}$ onto the top $r=30$ energetic modes. The temporal evolution of the latent coefficients $\mathbf{z} \in \mathbb{R}^{30}$ is then modelled as a global linear system $\mathbf{z}_{t+1} = \mathbf{A} \mathbf{z}_t$, with $\mathbf{A}$ identified by linear least-squares on training trajectories. Inference is carried out by a standard Linear Kalman Filter entirely within this latent space, which tracks the dominant flow features while avoiding any operation on the full ambient state~\citep{willcox2006unsteady}.

\paragraph{EnKF.}
By default, we use the standard stochastic Ensemble Kalman Filter~\citep{evensen2009sequential}: an ensemble of particles is propagated through the simulator to estimate the flow-dependent background covariance, sparse observations are assimilated with perturbed measurements. This configuration is used for the 1D Burgers' and KS systems and for the tokamak benchmark.

For the 2D Navier-Stokes benchmark, we instantiate the same EnKF inference loop with a resolution-invariant deep operator surrogate, and refer to this variant as \textbf{FNO-EnKF}. Specifically, we replace the learned surrogate with a Fourier Neural Operator (FNO)~\citep{li_fourier_2021} with $k=16$ modes and hidden width $d=64$, trained on the full physical state. To mitigate the stability issues common in long-term recursive prediction, the FNO is trained with an autoregressive pushforward strategy~\citep{brandstetter2022message}, unrolling the computation graph for $k=10$ steps to explicitly minimize error accumulation. The remaining EnKF hyperparameters ($N_e$, $\sigma_{obs}$, $\sigma_{proc}$) are kept identical to the default setting, so that the only change relative to the 1D case is the choice of surrogate.

\subsection{Supervised Learning Baselines}
We include supervised learning baselines to investigate the necessity of probabilistic modeling in current tasks. These deep learning methods learn a deterministic mapping $f_\theta$ from the observation and action sequences $\tau_{t}$ to the mean estimation of the posterior state $\hat{s}_t$. 
We introduce two Transformer-based \citep{vaswani_attention_2017} baselines that share the same temporal encoder architecture but differ in their decoding heads for state reconstruction.

We employ a standard Transformer encoder to process the sequence of historical observations $\tau_t=(o_{[1:t]},a_{[1:t-1]})$. The temporal dependencies are aggregated into the last token, which serves as a compact latent representation of the current system state. 
To enable parallel training, we input the entire sequence into the Transformer and compute loss with all tokens' outputs. $\hat{s}_t$ conditions only on past information $\tau_t$ is enforced by the causal attention mask in the Transformer model.

Next, we introduce the decoders utilized by the two baselines:

\paragraph{Transformer-CNN (Transformer).} In this baseline variant, we utilize a projection-based residual deconvolution decoder for spatial reconstruction. Specifically, the aggregated latent token is first linearly projected to the initial spatial dimension. Subsequently, the feature map is processed by $N$ layers of 1D transposed convolutions equipped with residual connections to generate the final state estimate.
We denote this baseline as ``Transformer'' in the text for brevity.

\paragraph{Transformer-FNO (FNO).} In this variant, we employ a Fourier Neural Operator (FNO) \citep{li_fourier_2021} as the decoding head. The latent token conditions the FNO, which performs global convolutions in the spectral domain to reconstruct the continuous function of the current state directly. We denote this baseline as ``FNO'' in the text for brevity.

\subsection{Diffusion Generative Model-Based Baselines}

\paragraph{Score-based diffusion assimilation (SDA).}
We adopt the trajectory inference framework proposed by \citet{rozet_score-based_2023}. SDA learns a joint score-based prior over state trajectories by approximating the global score with local conditional scores (Markov blankets). 
This allows the model to approximately capture temporal correlations efficiently.

We implement SDA framework using a continuous-time Variance Exploding (VE) SDE, discretized into 100 steps \citep{song_score-based_2021}. The model undergoes unconditional pretraining via denoising score matching, optimized using Adam with Exponential Moving Average (EMA) on model weights. During this pretraining stage, the model uses sequences consisting of $T=20$ snapshots as training units, where the score function $\nabla_{s_{[1:t]}} \log p_t(s_{[1:t]})$ is parameterized by a Video U-Net architecture to explicitly capture spatiotemporal correlations and learn the joint distribution. 

During inference, zero-shot reconstruction is performed using a Predictor-Corrector guidance sampler. Specifically, it utilizes a Langevin Corrector with a signal-to-noise ratio (SNR) of 0.128 and 1 corrector step per iteration. 
The pre-trained unconditional score is augmented with an analytically approximated likelihood gradient $\nabla_{s_{[1:t]}} \log p(h_t|s_{[1:t]})$ derived from the observation operator which, in Gaussian noise observation settings, simplifies to $\nabla_{s_{[1:t]}} \|h_t - \mathcal{H}(s_{[1:t]})\|_2^2$.

This method belongs to the trajectory-level posterior sampling category, modeling the conditional distribution $p(s_{[1:t]} | h_t)$ via iterative denoising, representing the state-of-the-art in generative Bayesian inference in physical systems.

\paragraph{Score-based generative model (S$^3$GM).} S$^3$GM \citep{li2024learning} also falls into the category of trajectory-level posterior sampling. It utilizes the same framework including training setup and backbone network as SDA, but balances the temporal consistency and computation effort by training multiple models with shorter time chunks and concatenating the partially denoised samples from adjacent time chunks during inference.

To achieve long-horizon sampling, the model generates 10 snapshots per call and constructs the full sequence by setting an overlap of $m=2$ snapshots. It concatenates partially denoised samples from adjacent time chunks to execute this process while ensuring sequence consistency.

\paragraph{Flow-based data assimilation (FlowDAS).}
FlowDAS \citep{chen_flowdas_2025} is a generative data assimilation framework based on stochastic interpolants. It learns a deterministic drift function conditioned on historical states. Consistent with other baselines, we instantiate the FlowDAS drift model using a U-Net architecture and the same sampling strategy as our flow-based BFF.

During the autoregressive inference phase, FlowDAS employs conditioning on samples generated in the last step to execute the dynamic propagation step and gradient guidance to conduct posterior sampling on sensor observations. To ensure a strictly fair comparison and avoid granting the baseline an unfair information advantage, we operate FlowDAS in the "observation-only" conditioning mode. Specifically, instead of using the unobserved ground-truth fields to initialize, the initial historical conditioning frames are reconstructed by sampling from the prior distribution, \textit{i.e.}, the training dataset rather than from the same test trajectory.

\section{Comprehensive Experimental Results}
\label{app:full_results}
\subsection{Computation Efficiency}
\label{app:efficiency}

\paragraph{The dominant cost is ODE integration, not the TTT update.}
We profile \ourmethod{}'s inference cost against the vanilla unconditional flow matching (FM) model (Figure~\ref{fig:inference_time_benchmark}). Across the commonly used range of $10$ to $200$ ODE steps, the TTT layer adds only marginal overhead on top of FM; per-step latency is essentially set by the number of ODE integration steps. The TTT weight update by itself is light: on the tokamak benchmark it accounts for only $5.11$\,ms per step (top row of Table~\ref{tab:ode_ablation}), while a $50$-step Euler ODE integration on top of it brings the total to $225.25$\,ms. Against external baselines (Table~\ref{tab:efficiency}), \ourmethod{} at $100$ ODE steps is already $\sim\!20\times$ faster than diffusion baselines (S$^3$GM, SDA), which must sample the full spatio-temporal trajectory at every physical step. Training is not as cheap as inference, since the inner-loop second-order derivative does add nontrivial cost, but the overhead remains moderate, with the additional training-step cost staying within $\sim\!30\%$ of vanilla flow matching in a large-batch regime (effective batch size $\!50$, Figure~\ref{fig:training_time_benchmark}).

\paragraph{Reducing ODE steps preserves point-estimation accuracy.}
Because ODE integration dominates inference cost, the natural lever for speedup is reducing the step count. We ablate this systematically on the tokamak benchmark (Table~\ref{tab:ode_ablation}): the relative-$L_2$ mean error $\mathcal{E}_{L_2}$ is essentially saturated, with even Euler at $5$ steps matching the $50$-step setting on $\mathcal{E}_{L_2}$ while dropping per-step latency from $225$\,ms to $19$\,ms. The same pattern holds on the 1D KS and 2D Navier--Stokes benchmarks: reducing the ODE steps from $100$ to $2$ leaves the point-estimation error nearly unchanged (and in some cases slightly \emph{improves} it, plausibly because fewer ODE steps act as an implicit regularizer that suppresses accumulated sampling noise), bringing the per-step cost to $\sim\!11$\,ms on KS and $\sim\!23$\,ms on Navier--Stokes. For real-time deployments such as tokamak plasma control, whose control loops typically run at $\sim\!10$\,Hz (i.e., $\sim\!100$\,ms per-step budget), this means \ourmethod{} comfortably fits the latency budget and can additionally be operated at very small step counts to leave headroom for downstream control computation without sacrificing point-estimation quality.

\paragraph{Calibration is more sensitive, but a higher-order solver recovers it.}
On the distributional axis the picture is different. On the tokamak benchmark, the Miscalibration Area (MA) degrades sharply when the Euler solver is run with too few steps ($0.0374\!\to\!0.6219$ as the step count drops from $50$ to $2$). This is a direct artifact of coarse flow integration: in flow matching, the learned velocity field $v_\theta(x_t,t)$ defines an ODE $\dot{x}_t=v_\theta(x_t,t)$ that transports a Gaussian prior to the target posterior along a continuous trajectory, and a few large Euler steps systematically under-shoot this trajectory wherever the flow is curved, so the discretized samples contract toward the posterior mean and underestimate its spread. Crucially, this is easy to fix: because the degradation lies in the numerical integrator rather than in \ourmethod{} itself, simply switching to a midpoint solver, which evaluates the flow at the segment midpoint and absorbs the leading curvature term, fully restores the multi-step MA at a fraction of the cost. At only $5$ midpoint steps we obtain MA$\,=\,0.0225$ at $36.92$\,ms/step, which is \emph{better} than $50$ Euler steps ($0.0374$ at $225.25$\,ms/step) on every metric.
We adopt the midpoint solver with $5$ ODE steps as the default \ourmethod{} configuration reported in the main text for the tokamak benchmark, since it delivers the lowest MA across all tested configurations while keeping per-step latency comparable to the small-step Euler runs.

\begin{table}[t]
\centering
\small
\caption{\textbf{Inference time and performance comparison.} All methods are benchmarked on a single NVIDIA H800 GPU. We report the update time per physical time step and the relative $L_2$ error ($\mathcal{E}_{L_2}\!\!\downarrow$) on the 1D KS equation (4 sensors) and 2D Navier-Stokes ($8\!\times\!8$ stationary sensors) benchmarks, both under noisy observation conditions ($\sigma=0.1$). The performance levels are maintained strictly consistent with the main results presented in Table~\ref{tab:all_datasets_results}.}
\label{tab:efficiency}
\setlength{\tabcolsep}{8pt} % 稍微放宽列间距，让精简后的表格更加舒展
\begin{tabular}{@{}lcccc@{}}
\toprule
\multirow{2}{*}{\textbf{Method}} & \multicolumn{2}{c}{\textbf{KS Equation}} & \multicolumn{2}{c}{\textbf{NS Equation}} \\
\cmidrule(lr){2-3} \cmidrule(lr){4-5}
 & $\mathcal{E}_{L_2}\!\!\downarrow$  & Time (s) & $\mathcal{E}_{L_2}\!\!\downarrow$ & Time (s) \\
\midrule
EKF                  & $1.4186$ & $0.015$ & $0.662$ & $0.016$ \\
FNO                      & $0.9873$ & $0.008$ & $0.309$ & $0.013$ \\
% Transformer              & $0.9621$ & $0.003$ & $0.309$ & $0.004$ \\
S$^3$GM (100 steps)      & $0.9828$ & $9.92$  & $1.135$ & $11.30$ \\
SDA (100 steps)          & $0.9622$ & $8.51$  & $0.195$ & $8.10$  \\
\midrule
\ourmethod (100 steps)   & $0.9061$ & $0.415$ & $0.165$ & $0.644$ \\
\ourmethod (2 steps)     & $0.8819 $   & $0.011$ & $0.162$ & $0.023$ \\
\bottomrule
\end{tabular}
\end{table}

\begin{figure}
    \centering
    \includegraphics[width=1\linewidth]{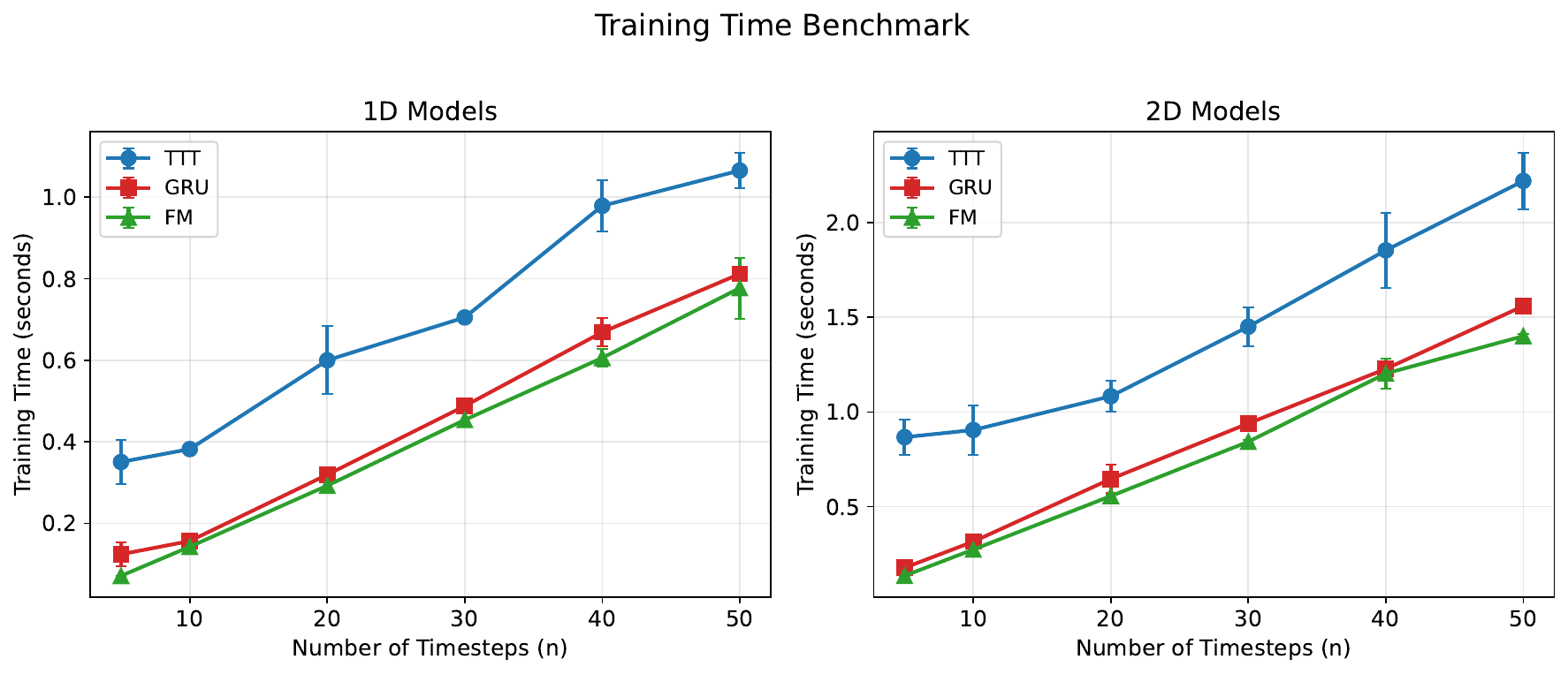}
    \caption{\textbf{Training wall-clock time comparison.} \ourmethod, the recurrent hidden state-based (GRU) variant, and the vanilla flow matching (FM) model across different numbers of unrolled timesteps.}
    \label{fig:training_time_benchmark}
\end{figure}

\begin{figure}
    \centering
    \includegraphics[width=1\linewidth]{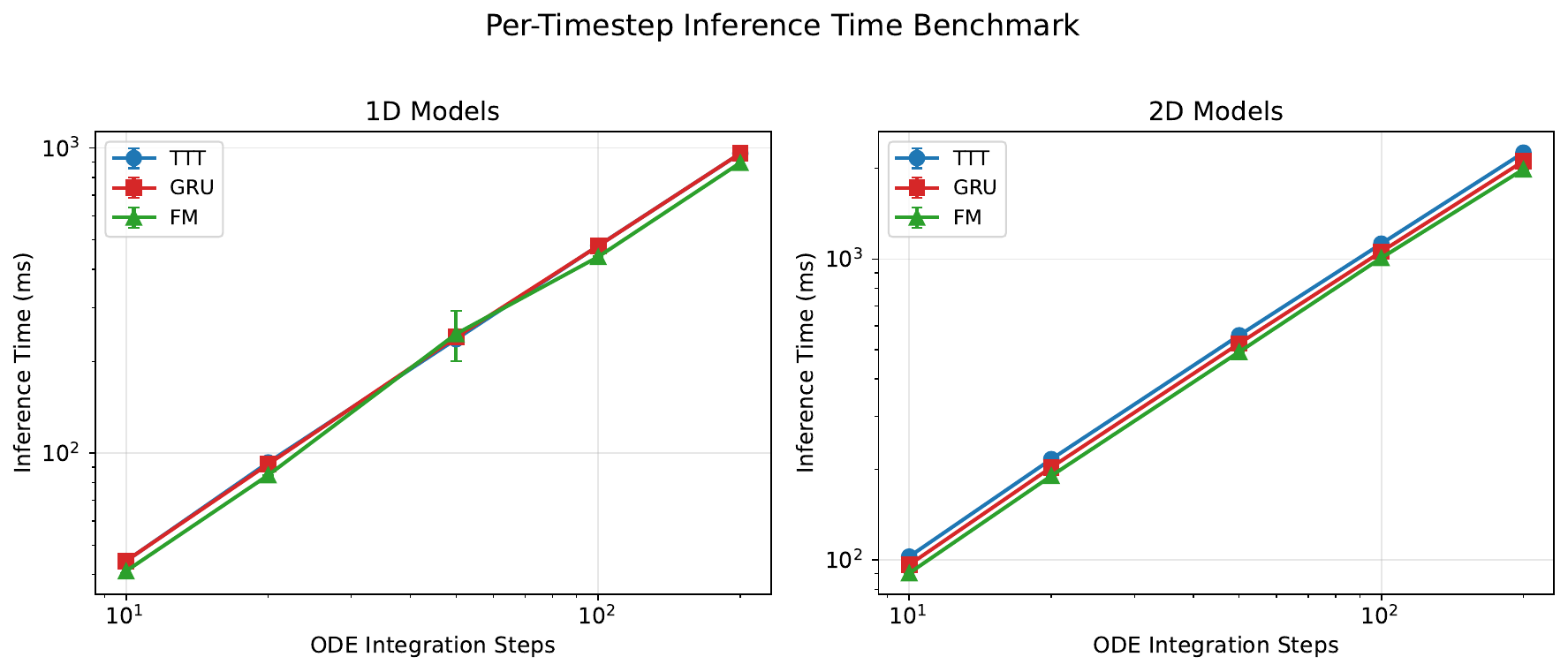}
    \caption{\textbf{Inference time comparison.} \ourmethod, the GRU variant, and the vanilla FM model across different ODE integration steps.}
    \label{fig:inference_time_benchmark}
\end{figure}

\begin{table}[tb]
    \centering
    \small
    \caption{\textbf{Ablation on ODE integration steps and solver choice on the tokamak benchmark.} The first row reports the cost of the TTT weight update alone (no ODE integration). Per-step inference latency is measured on a single NVIDIA H800 GPU. \textbf{Bold} denotes the best in each column.}
    \label{tab:ode_ablation}
    \label{app:ode_ablation}
    \begin{tabular}{@{}llccccc@{}}
    \toprule
    \textbf{Solver} & \textbf{ODE steps} & \textbf{ms/step $\downarrow$} & \textbf{$\mathcal{E}_{L_2}\downarrow$} & \textbf{$\mathcal{E}_{\mathrm{grad}}\downarrow$} & \textbf{$\mathcal{E}_{\mathrm{spec}}\downarrow$} & \textbf{MA $\downarrow$} \\
    \midrule
    \multicolumn{2}{@{}l}{TTT W update only (no ODE)} & 5.11 & --- & --- & --- & --- \\
    \midrule
    Euler & 50 & 225.25 & \textbf{0.0193} & \textbf{0.0471} & 0.0102 & 0.0374 \\
    Euler & 10 &  49.88 & \textbf{0.0193} & 0.0476 & 0.0108 & 0.1233 \\
    Euler &  5 &  18.66 & \textbf{0.0193} & 0.0490 & 0.0115 & 0.2529 \\
    Euler &  2 &   5.30 & 0.0197 & 0.0717 & 0.0150 & 0.6219 \\
    \midrule
    Midpoint & 5 &  36.92 & \textbf{0.0193} & 0.0486 & 0.0101 & \textbf{0.0225} \\
    Midpoint & 2 &   9.61 & 0.0198 & 0.0714 & \textbf{0.0095} & 0.1157 \\
    \bottomrule
    \end{tabular}
\end{table}

\subsection{Ablation Study: Weight-Space Belief Representation Is Better Than Latent Hidden States}
\label{app:ablation}
\begin{figure}
    \centering
    \includegraphics[width=0.5\linewidth]{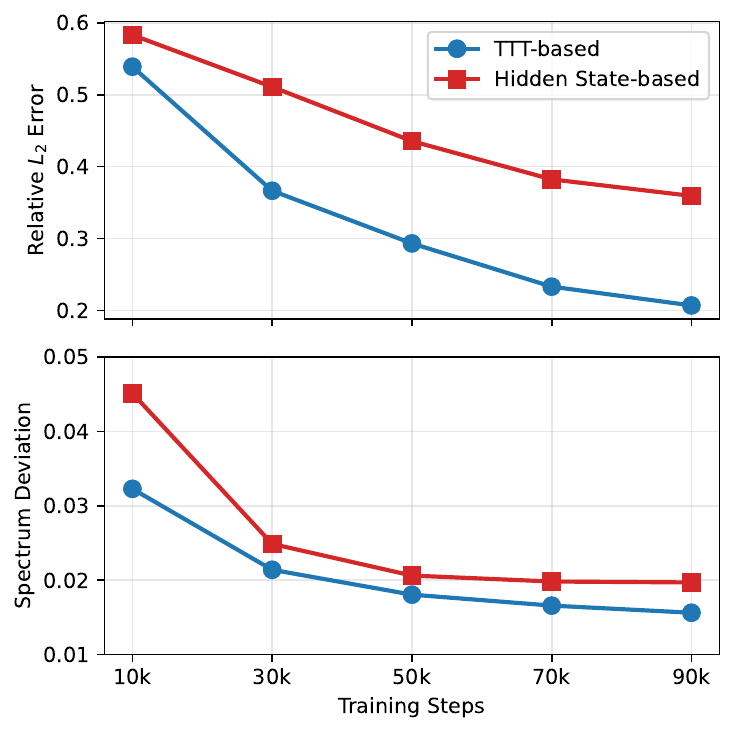}
    \caption{\textbf{Detailed training evolution of the ablation study on 2D Navier-Stokes.} The TTT mechanism significantly reduces tracking error and physical inconsistencies compared to standard recurrent architectures.}
    \label{fig:ns_2d_ablation}
\end{figure}
Vectors in latent space can also encode belief states without the need of a large number of particles.
We ablate the TTT block by replacing it with a flow matching model conditioned on a GRU hidden state (``hidden-state-based''), keeping the backbone, training data, and shared hyperparameters identical. As shown in Figure~\ref{fig:ns_2d_ablation}, the TTT block significantly improves performance on the complex 2D-flow posterior, exposing an inherent limit of recurrent hidden states: a fixed-dimensional latent cannot encode the full posterior that weight-space memory carries.

\subsection{Full Results on Lorenz-63 System}

As shown in Figure~\ref{fig:bimodal_visualizations}, \ourmethod's posterior remains symmetrically bimodal out to $T_{\text{test}}=1000$ ($10\times$ the training horizon, Figure~\ref{fig:lorenz_traj}), and the Wasserstein-2 distance to a $10^4$-particle bootstrap reference stays bounded out to $T=4000$ ($40\times$ training, Figure~\ref{fig:lorenz-w2}). 
The absence of error accumulation supports our design choice: % of modeling the Markov transition of the belief. 
As our TTT update is also Markovian, it is compatible with the sequential structure of Bayesian filtering, learning one-step transition correctly produces a stable filter that can operate recursively. 
Filtering anchors the posterior to fresh observations at every step, and the Markov TTT update re-aligns the weights by construction, so the learned operator $\mathcal{K}_\phi$ generalizes rather than overfits.

\paragraph{Results.} Figure~\ref{fig:lorenz-w2} reports the Wasserstein-2 distance between \ourmethod's posterior and a bootstrap particle filter with $10^4$ particles, averaged over $10$ test trajectories of length $T=4000$. The $W_2$ distance remains bounded over the entire $40\times$-training horizon, with no notable error accumulation, and the inset panels confirm that the posterior shape stays well-aligned with the particle-filter reference at selected times. Figure~\ref{fig:lorenz_traj} additionally visualizes the evolving posterior over a $T_{\text{test}}=1000$ trajectory: \ourmethod captures the two symmetric posterior modes simultaneously, with one mode consistently tracking the ground truth, and uncertainty correctly inflates near chaotic lobe-switching events, mirroring the system's local instability. Together, these findings show that the TTT update has internalized the recursive Bayesian operator rather than the training trajectory length: filtering anchors the posterior to fresh observations at every step, and the weight-level correction in \ourmethod re-aligns to each new observation by construction.

\begin{figure}[t]
    \centering
    \includegraphics[width=0.85\linewidth]{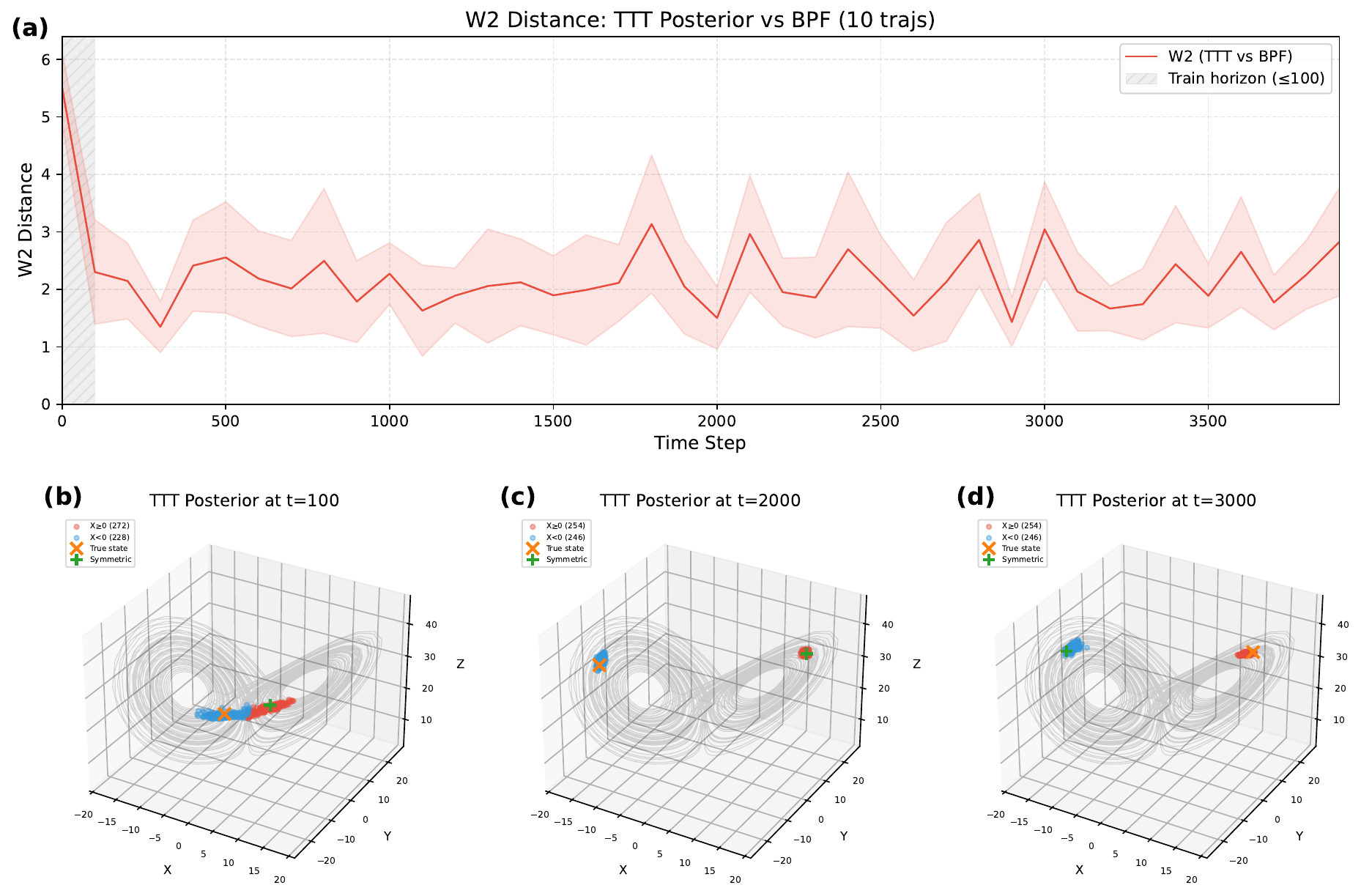}
    \caption{\textbf{Long-horizon stability on Lorenz-63.} We report the Wasserstein-2 distance between the ground-truth filtering posterior (estimated by a bootstrap particle filter with $10^4$ particles) and the posterior predicted by \ourmethod as a function of filtering time, extended to $T=4000$, that is, $40\times$ longer than training. Averaged over $10$ trajectories. The recursive update does not exhibit notable error accumulation. Inset panels at selected time steps further confirm that \ourmethod's posterior remains well-aligned with the ground truth.}
    \label{fig:lorenz-w2}
\end{figure}

\begin{figure}[!htbp]
    \centering
    \includegraphics[width=0.6\linewidth]{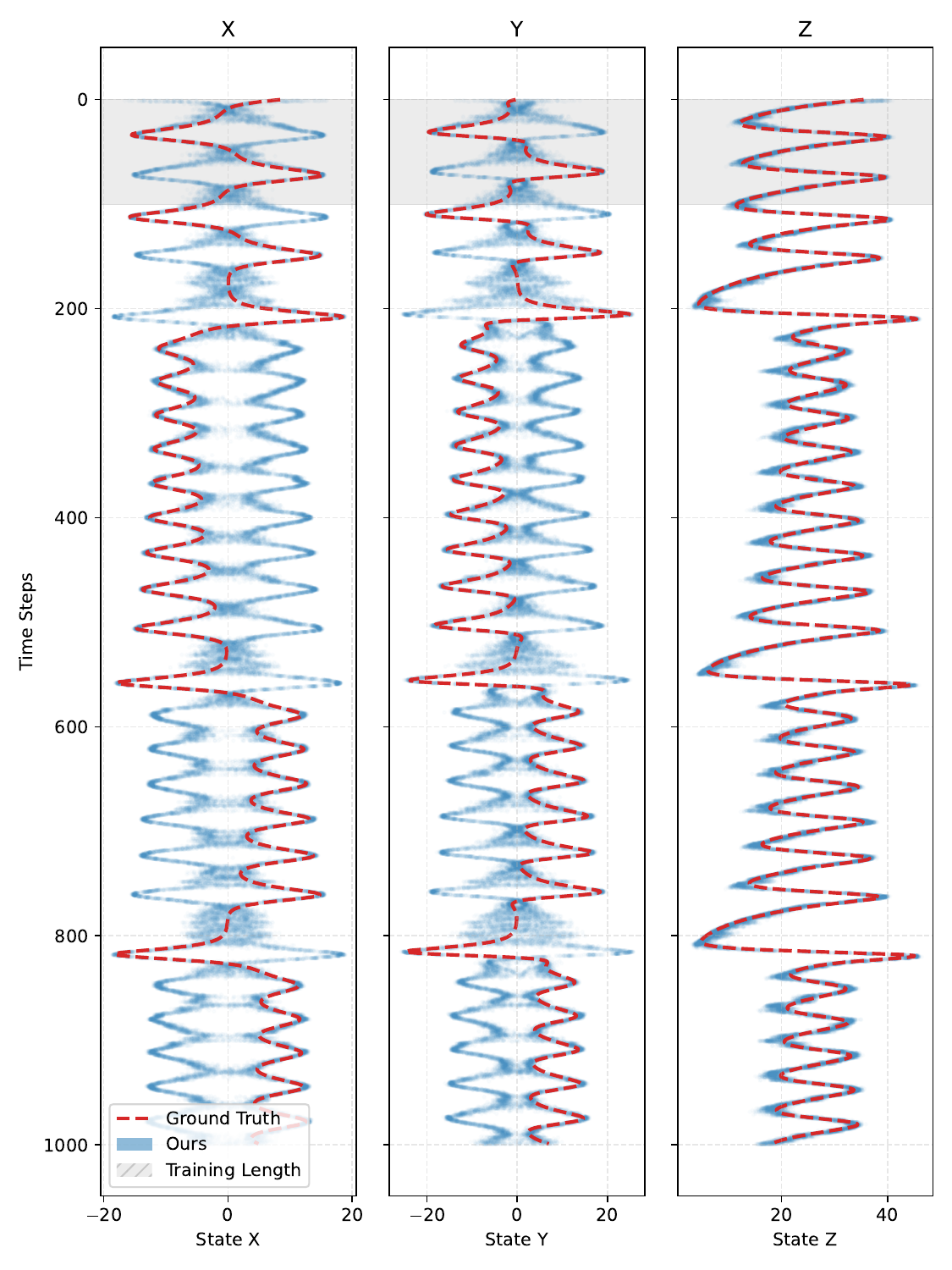}
    \caption{\textbf{Trajectory visualization on Lorenz-63.} $100$ samples per step (darker = higher density). \ourmethod captures the two symmetric modes, one consistently tracking ground truth, with uncertainty inflating near lobe switches.}
    \label{fig:lorenz_traj}
\end{figure}

\subsection{Trajectory Visualization on the 1D KS Benchmark}
\label{app:full_results_ks}

The relative $L_2$ error of \ourmethod on 1D KS is $\mathcal{E}_{L_2}\!\approx\!0.91$ (Table~\ref{tab:all_datasets_results}), which in absolute terms looks high and could be misread as the filter not having learned the dynamics. Figure~\ref{fig:combined_traj_ks} aims to resolve this concern by visualizing the recovered space-time fields against the ground truth. The \ourmethod posterior mean tracks the qualitative KS dynamics, including the locations and orientations of the dominant traveling structures and the spatial scale of the cellular pattern, throughout the rollout. The elevated rel-$L_2$ score is driven by a small number of localized regions where the predicted field is shifted in phase or amplitude relative to the true field, and these regions disproportionately inflate a normalized $L_2$ norm because KS exhibits sharp, large-amplitude features whose absolute error is amplified. Reading the metric alongside the visualization confirms that the rank ordering across methods is meaningful (\ourmethod is the best $\mathcal{E}_{L_2}$ on KS in Table~\ref{tab:all_datasets_results}), and that the absolute number reflects the difficulty of KS rather than a failure of the filter to capture the underlying flow.

\begin{figure}[tb]
    \centering
    \includegraphics[width=\linewidth]{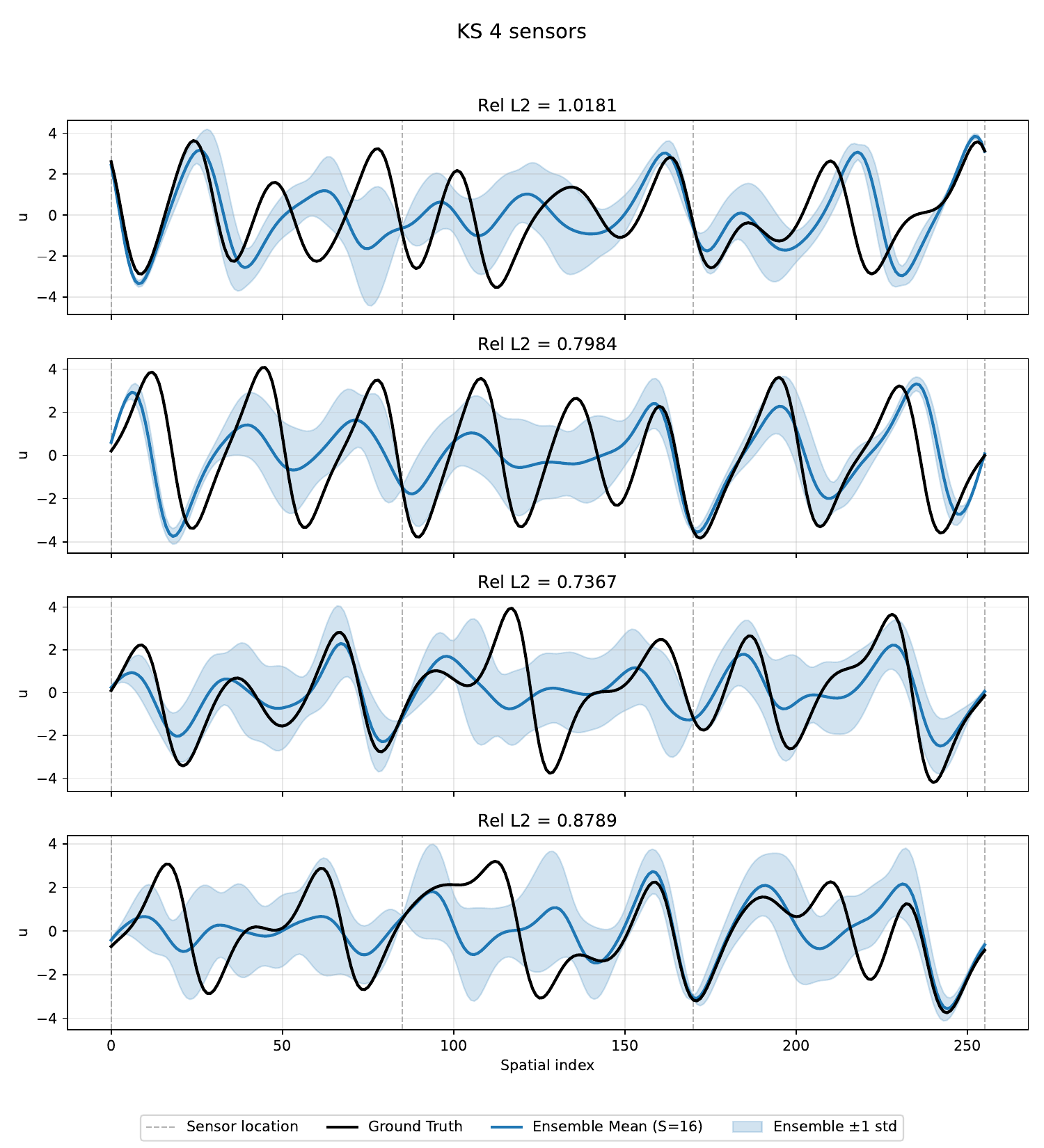}
    \caption{\textbf{Trajectory visualization on the 1D KS benchmark.} Space-time fields of the ground truth and the \ourmethod posterior mean over a representative test rollout. Despite the rel-$L_2$ score being $\sim\!0.9$ in absolute terms, \ourmethod recovers the qualitative dynamics of the KS equation, with the dominant traveling structures and cellular spatial pattern correctly placed. The remaining error is concentrated in where phase or amplitude deviate from the truth, and these regions dominate the normalized $L_2$ score because KS has large-amplitude sharp features.}
    \label{fig:combined_traj_ks}
\end{figure}

\subsection{Full Results on 2D NS System}
\label{app:full_reults_ns}
% Table~\ref{tab:merged_noisy_side_by_side} presents the results for the Burgers' and KS equation dataset, and 
Table~\ref{tab:all_datasets_results} includes the Navier-Stokes settings' results. All observations are corrupted with Gaussian noise ($\sigma=0.1$).

Beyond quantitative accuracy, we evaluate the physical consistency of the reconstructed fields. As illustrated in Figure~\ref{fig:vorticity_viz}, the 2D Navier-Stokes flow exhibits a complex turbulent pattern where wakes from multiple cylinders interact. While deterministic mean-prediction methods like FNO produce over-smoothed and non-physical blurring under uncertainty, \ourmethod successfully recovers physically consistent turbulent structures and sharp wake interactions.

Furthermore, we investigate the spatial awareness of the model under the challenging single moving sensor setting. As shown in Figure~\ref{fig:scan-prediction}, \ourmethod accurately reconstructs the implicit cylinder position and the surrounding flow field purely from sparse moving observations, whereas the Transformer baseline exhibits significant spatial deviation and blurring. To understand the dynamics of this active exploration, we track the binomial entropy and the ensemble standard deviation of the predicted probability maps for the cylinders and their associated wake regions (Figure~\ref{fig:entropy_std_drop}). As the exploration progresses and more informative measurements are assimilated into the belief state, both metrics exhibit a significant decline. This effectively demonstrates that \ourmethod actively reduces spatial ambiguity and precisely localizes the implicit obstacles over time.

Beyond structural fidelity, we further validate the probabilistic calibration and temporal stability of \ourmethod. As visualized in Figure~\ref{fig:vorticity_uncertainty_map}, we compute the vorticity for each generated sample and extract the standard deviation to form a spatial uncertainty map. This predicted uncertainty tightly correlates with the actual spatial error field, confirming that \ourmethod stably captures the underlying posterior distribution and accurately assigns higher uncertainty to regions with complex, unpredictable wake interactions.

Furthermore, we analyze the temporal dynamics of information assimilation in Figure~\ref{fig:ns_ttt_error_vs_time}. During the initial phase of the filtering process, the state estimation error drops rapidly as historical observations are sequentially accumulated. After approximately 10 time steps, the error stabilizes and maintains a steady plateau. This temporal dependency aligns perfectly with the Markovian nature of the filtering problem: while recent observations effectively reduce the uncertainty of the belief state, highly distant historical information provides diminishing returns, demonstrating that \ourmethod avoids error accumulation and achieves long-term stability

\subsection{Non-Gaussian Posterior on the Tokamak Benchmark}
\label{app:tokamak_posterior}

To complement the multimodal visualizations in the main text, we examine the shape of the posterior on the tokamak plasma profile estimation task, where the observation operator is highly nonlinear (Appendix~\ref{app:tokamak_dataset}). Figure~\ref{fig:tokamak_pair_corner_Te} shows a pair-plot of the electron-temperature residual $T_e - \widehat{T}_e$ at four selected radial coordinates $\hat\rho\in\{0.10,\,0.40,\,0.70,\,0.90\}$, aggregated over $N=20{,}000$ samples drawn from \ourmethod across trajectories, prediction horizons, and time steps.

\begin{figure}[tb]
    \centering
    \includegraphics[width=0.85\linewidth]{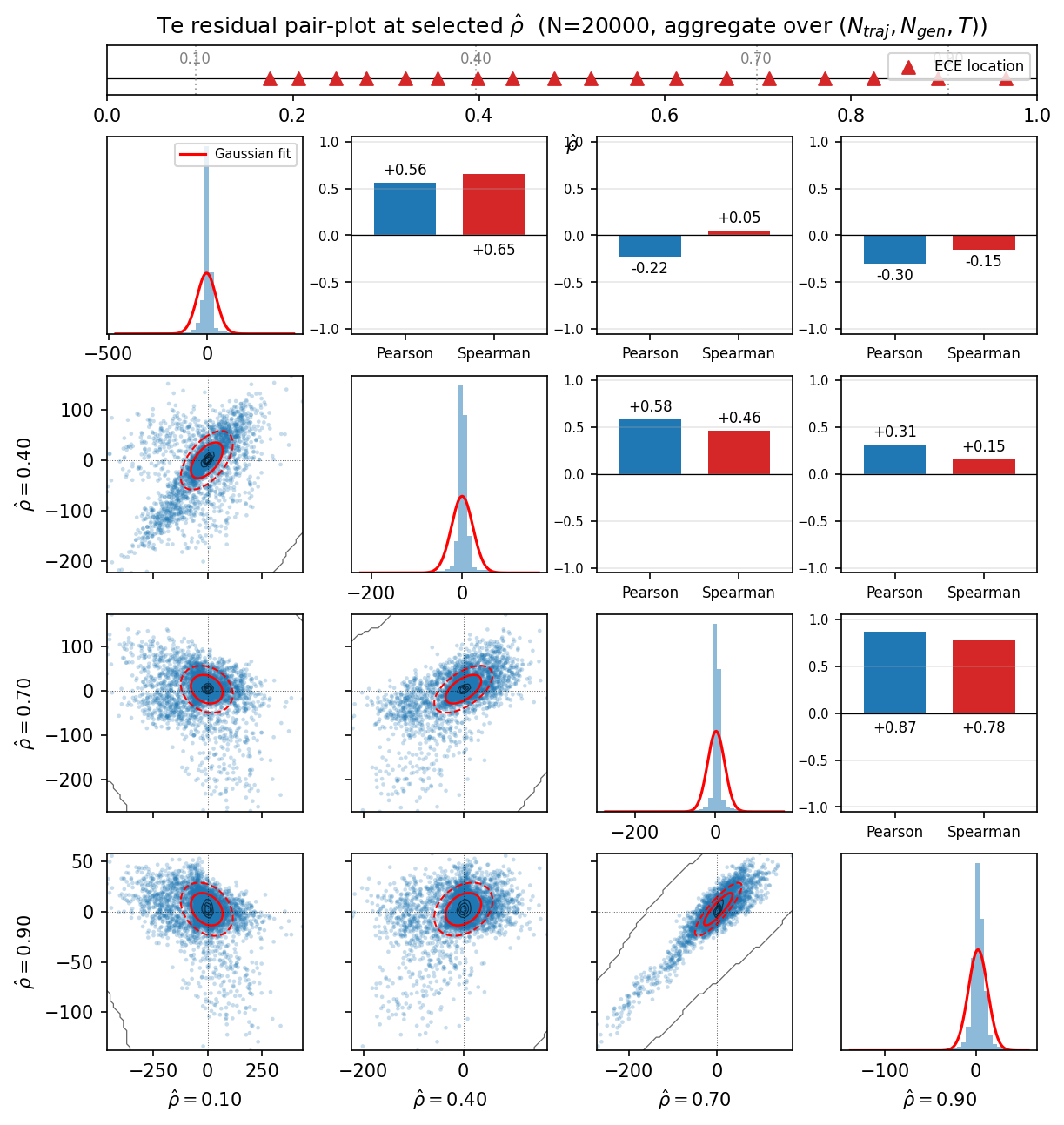}
    \caption{\textbf{Non-Gaussian posterior structure on the tokamak benchmark.} Pair-plot of the electron-temperature residual $T_e-\widehat{T}_e$ at four normalized radial coordinates $\hat\rho\in\{0.10,\,0.40,\,0.70,\,0.90\}$, aggregated over $N=20{,}000$ samples (across trajectories, prediction horizons, and time steps). The top strip marks ECE diagnostic locations along $\hat\rho\in[0,1]$ (red triangles). \textbf{Diagonal:} marginal residual distributions with a Gaussian fit (red); the marginals exhibit sharp peaks, heavier tails, and visible skew, deviating clearly from the best-fit Gaussian. \textbf{Lower triangle:} joint scatter plots between residual pairs, revealing nontrivial coupling structure: near independence at the core ($\hat\rho\!=\!0.10$ vs.\ $\hat\rho\!=\!0.40$) but strong, nearly collinear dependence near the edge ($\hat\rho\!=\!0.70$ vs.\ $\hat\rho\!=\!0.90$). \textbf{Upper triangle:} Pearson and Spearman correlation coefficients for each pair, summarizing the strength of these dependencies. Together, the marginals and joints confirm that \ourmethod represents a high-dimensional, non-Gaussian belief that a single Gaussian approximation cannot capture.}
    \label{fig:tokamak_pair_corner_Te}
\end{figure}

\begin{figure}[tb] 
    \centering
    \includegraphics[width=\linewidth]{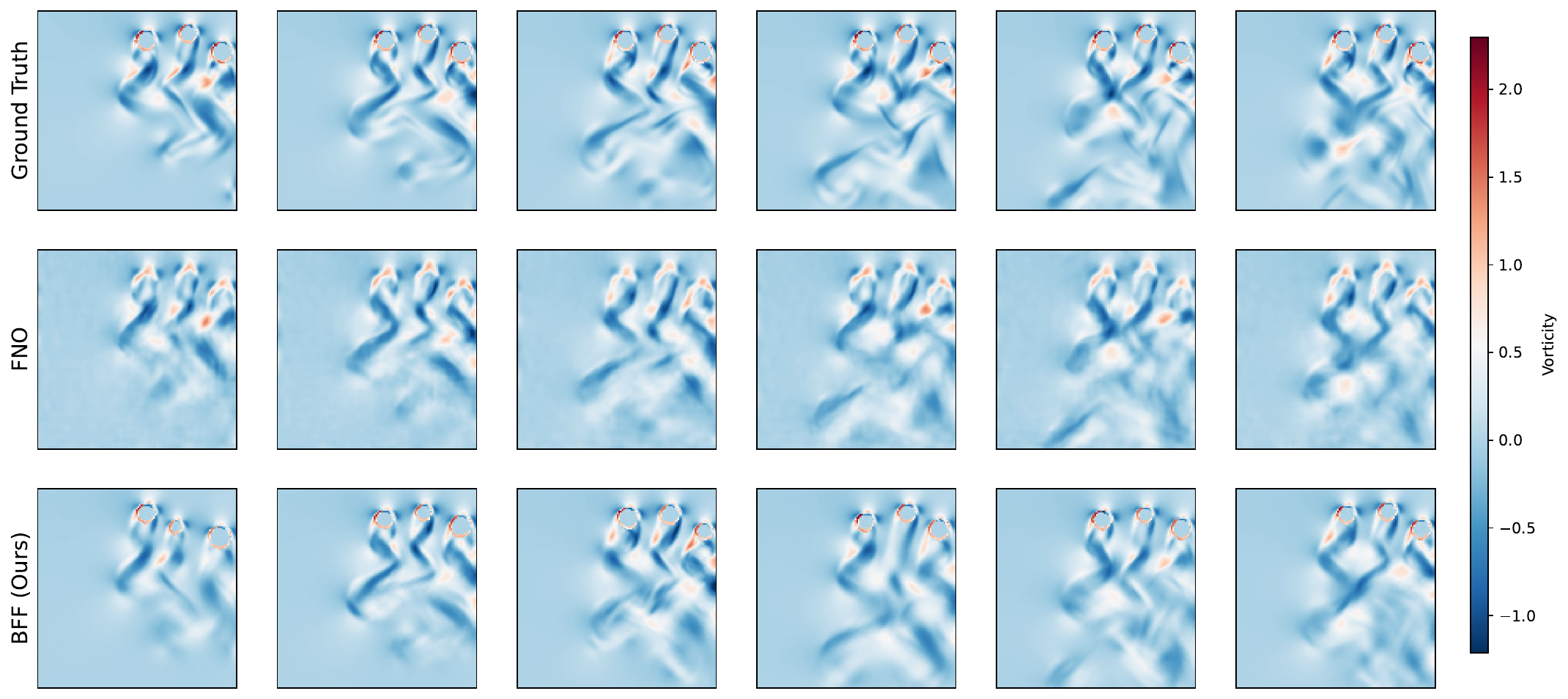}
    \caption{Qualitative comparison of 2D Navier-Stokes vorticity fields between the ground truth, FNO, and \ourmethod.}
    \label{fig:vorticity_viz}
\end{figure}

\begin{figure}[htbp]
  \centering
  \includegraphics[width=\linewidth]{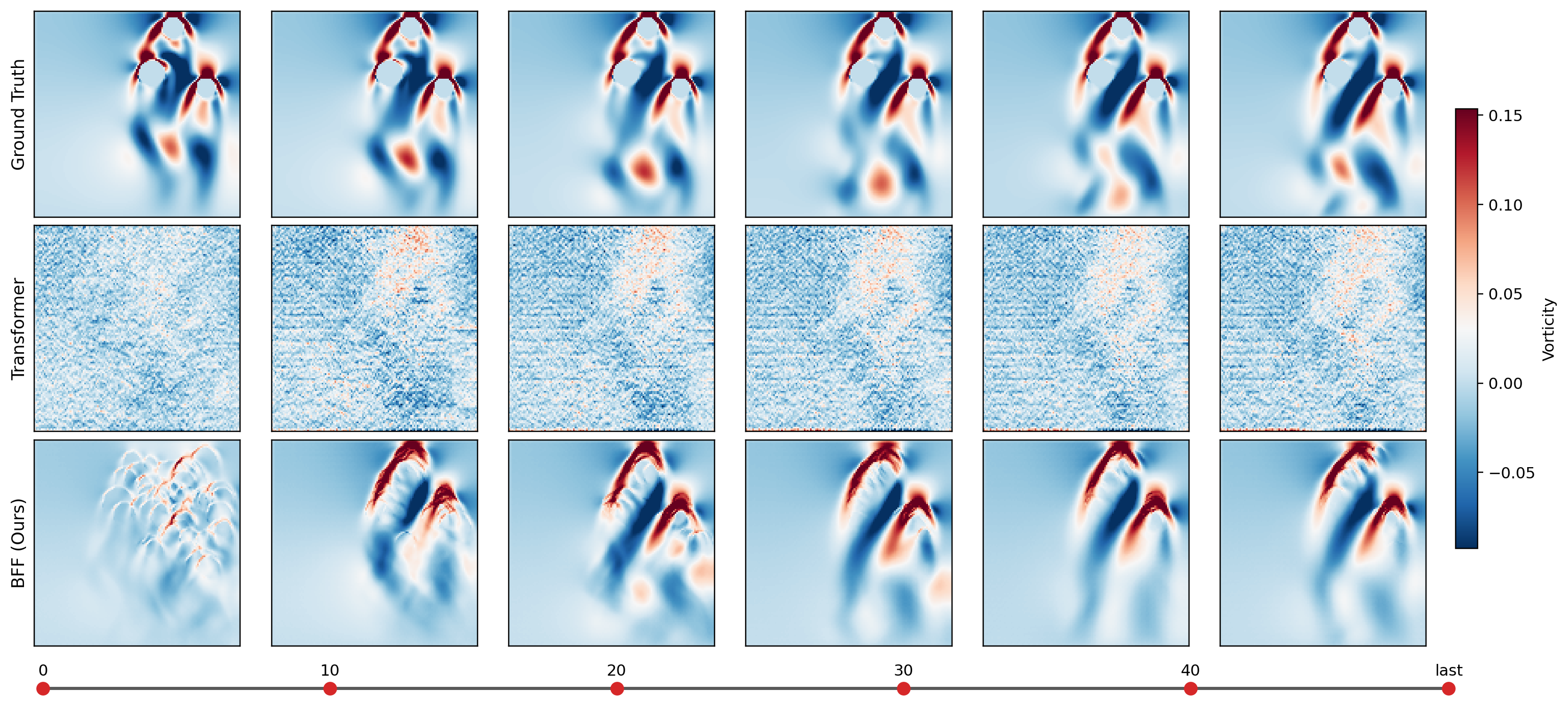}
  \caption{Qualitative comparison of single-sample predictions at the final frame under the moving sensor setting. Final $L_2$ error: \ourmethod ($0.330$) vs. Transformer ($0.769$).}
  \label{fig:scan-prediction}
\end{figure}

\begin{figure}[htbp]
  \centering
  \includegraphics[width=\linewidth]{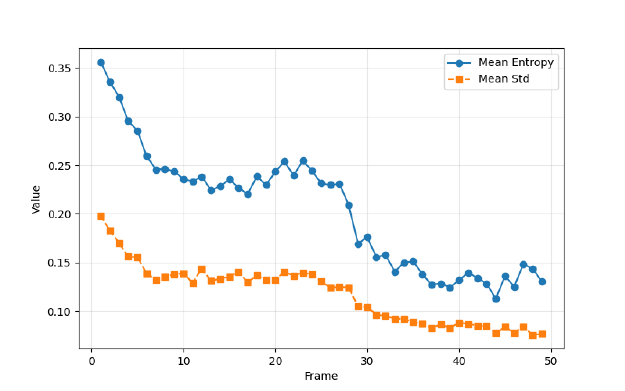}
  \caption{Evolution of spatial uncertainty under the single moving sensor setting. The uncertainty is quantified by tracking the binomial entropy and the ensemble standard deviation of the predicted probability maps for the cylinders and their associated wake regions.}
  \label{fig:entropy_std_drop}
\end{figure}

\begin{figure}[tb]
    \centering
    \includegraphics[width=\linewidth]{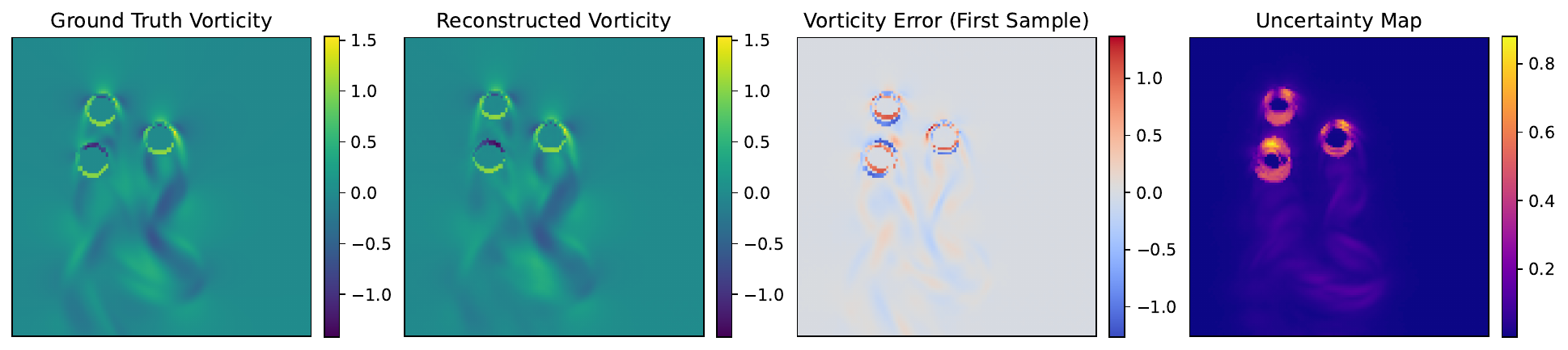}
    \caption{Qualitative comparison between the spatial error field and the predicted uncertainty map (standard deviation of sample vorticities) for the 2D Navier-Stokes benchmark.}
    \label{fig:vorticity_uncertainty_map}
\end{figure}

\begin{figure}[tb]
    \centering
    \includegraphics[width=0.5\linewidth]{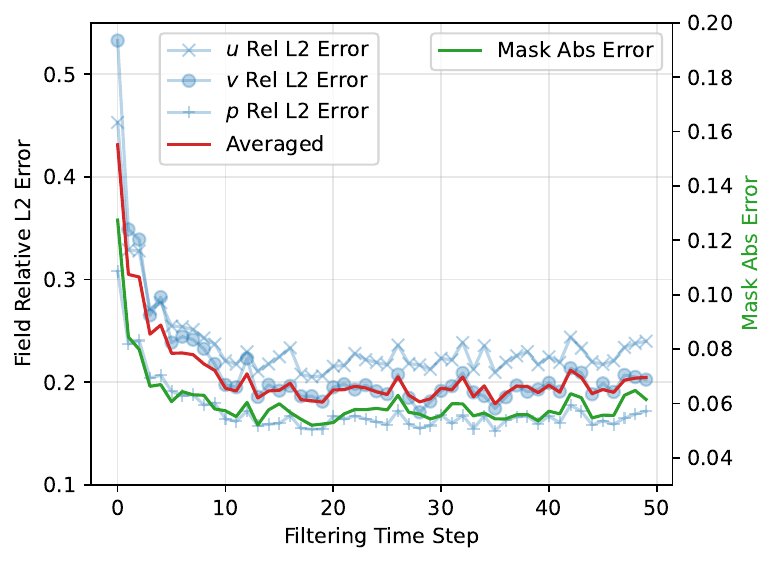}
    \caption{Temporal evolution of the state estimation error over filtering time steps for the 2D Navier-Stokes system.}
    \label{fig:ns_ttt_error_vs_time}
\end{figure}

%%%%%%%%%%%%%%%%%%%%%%%%%%%%%%%%%%%%%%%%%%%%%%%%%%%%%%%%%%%%

\newpage

\clearpage
\section*{NeurIPS Paper Checklist}

\begin{enumerate}

\item {\bf Claims}
    \item[] Question: Do the main claims made in the abstract and introduction accurately reflect the paper's contributions and scope?
    \item[] Answer: \answerYes{}
    \item[] Justification: The abstract and Section~\ref{sec:introduction} state three concrete contributions---weights-as-belief-state perspective, BFF with theoretical guarantees, and empirical validation on 1D/2D PDE and Tokamak benchmarks---each of which is supported by the corresponding methodological (Sec.~\ref{sec:ttt}--\ref{sec:training}) and experimental (Sec.~\ref{sec:experiments}) sections.
    \item[] Guidelines:
    \begin{itemize}
        \item The answer \answerNA{} means that the abstract and introduction do not include the claims made in the paper.
        \item The abstract and/or introduction should clearly state the claims made, including the contributions made in the paper and important assumptions and limitations. A \answerNo{} or \answerNA{} answer to this question will not be perceived well by the reviewers. 
        \item The claims made should match theoretical and experimental results, and reflect how much the results can be expected to generalize to other settings. 
        \item It is fine to include aspirational goals as motivation as long as it is clear that these goals are not attained by the paper. 
    \end{itemize}

\item {\bf Limitations}
    \item[] Question: Does the paper discuss the limitations of the work performed by the authors?
    \item[] Answer: \answerYes{}
    \item[] Justification: A dedicated ``Limitations'' paragraph in the Conclusion enumerates five concrete limitations (finite-capacity gap of the linear $f_\phi,g_\phi$, the deterministic-transition assumption in the proof, 3D scalability, cross-regime/Reynolds-number generalization, and second-order training cost), with further discussion in Appendix~\ref{app:related_work}.
    \item[] Guidelines:
    \begin{itemize}
        \item The answer \answerNA{} means that the paper has no limitation while the answer \answerNo{} means that the paper has limitations, but those are not discussed in the paper. 
        \item The authors are encouraged to create a separate ``Limitations'' section in their paper.
        \item The paper should point out any strong assumptions and how robust the results are to violations of these assumptions (e.g., independence assumptions, noiseless settings, model well-specification, asymptotic approximations only holding locally). The authors should reflect on how these assumptions might be violated in practice and what the implications would be.
        \item The authors should reflect on the scope of the claims made, e.g., if the approach was only tested on a few datasets or with a few runs. In general, empirical results often depend on implicit assumptions, which should be articulated.
        \item The authors should reflect on the factors that influence the performance of the approach. For example, a facial recognition algorithm may perform poorly when image resolution is low or images are taken in low lighting. Or a speech-to-text system might not be used reliably to provide closed captions for online lectures because it fails to handle technical jargon.
        \item The authors should discuss the computational efficiency of the proposed algorithms and how they scale with dataset size.
        \item If applicable, the authors should discuss possible limitations of their approach to address problems of privacy and fairness.
        \item While the authors might fear that complete honesty about limitations might be used by reviewers as grounds for rejection, a worse outcome might be that reviewers discover limitations that aren't acknowledged in the paper. The authors should use their best judgment and recognize that individual actions in favor of transparency play an important role in developing norms that preserve the integrity of the community. Reviewers will be specifically instructed to not penalize honesty concerning limitations.
    \end{itemize}

\item {\bf Theory assumptions and proofs}
    \item[] Question: For each theoretical result, does the paper provide the full set of assumptions and a complete (and correct) proof?
    \item[] Answer: \answerYes{}
    \item[] Justification: All theoretical results--- Proposition~\ref{prop:sec23-emp-filt}, Proposition~\ref{prop:sec23-latent-amortization}, and Theorem~\ref{theorem:ttt_outer_loss}---are stated with explicit assumptions (differentiability, Lipschitz regularity, deterministic transition kernel) and proved in Appendices~\ref{app:two-channel} and~\ref{app:loss_proof}.
    \item[] Guidelines:
    \begin{itemize}
        \item The answer \answerNA{} means that the paper does not include theoretical results. 
        \item All the theorems, formulas, and proofs in the paper should be numbered and cross-referenced.
        \item All assumptions should be clearly stated or referenced in the statement of any theorems.
        \item The proofs can either appear in the main paper or the supplemental material, but if they appear in the supplemental material, the authors are encouraged to provide a short proof sketch to provide intuition. 
        \item Inversely, any informal proof provided in the core of the paper should be complemented by formal proofs provided in appendix or supplemental material.
        \item Theorems and Lemmas that the proof relies upon should be properly referenced. 
    \end{itemize}

    \item {\bf Experimental result reproducibility}
    \item[] Question: Does the paper fully disclose all the information needed to reproduce the main experimental results of the paper to the extent that it affects the main claims and/or conclusions of the paper (regardless of whether the code and data are provided or not)?
    \item[] Answer: \answerYes{}
    \item[] Justification: Section~\ref{sec:results} and Appendices~\ref{app:dataset}--\ref{app:baselines} fully specify the BFF backbone (DiT), the TTT block design, the surrogate-loss heads, the training/inference procedures (Algorithms~\ref{alg:ttt_train}--\ref{alg:ttt_test}), the data-generation protocols, sensor placements, noise levels, baseline implementations, and ODE step counts; an anonymized code repository is also provided.
    \item[] Guidelines:
    \begin{itemize}
        \item The answer \answerNA{} means that the paper does not include experiments.
        \item If the paper includes experiments, a \answerNo{} answer to this question will not be perceived well by the reviewers: Making the paper reproducible is important, regardless of whether the code and data are provided or not.
        \item If the contribution is a dataset and\slash or model, the authors should describe the steps taken to make their results reproducible or verifiable. 
        \item Depending on the contribution, reproducibility can be accomplished in various ways. For example, if the contribution is a novel architecture, describing the architecture fully might suffice, or if the contribution is a specific model and empirical evaluation, it may be necessary to either make it possible for others to replicate the model with the same dataset, or provide access to the model. In general. releasing code and data is often one good way to accomplish this, but reproducibility can also be provided via detailed instructions for how to replicate the results, access to a hosted model (e.g., in the case of a large language model), releasing of a model checkpoint, or other means that are appropriate to the research performed.
        \item While NeurIPS does not require releasing code, the conference does require all submissions to provide some reasonable avenue for reproducibility, which may depend on the nature of the contribution. For example
        \begin{enumerate}
            \item If the contribution is primarily a new algorithm, the paper should make it clear how to reproduce that algorithm.
            \item If the contribution is primarily a new model architecture, the paper should describe the architecture clearly and fully.
            \item If the contribution is a new model (e.g., a large language model), then there should either be a way to access this model for reproducing the results or a way to reproduce the model (e.g., with an open-source dataset or instructions for how to construct the dataset).
            \item We recognize that reproducibility may be tricky in some cases, in which case authors are welcome to describe the particular way they provide for reproducibility. In the case of closed-source models, it may be that access to the model is limited in some way (e.g., to registered users), but it should be possible for other researchers to have some path to reproducing or verifying the results.
        \end{enumerate}
    \end{itemize}

\item {\bf Open access to data and code}
    \item[] Question: Does the paper provide open access to the data and code, with sufficient instructions to faithfully reproduce the main experimental results, as described in supplemental material?
    \item[] Answer: \answerYes{}
    \item[] Justification: An anonymized code repository (\url{https://anonymous.4open.science/r/BeliefFlow-81D4}) is released with training/inference scripts, data-generation code for all PDE benchmarks, configuration files, and reproduction instructions; the PDE datasets are simulator-generated and fully reproducible from the documented solver settings in Appendix~\ref{app:dataset}.
    \item[] Guidelines:
    \begin{itemize}
        \item The answer \answerNA{} means that paper does not include experiments requiring code.
        \item Please see the NeurIPS code and data submission guidelines (\url{https://neurips.cc/public/guides/CodeSubmissionPolicy}) for more details.
        \item While we encourage the release of code and data, we understand that this might not be possible, so \answerNo{} is an acceptable answer. Papers cannot be rejected simply for not including code, unless this is central to the contribution (e.g., for a new open-source benchmark).
        \item The instructions should contain the exact command and environment needed to run to reproduce the results. See the NeurIPS code and data submission guidelines (\url{https://neurips.cc/public/guides/CodeSubmissionPolicy}) for more details.
        \item The authors should provide instructions on data access and preparation, including how to access the raw data, preprocessed data, intermediate data, and generated data, etc.
        \item The authors should provide scripts to reproduce all experimental results for the new proposed method and baselines. If only a subset of experiments are reproducible, they should state which ones are omitted from the script and why.
        \item At submission time, to preserve anonymity, the authors should release anonymized versions (if applicable).
        \item Providing as much information as possible in supplemental material (appended to the paper) is recommended, but including URLs to data and code is permitted.
    \end{itemize}

\item {\bf Experimental setting/details}
    \item[] Question: Does the paper specify all the training and test details (e.g., data splits, hyperparameters, how they were chosen, type of optimizer) necessary to understand the results?
    \item[] Answer: \answerYes{}
    \item[] Justification: Sensor configurations, noise levels, train/val/test splits, optimizer/learning-rate/batch-size choices, ODE-sampling steps, and per-experiment hyperparameters are reported in Appendix~\ref{app:implementation} and Appendices~\ref{app:dataset}--\ref{app:baselines}, with the remaining details available in the released code.
    \item[] Guidelines:
    \begin{itemize}
        \item The answer \answerNA{} means that the paper does not include experiments.
        \item The experimental setting should be presented in the core of the paper to a level of detail that is necessary to appreciate the results and make sense of them.
        \item The full details can be provided either with the code, in appendix, or as supplemental material.
    \end{itemize}

\item {\bf Experiment statistical significance}
    \item[] Question: Does the paper report error bars suitably and correctly defined or other appropriate information about the statistical significance of the experiments?
    \item[] Answer: \answerYes{}
    \item[] Justification: While empirical error bars across random seeds are omitted due to the prohibitively high computational cost of the coupled physics solver and other Bayesian filters such as S$^3$GM and SDA, we explicitly report "other appropriate information" regarding statistical significance. Specifically, the inherent Uncertainty Quantification (UQ) of our Bayesian filtering framework provides the posterior variance of the state estimates, which serves as the rigorous and physically meaningful measure of uncertainty and statistical confidence in our SciML setup.
    \item[] Guidelines:
    \begin{itemize}
        \item The answer \answerNA{} means that the paper does not include experiments.
        \item The authors should answer \answerYes{} if the results are accompanied by error bars, confidence intervals, or statistical significance tests, at least for the experiments that support the main claims of the paper.
        \item The factors of variability that the error bars are capturing should be clearly stated (for example, train/test split, initialization, random drawing of some parameter, or overall run with given experimental conditions).
        \item The method for calculating the error bars should be explained (closed form formula, call to a library function, bootstrap, etc.)
        \item The assumptions made should be given (e.g., Normally distributed errors).
        \item It should be clear whether the error bar is the standard deviation or the standard error of the mean.
        \item It is OK to report 1-sigma error bars, but one should state it. The authors should preferably report a 2-sigma error bar than state that they have a 96\% CI, if the hypothesis of Normality of errors is not verified.
        \item For asymmetric distributions, the authors should be careful not to show in tables or figures symmetric error bars that would yield results that are out of range (e.g., negative error rates).
        \item If error bars are reported in tables or plots, the authors should explain in the text how they were calculated and reference the corresponding figures or tables in the text.
    \end{itemize}

\item {\bf Experiments compute resources}
    \item[] Question: For each experiment, does the paper provide sufficient information on the computer resources (type of compute workers, memory, time of execution) needed to reproduce the experiments?
    \item[] Answer: \answerYes{}
    \item[] Justification: All wall-clock measurements in Table~\ref{tab:efficiency} and Figures~\ref{fig:training_time_benchmark}--\ref{fig:inference_time_benchmark} are obtained on a single NVIDIA H800 GPU; per-step training and inference costs (and the $\sim\!30\%$ overhead from second-order TTT gradients) are reported in Appendix~\ref{app:full_results}.
    \item[] Guidelines:
    \begin{itemize}
        \item The answer \answerNA{} means that the paper does not include experiments.
        \item The paper should indicate the type of compute workers CPU or GPU, internal cluster, or cloud provider, including relevant memory and storage.
        \item The paper should provide the amount of compute required for each of the individual experimental runs as well as estimate the total compute. 
        \item The paper should disclose whether the full research project required more compute than the experiments reported in the paper (e.g., preliminary or failed experiments that didn't make it into the paper). 
    \end{itemize}
    
\item {\bf Code of ethics}
    \item[] Question: Does the research conducted in the paper conform, in every respect, with the NeurIPS Code of Ethics \url{https://neurips.cc/public/EthicsGuidelines}?
    \item[] Answer: \answerYes{}
    \item[] Justification: The work uses only simulator-generated PDE data and a Tokamak transport-equilibrium benchmark with no human subjects, no scraped or sensitive data, and no foreseeable dual-use risk; the research conforms to the NeurIPS Code of Ethics.
    \item[] Guidelines:
    \begin{itemize}
        \item The answer \answerNA{} means that the authors have not reviewed the NeurIPS Code of Ethics.
        \item If the authors answer \answerNo, they should explain the special circumstances that require a deviation from the Code of Ethics.
        \item The authors should make sure to preserve anonymity (e.g., if there is a special consideration due to laws or regulations in their jurisdiction).
    \end{itemize}

\item {\bf Broader impacts}
    \item[] Question: Does the paper discuss both potential positive societal impacts and negative societal impacts of the work performed?
    \item[] Answer: \answerNA{}
    \item[] Justification: The contribution targets state estimation in physical systems (fluid dynamics, plasma) and is several steps removed from any direct societal-impact deployment; we identify no concrete path from this method to negative social impact.
    \item[] Guidelines:
    \begin{itemize}
        \item The answer \answerNA{} means that there is no societal impact of the work performed.
        \item If the authors answer \answerNA{} or \answerNo, they should explain why their work has no societal impact or why the paper does not address societal impact.
        \item Examples of negative societal impacts include potential malicious or unintended uses (e.g., disinformation, generating fake profiles, surveillance), fairness considerations (e.g., deployment of technologies that could make decisions that unfairly impact specific groups), privacy considerations, and security considerations.
        \item The conference expects that many papers will be foundational research and not tied to particular applications, let alone deployments. However, if there is a direct path to any negative applications, the authors should point it out. For example, it is legitimate to point out that an improvement in the quality of generative models could be used to generate Deepfakes for disinformation. On the other hand, it is not needed to point out that a generic algorithm for optimizing neural networks could enable people to train models that generate Deepfakes faster.
        \item The authors should consider possible harms that could arise when the technology is being used as intended and functioning correctly, harms that could arise when the technology is being used as intended but gives incorrect results, and harms following from (intentional or unintentional) misuse of the technology.
        \item If there are negative societal impacts, the authors could also discuss possible mitigation strategies (e.g., gated release of models, providing defenses in addition to attacks, mechanisms for monitoring misuse, mechanisms to monitor how a system learns from feedback over time, improving the efficiency and accessibility of ML).
    \end{itemize}
    
\item {\bf Safeguards}
    \item[] Question: Does the paper describe safeguards that have been put in place for responsible release of data or models that have a high risk for misuse (e.g., pre-trained language models, image generators, or scraped datasets)?
    \item[] Answer: \answerNA{}
    \item[] Justification: We do not release pretrained generative models for natural images, language, or other content with misuse potential; the released checkpoints model PDE belief states on scientific filtering benchmarks and pose no high-risk dual-use concern.
    \item[] Guidelines:
    \begin{itemize}
        \item The answer \answerNA{} means that the paper poses no such risks.
        \item Released models that have a high risk for misuse or dual-use should be released with necessary safeguards to allow for controlled use of the model, for example by requiring that users adhere to usage guidelines or restrictions to access the model or implementing safety filters. 
        \item Datasets that have been scraped from the Internet could pose safety risks. The authors should describe how they avoided releasing unsafe images.
        \item We recognize that providing effective safeguards is challenging, and many papers do not require this, but we encourage authors to take this into account and make a best faith effort.
    \end{itemize}

\item {\bf Licenses for existing assets}
    \item[] Question: Are the creators or original owners of assets (e.g., code, data, models), used in the paper, properly credited and are the license and terms of use explicitly mentioned and properly respected?
    \item[] Answer: \answerYes{}
    \item[] Justification: All third-party assets we build on---DiT~\citep{peebles_scalable_2023}, FNO~\citep{li_fourier_2021}, SDA~\citep{rozet_score-based_2023}, S$^3$GM~\citep{li2024learning}, the LilyPad solver~\citep{weymouth2015lilypadrealtimeinteractive}, and classical Kalman/EnKF references---are properly cited at the points of use; the PDE datasets are generated by our own solvers under the parameters specified in Appendix~\ref{app:dataset}.
    \item[] Guidelines:
    \begin{itemize}
        \item The answer \answerNA{} means that the paper does not use existing assets.
        \item The authors should cite the original paper that produced the code package or dataset.
        \item The authors should state which version of the asset is used and, if possible, include a URL.
        \item The name of the license (e.g., CC-BY 4.0) should be included for each asset.
        \item For scraped data from a particular source (e.g., website), the copyright and terms of service of that source should be provided.
        \item If assets are released, the license, copyright information, and terms of use in the package should be provided. For popular datasets, \url{paperswithcode.com/datasets} has curated licenses for some datasets. Their licensing guide can help determine the license of a dataset.
        \item For existing datasets that are re-packaged, both the original license and the license of the derived asset (if it has changed) should be provided.
        \item If this information is not available online, the authors are encouraged to reach out to the asset's creators.
    \end{itemize}

\item {\bf New assets}
    \item[] Question: Are new assets introduced in the paper well documented and is the documentation provided alongside the assets?
    \item[] Answer: \answerYes{}
    \item[] Justification: The released anonymized code repository ships with a README, training/inference instructions, configuration files for every benchmark, dataset-generation scripts, and a license file documenting allowed use.
    \item[] Guidelines:
    \begin{itemize}
        \item The answer \answerNA{} means that the paper does not release new assets.
        \item Researchers should communicate the details of the dataset\slash code\slash model as part of their submissions via structured templates. This includes details about training, license, limitations, etc. 
        \item The paper should discuss whether and how consent was obtained from people whose asset is used.
        \item At submission time, remember to anonymize your assets (if applicable). You can either create an anonymized URL or include an anonymized zip file.
    \end{itemize}

\item {\bf Crowdsourcing and research with human subjects}
    \item[] Question: For crowdsourcing experiments and research with human subjects, does the paper include the full text of instructions given to participants and screenshots, if applicable, as well as details about compensation (if any)?
    \item[] Answer: \answerNA{}
    \item[] Justification: The paper does not involve crowdsourcing or research with human subjects; all data come from numerical PDE solvers.
    \item[] Guidelines:
    \begin{itemize}
        \item The answer \answerNA{} means that the paper does not involve crowdsourcing nor research with human subjects.
        \item Including this information in the supplemental material is fine, but if the main contribution of the paper involves human subjects, then as much detail as possible should be included in the main paper. 
        \item According to the NeurIPS Code of Ethics, workers involved in data collection, curation, or other labor should be paid at least the minimum wage in the country of the data collector. 
    \end{itemize}

\item {\bf Institutional review board (IRB) approvals or equivalent for research with human subjects}
    \item[] Question: Does the paper describe potential risks incurred by study participants, whether such risks were disclosed to the subjects, and whether Institutional Review Board (IRB) approvals (or an equivalent approval/review based on the requirements of your country or institution) were obtained?
    \item[] Answer: \answerNA{}
    \item[] Justification: No human-subject research is conducted, so IRB approval is not applicable.
    \item[] Guidelines:
    \begin{itemize}
        \item The answer \answerNA{} means that the paper does not involve crowdsourcing nor research with human subjects.
        \item Depending on the country in which research is conducted, IRB approval (or equivalent) may be required for any human subjects research. If you obtained IRB approval, you should clearly state this in the paper. 
        \item We recognize that the procedures for this may vary significantly between institutions and locations, and we expect authors to adhere to the NeurIPS Code of Ethics and the guidelines for their institution. 
        \item For initial submissions, do not include any information that would break anonymity (if applicable), such as the institution conducting the review.
    \end{itemize}

\item {\bf Declaration of LLM usage}
    \item[] Question: Does the paper describe the usage of LLMs if it is an important, original, or non-standard component of the core methods in this research? Note that if the LLM is used only for writing, editing, or formatting purposes and does \emph{not} impact the core methodology, scientific rigor, or originality of the research, declaration is not required.
    %this research?
    \item[] Answer: \answerNA{}
    \item[] Justification: LLMs were not used as a component of the core methodology; any LLM use was limited to writing/editing/formatting and therefore does not require declaration under the NeurIPS LLM policy.
    \item[] Guidelines:
    \begin{itemize}
        \item The answer \answerNA{} means that the core method development in this research does not involve LLMs as any important, original, or non-standard components.
        \item Please refer to our LLM policy in the NeurIPS handbook for what should or should not be described.
    \end{itemize}

\end{enumerate}

\end{document}